\newtheorem{thm}{Theorem}[section]
\newtheorem{lem}[thm]{Lemma}
\newtheorem{assum}[thm]{Assumption}
\newtheorem{proposition}[thm]{Proposition}
\newtheorem{definition}[thm]{Definition}
\newcommand\reallywidehat[1]{%
\savestack{\tmpbox}{\stretchto{%
  \scaleto{%
    \scalerel*[\widthof{\ensuremath{#1}}]{\kern-.6pt\bigwedge\kern-.6pt}%
    {\rule[-\textheight/2]{1ex}{\textheight}}
  }{\textheight}%
}{0.5ex}}%
\stackon[1pt]{#1}{\tmpbox}%
}
\newcommand*{\rom}[1]{\expandafter\@slowromancap\romannumeral #1@}
\newcommand{\abs}[1]{\left|#1\right|}
\DeclareMathOperator*{\argmax}{arg\,max}
\DeclareMathOperator*{\argmin}{arg\,min}
\newcommand {\pr} {\mathbb{P}}
\newcommand{\calA}{{\cal A}}
\newcommand{\calD}{{\cal D}}
\newcommand{\calL}{{\cal L}}
\newcommand{\calM}{{\cal M}}
\newcommand{\calN}{{\cal N}}
\newcommand{\calP}{{\cal P}}
\newcommand{\calV}{{\cal V}}
\newcommand{\calW}{{\cal W}}
\newcommand{\calX}{{\cal X}}
\newcommand{\be}{\begin{equation}}
\newcommand{\ee}{\end{equation}}
\newcommand{\beqna}{\begin{eqnarray}}
\newcommand{\eeqna}{\end{eqnarray}}
\newcommand{\p}[1]{\left(#1\right)}
\newcommand{\pp}[1]{\left[#1\right]}
\newcommand{\ppp}[1]{\left\{#1\right\}}
\newcommand{\norm}[1]{\left\|#1\right\|}
\newcommand{\s}[1]{\mathsf{#1}}
\numberwithin{equation}{section}
\renewcommand{\paragraph}{%
  \@startsection{paragraph}{4}%
  {\z@}{1.25ex \@plus 1ex \@minus .2ex}{-1em}%
  {\normalfont\normalsize\bfseries}%
}
\begin{document}

\title{Confirmation Bias in Gaussian Mixture Models}

\author[1]{Amnon Balanov \thanks{Corresponding author: \url{amnonba15@gmail.com}}}
\author[1]{Tamir Bendory}
\author[1]{Wasim Huleihel}
\affil[1]{School of Electrical and Computer Engineering, Tel Aviv University}

\maketitle

\begin{abstract}

Confirmation bias, the tendency to interpret information in a way that aligns with one's preconceptions, can profoundly impact scientific research, leading to conclusions that reflect the researcher's hypotheses even when the observational data do not support them. 
This issue is especially critical in scientific fields involving highly noisy observations, such as cryo-electron microscopy.

This study investigates confirmation bias in Gaussian mixture models.
We consider the following experiment: A team of scientists assumes they are analyzing data drawn from a Gaussian mixture model with known signals (hypotheses) as centroids. However, in reality, the observations consist entirely of noise without any informative structure. The researchers use a single iteration of the $K$-means or expectation-maximization algorithms, two popular algorithms to estimate the centroids. 
Despite the observations being pure noise, we show that these algorithms yield biased estimates that resemble the initial hypotheses, contradicting the unbiased expectation that averaging these noise observations would converge to zero.
Namely, the algorithms generate estimates that mirror the postulated model, although the hypotheses (the presumed centroids of the Gaussian mixture) are not evident in the observations.

Specifically, among other results, we prove a positive correlation between the estimates produced by the algorithms and the corresponding hypotheses. We also derive explicit closed-form expressions of the estimates for a finite and infinite number of hypotheses. Furthermore, we provide theoretical and empirical results for multi-iteration $K$-means and expectation-maximization, showing that the bias is persistent even after hundreds of iterations of these algorithms.
This study underscores the risks of confirmation bias in low signal-to-noise environments, provides insights into potential pitfalls in scientific methodologies, and highlights the importance of prudent data interpretation.

\end{abstract}

\section{Introduction}
Confirmation bias refers to the cognitive tendency to interpret information that aligns with our beliefs or presumptions, disregarding evidence that contradicts these beliefs \cite{klayman1995varieties, kassin2013forensic}. This bias can distort perceptions and lead to flawed decision-making.
Examples of confirmation bias are common in both everyday life and scientific practice. In medical diagnosis, for example, a doctor might diagnose a patient based on an initial impression and subsequently give more weight to symptoms that confirm this diagnosis while overlooking contradictory evidence~\cite{khadilkar2020bias}. In legal settings, confirmation bias might influence how evidence is interpreted, with investigators or jurors giving excessive credibility to information that supports their initial beliefs about a case, leading to potential miscarriages of justice~\cite{rassin2010let,rassin2020context}.

\paragraph{Confirmation bias in science.}
In scientific research, confirmation bias can significantly influence the experimental design and result interpretation, potentially leading to conclusions that reflect the researcher’s expectations rather than objective findings. This bias can cause researchers to unconsciously favor data that supports their hypotheses, overlooking or dismissing contradictory evidence. Recognizing and addressing confirmation bias is crucial to ensure that scientific research remains rigorous, objective, and evidence-based~\cite{mynatt1977confirmation}. 

\paragraph{Motivating example: Einstein from noise.}  The \textit{Einstein from noise}  experiment is a prototype example of confirmation bias in science. 
Consider a scenario where scientists acquire a set of images (observational data) and genuinely believe their observations contain noisy, shifted copies of a known template image (say, an image of Einstein). However, in reality, their data consists of pure noise with no image present. To estimate the (absent) image, the scientists follow the ubiquitous procedure of template matching: align each observation by cross-correlating it with the template (e.g., Einstein's image) and then average the aligned observations. Remarkably, empirical evidence has shown, multiple times, that the reconstructed structure from this process is structurally similar to the template, even when all the observations are pure noise~\cite{shatsky2009method, sigworth1998maximum}. 
This example of generating a structured image from pure noise images garnered significant attention as it was at the heart of a crucial scientific debate about the structure of an HIV molecule~\cite{mao2013molecular,henderson2013avoiding,van2013finding,subramaniam2013structure,mao2013reply}; see~\cite{balanov2024einstein} for a detailed description and statistical analysis. 
This debate serves as one of the prime motivations for this work.

\paragraph{Conformation bias in statistical models.} A statistical model that characterizes confirmation bias should incorporate three main components: the true statistics of the observations (e.g., independent and identically distributed samples drawn from an isotropic Gaussian distribution), the postulated statistical model of the observations (i.e., the hypotheses, for example, shifted, noisy copies of Einstein's image), and an estimation process (algorithm) based on the observations and the initial hypotheses. 
Confirmation bias arises when the output of the estimation process is positively correlated with the postulated statistical model of the observations, although the observations do not support this correlation. 
In this work, we focus on the ubiquitous Gaussian mixture models (GMMs) as the postulated model and a single iteration of the widely used $K$-means and expectation-maximization (EM) algorithms as estimation procedures. We first introduce these ingredients and then describe the conformation bias experiment.

\paragraph{Gaussian mixture models.} 
The primary aim of this work is to statistically model and analyze confirmation bias. 
As a focal point of our discussion, we turn our attention to GMMs---ubiquitous statistical models when the data is thought to arise from multiple underlying distributions. There is a vast literature on Gaussian mixtures, see, e.g., \cite{Lindsay,HeinrichKahn,Sylvia06}, and references therein.
GMMs have broad applicability across various domains in statistics, signal processing, and machine learning, including clustering \cite{yang2012robust, maugis2009variable}, density estimation \cite{villani2009regression, glodek2013ensemble}, image segmentation \cite{gupta1998gaussian, nguyen2012fast}, anomaly detection \cite{li2016anomaly}, and conformation variability analysis of proteins~\cite{chen2021deep}, 
 to name just a few.   
In this paper, we would like to address the following question:
\begin{quote}
   \emph{Do standard GMM algorithms exhibit a bias towards postulated hypotheses, even when the empirical observations do not support them?}
\end{quote}

\paragraph{Algorithms.}
Two highly popular methods to estimate the means in GMMs are the $K$-means algorithm (a.k.a  Lloyd's algorithm) and the expectation-maximization (EM) algorithm~\cite{dempster1977maximum}. 
Both algorithms iteratively refine initial hypotheses of the means of the GMM. 
These algorithms are known to be sensitive to the choice of the initial hypotheses, which are typically either randomly drawn or determined based on prior knowledge, as will be discussed further below. This work, as well as the Einstein from noise phenomenon~\cite{balanov2024einstein}, emphasizes the potential pitfalls of choosing inappropriate initial templates (hypotheses).  

$K$-means iteratively clusters observations and identifies their centers, which correspond to the means in a GMM \cite{Lloyd82}. The EM algorithm offers a more general framework for optimizing the likelihood function of a statistical model. When estimating the means of a GMM, the EM algorithm functions as a soft version of $K$-means. Rather than strictly grouping observations into clusters, it assigns probabilities to all observations across all clusters and then updates the estimates using a weighted average~\cite{dempster1977maximum, Redner82}.
The statistical properties of the EM algorithm have been extensively studied across various contexts and from multiple perspectives; see, for instance, \cite{Jordan01,Hsu16,pmlr-v65-daskalakis17b} and references therein.
In this work, we primarily analyze theoretical results for a single iteration of $K$-means (the \emph{hard assignment algorithm}), and a single iteration of the EM algorithm (the \emph{soft assignment algorithm}), applied to GMMs~\cite{kearns1998information}. We then extend our analysis to the multi-iteration case. The details of both algorithms, along with the probabilistic model, are presented in Section~\ref{sec:problemFormulation}.

\begin{figure}[h]
	\centering
	\includegraphics[width=0.95\textwidth]{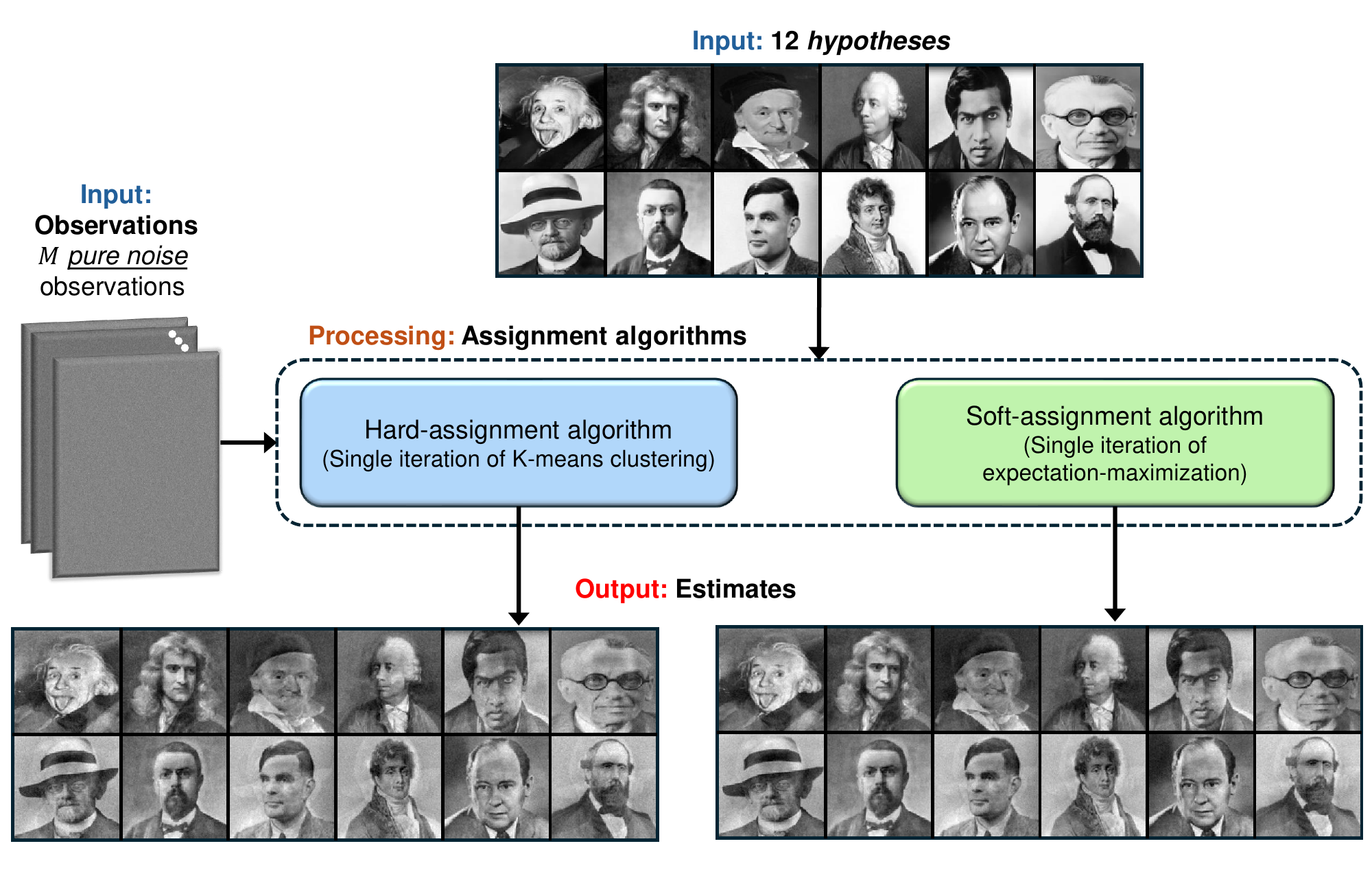}
	\caption{\textbf{Confirmation Bias in GMMs.} 
 A group of scientists believes they have collected multiple low signal-to-noise ratio (SNR) observations of images of notable mathematicians. In reality, however, the data is purely random noise. 
 Based on a GMM and the presumed hypotheses, they estimate the Gaussian centroids using the hard-assignment procedure (a single iteration of $K$-means) and the soft-assignment algorithm (a single iteration of EM). The resulting estimates structurally resemble the 12 hypotheses, although the observations do not support these hypotheses. This experiment was conducted with $M = 2 \times 10^6$ observations and images of size of $100 \times 100$ pixels.} 
	\label{fig:0}
\end{figure}

\paragraph{Statistical properties of $K$-means and EM.} 
The statistical properties of the $K$-means and EM algorithms have received considerable attention over the past decade, particularly in the context of two-component GMMs and well-specified settings where the population likelihood is locally strongly concave around its maximum (see, for example, \cite{xu2016global, daskalakis2017ten, wu2021randomly, dwivedi2020singularity, kuncheva2006evaluation}). In general, since the likelihood function is not necessarily strongly concave, both $K$-means and EM may converge to a local optimum rather than the global maximum of the likelihood function. Consequently, their performance is sensitive to initialization~\cite{celebi2013comparative, celebi2011improving, bubeck2012initialization}. However, only a few initialization methods come with theoretical accuracy guarantees, such as $K$-means++ \cite{arthur2006k} and local search approaches \cite{kanungo2002local, gupta2017local, lattanzi2019better}. As a result, a common practice in many applications is to initialize $K$-means and EM with a point that closely aligns with the researcher's prior beliefs. We discuss this further in Section \ref{sec:outlook}.

\paragraph{Confirmation bias in GMMs.}
The following experiment illustrates confirmation bias in GMMs. A team of scientists assumes they have collected multiple observations of known templates (hypotheses) swamped by high noise levels; in Figure~\ref{fig:0}, the templates correspond to images of 12 notable mathematicians. The scientists hypothesize that these observations were generated by a Gaussian mixture model, with the 12 mathematicians (hypotheses) serving as the centroids of the Gaussian mixtures.
However, the observations are purely random noise with no underlying structure. This discrepancy between the actual statistics of the observations and the assumed statistical model forms the core of the confirmation bias phenomenon.

To estimate the Gaussian centroids from these observations, the scientists apply both the hard-assignment procedure (a single iteration of $K$-means) and the soft-assignment procedure (a single iteration of EM). The results of such an experiment are shown in Figure~\ref{fig:0}.
Remarkably, the outputs of both assignment algorithms closely resemble the 12 hypotheses, reflecting the scientists' initial assumptions. This outcome contrasts with the unbiased expectation that averaging pure noise images would converge to zero. The objective of this paper is to explore this phenomenon and to characterize its statistical properties.

\paragraph{Main results.}
This study characterizes the relationship between the initial templates and their corresponding hard and soft assignment estimators. The main results are presented in Section~\ref{sec:mainResults} and proved in the appendices.
We begin by showing in Theorem \ref{thm:hardAssignmentPositiveCorrelation} that there is a positive correlation between each assignment estimator and its corresponding template, which explains the structural similarity between the reconstructed structures and their associated templates. Additionally, we demonstrate an inverse relationship between the cross-correlations among the templates and the correlations between the assignment estimators and their respective templates (Proposition \ref{thm:meanInverseDependency}). This inverse relationship supports the intuitive expectation that as the hypotheses become more distinct, the confirmation bias tends to increase.
We then provide analytical results for different numbers of hypotheses. In the case of two hypotheses, Theorems \ref{thm:hardAssignmentTwoHypoteses} and \ref{thm:softAssignmentTwoTemplates} provide explicit expressions for the hard-assignment and soft-assignment estimators, respectively. For a finite number of hypotheses, Theorem \ref{thm:hardAssignmentLinearCombination} shows that the assignment estimator is a linear combination of the templates. Finally, Theorems \ref{thm:hardAssignmentAsymptoticLandAsymptoticD} and \ref{thm:softAssignmentAsymptoticL} demonstrate that as the number of hypotheses and the dimension of the signals increase, the assignment estimator converges to the corresponding template. In Section \ref{sec:multiIteration}, we extend the theoretical framework and simulation results to the multi-iteration regime for the hard-assignment and soft-assignment methods. Specifically, we demonstrate that confirmation bias persists even after hundreds of iterations of $K$-means and EM.
Section~\ref{sec:outlook} concludes the paper and delineates several potential directions for future research.

\section{Problem formulation} \label{sec:problemFormulation}
This section introduces the probabilistic framework and presents our main mathematical objectives. Although the empirical demonstrations in this work are shown in the context of images, we will formulate and analyze the problem using 1-D signals for simplicity of notation. Since our results and proofs rely on the cross-correlations between the templates, the extension of the results to higher dimensions is straightforward.
We begin by outlining the general Gaussian mixture model. Following this, we introduce the hard-assignment and soft-assignment methods used to estimate the means of the mixture model components and discuss the relationship between these methods.

\subsection{Gaussian mixture models}\label{subsec:GMMandour}

A GMM with $L$ components in $d$ dimensions can be represented by $\boldsymbol{w} = (w_0,w_1,\ldots,w_{L-1})$, $\boldsymbol{\mu} = (\mu_0,\mu_1,\ldots,\mu_{L-1})$, and $\boldsymbol{\Sigma} = (\Sigma_0,\Sigma_1,\ldots,\Sigma_{L-1})$, where $w_\ell$ is a mixing weight such that $w_\ell\geq0$ and $\sum_{\ell=0}^{L-1}w_\ell=1$, $\mu_\ell\in\mathbb{R}^d$ is the $\ell$th component mean, and $\Sigma_\ell$ is the $\ell$th component covariance matrix. To draw a random instance from this GMM, one first samples an index $\ell\in[L]$, with probability $w_\ell$, and then returns a random sample from the Gaussian distribution $\calN(\mu_\ell,\Sigma_\ell)$. Stated differently, $M$ GMM samples are generated as follows,
\begin{align}
    y_0,y_1,\ldots,y_{M-1}\stackrel{\s{i.i.d.}}{\sim}\sum_{\ell=0}^{L-1}w_\ell\cdot\calN(\mu_\ell,\Sigma_\ell).
\end{align}
We denote the probability density function of the $\ell$th component by $\phi_d(\mu_\ell,\Sigma_\ell)$, and the GMM density by,
\begin{align}
    \calM_{L,d}(\boldsymbol{w},\boldsymbol{\mu},\boldsymbol{\Sigma})\triangleq\sum_{\ell=0}^{L-1}w_\ell\cdot\phi_d(\mu_\ell,\Sigma_\ell).\label{eqn:GMMpdf}
\end{align}
 
We explore the effects and potential biases that arise from erroneous assumptions in GMMs. To wit, we consider the following experiment. Let the $M$ underlying observations $(y_0,y_1,\ldots,y_{M-1})$ be distributed as  isotropic Gaussian random vectors with variance $\sigma^2$, i.e., $y_i\sim \mathcal{N}(0,\sigma^2 I_{d \times d})$. In terms of GMMs, this can also be written equivalently as,
    \begin{align} \label{eqn:calPModel}
        (\s{Underlying}\;\s{statistics})    \quad y_0,y_1,\ldots,y_{M-1}\stackrel{\s{i.i.d.}}{\sim}\mathcal{M}_{L,d}(\mathbf{1}/L,\mathbf{0},\sigma^2I_{d \times d}),
    \end{align}
    where $\mathbf{0}$ and $\mathbf{1}$ are all-zeros and all-ones vectors, respectively. A researcher believes, on the other hand, that these observations are generated from a GMM with $L$ distinct components and with different means (for example, the 12 mathematicians in Figure 1) and the same covariance matrix $\sigma^2 I_{d \times d}$, namely, 
    \begin{align} \label{eqn:calQModel}
          (\s{Postulated}\;\s{statistics})    \quad   y_0,y_1,\ldots,y_{M-1}\stackrel{\s{i.i.d.}}{\sim}\mathcal{M}_{L,d}(\mathbf{1}/L,\boldsymbol{\mu},\sigma^2I_{d \times d}). 
    \end{align}
Since the underlying statistics of the observations correspond to pure noise, as given in \eqref{eqn:calPModel}, we explicitly denote them as $\ppp{y_i}_{i=0}^{M-1} = \ppp{n_i}_{i=0}^{M-1} $ for clarity. Without loss of generality, we assume $\sigma^2 = 1$ throughout our analysis. Extending our model to the more general case in \eqref{eqn:calPModel}, where the underlying statistics have a nonzero mean and $\sigma \neq 1$ is straightforward. However, for notational simplicity, we focus on the case of zero-mean noise with $\sigma^2=1$. 

To estimate these means, the researcher applies a certain estimation procedure, coupled with a given (biasing) side information of $L$ different initial templates, denoted by $(x_0,x_1,\ldots,x_{L-1})$, which she suspects are close to the actual means. These initial templates embody the researcher’s initial assumptions about the data generation model. If the estimation process is unbiased, that is, it remains unaffected by these initial templates, we anticipate that it would converge towards $\mathbf{0}$ (the true means) as the number of observations grows. However, as we demonstrate in this study, this may not necessarily be the result. 

\begin{figure}[h]
    \centering
    \includegraphics[width=0.95\textwidth]{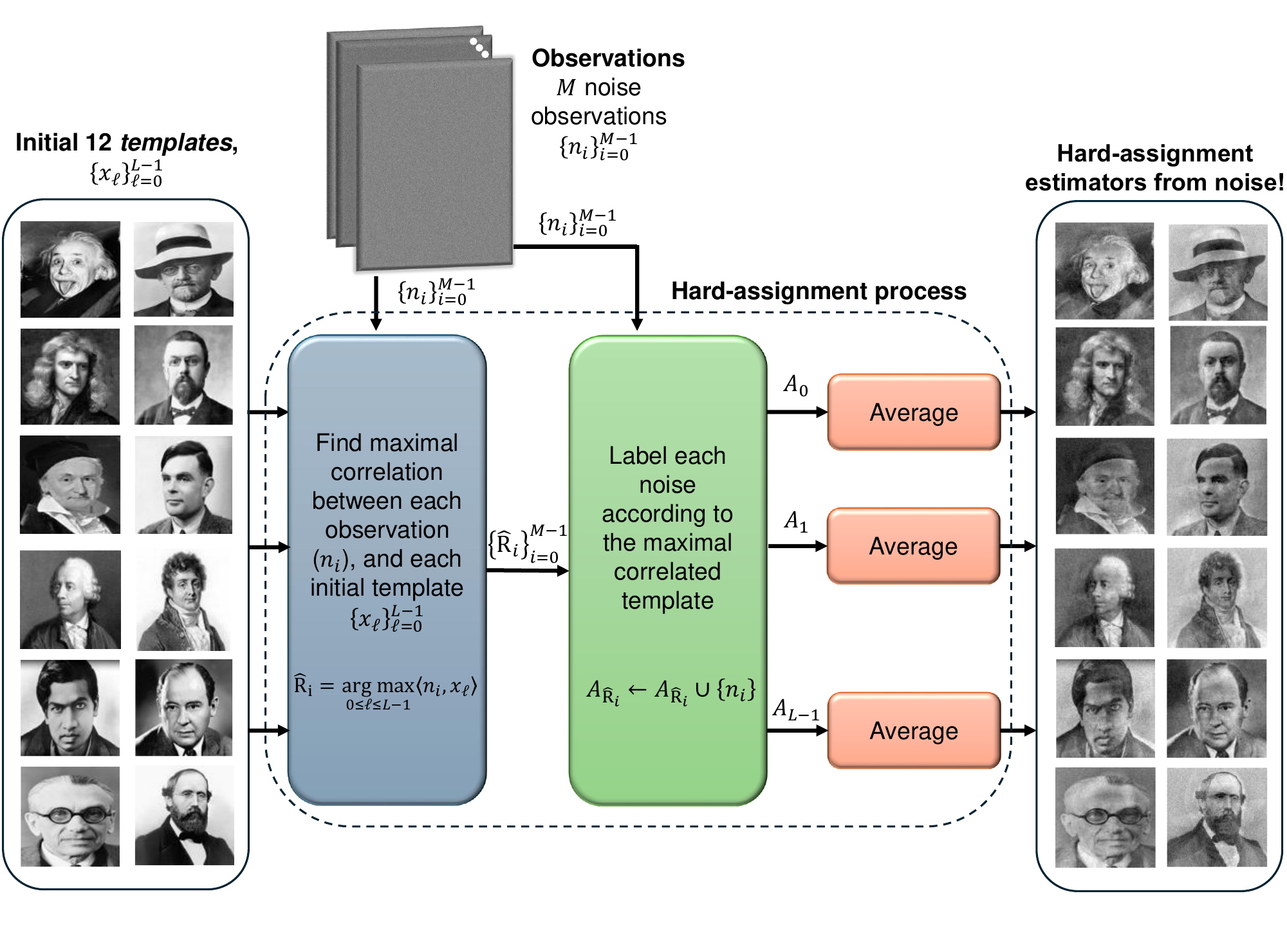}
    \caption{ 
  Confirmation bias with the hard-assignment algorithm: The model consists of $L=12$ distinct templates, $\ppp{x_\ell}_{\ell=0}^{L-1}$, each representing an initial hypothesis for the means of the components in the GMM. There are $M$ data observations, $\ppp{n_i}_{i=0}^{M-1}$, each assumed to correspond to one of the templates. However, in reality, these observations are purely random noise. During the hard-assignment process, each observation is labeled with the highest-correlated template. Observations associated with the same template are then averaged. The resulting hard-assignment estimators closely resemble the initial templates, revealing that the estimation process is biased toward the initial hypotheses.}
    \label{fig:1}
\end{figure}

 We analyze two estimation processes: \emph{single-iteration} hard-assignment and soft-assignment algorithms, which we define in subsequent sections. 
Our main goal is to assess the correlation between the estimation of the means produced by the above methodologies and the corresponding templates (i.e., the initial hypotheses).
{In the sequel, we assume that all templates have the same norm, that is, $\norm{x_\ell}_2 \triangleq\norm{x}_2$, for all $\ell\in[L]$. }

\subsection{Assignment algorithms}\label{sec:2_2}

\paragraph{The hard-assignment process.}
The hard assignment procedure labels each observation with the hypothesis that achieves the highest correlation among the $L$ possible template hypotheses. Then, the algorithm computes the average of all the observations that best align with the $\ell$-th hypothesis relative to the other hypotheses. This averaging is performed for each hypothesis, resulting in $L$ different assignment estimators, denoted by $\ppp{\hat{x}_\ell}_{\ell=0}^{L-1}$, each corresponding to a different template. A pseudo-code for this procedure is provided in Algorithm~\ref{alg:generalizedEfNhard}. The procedure and results of the hard-assignment process are illustrated empirically in Figure~\ref{fig:1}.

\begin{algorithm}[t!]
  \caption{\texttt{Hard-assignment} \label{alg:generalizedEfNhard}}
\textbf{Input:} Initial templates $\ppp{x_\ell}_{\ell=0}^{L-1}$ and observations $\ppp{n_i}_{i=0}^{M-1}\stackrel{\s{i.i.d.}}{\sim} \calN \p{0, I_{d \times d}}$.\\
\textbf{Output:} Hard assignments estimates $\ppp{\hat{x}_\ell}_{\ell=0}^{L-1}$.
\begin{enumerate}
    \item Initialize: $\calA_\ell\leftarrow\emptyset$, for $\ell = 0,1,\ldots,L-1$. 
    \item For $i=0,\ldots,M-1$, compute
     \begin{align}
        \s{\hat{R}}_i\triangleq \underset{0 \leq \ell \leq L-1}{\argmax} {\,   \langle{n_i}, x_\ell\rangle},
    \label{eqn:OptShiftRealSpace}
    \end{align} 
    and add to the set $\calA_{\s{\hat{R}}_i}$ the noise observation $n_i$: $\calA_{\s{\hat{R}}_i} \leftarrow \calA_{\s{\hat{R}}_i} \cup \ppp{n_i}$.
\item  Compute for $0 \leq \ell \leq L-1$: 
    \begin{align}
        \hat{x}_\ell\triangleq \frac{1}{\abs{\calA_\ell}} \sum_{n_i \in \calA_\ell} n_i \label{eqn:generalizedEfNEqn}.
    \end{align}
\end{enumerate}
\end{algorithm}

\paragraph{The soft-assignment process.}
Each iteration of the  classical EM algorithm consists of two steps: the E-step, which calculates the expected value of latent variables given the observed data and current parameter estimates, and the M-step, which updates the parameters to maximize the expected likelihood determined in the E-step~\cite{dempster1977maximum}.
To estimate the means in GMMs, a single iteration of the EM algorithm is given by,
\begin{align} 
       \hat{x}_\ell = \frac{\sum_{i=0}^{M-1}{ p_{i}^{(\ell)}n_i}}{\sum_{i=0}^{M-1}{p_{i}^{(\ell)}}},\label{eqn:softAssignmentUpdateStep} 
\end{align}
where
\begin{align}
    p_{i}^{(\ell)} \triangleq \frac{\exp \p{n_i^Tx_\ell}}{\sum_{r=0}^{L-1} \exp \p{n_i^Tx_r}}, \label{eqn:softMaxProbabilityDist}
\end{align}
where $\ppp{{x}_\ell}_{\ell=0}^{L-1}$ are the hypotheses, and $\ppp{\hat{x}_\ell}_{\ell=0}^{L-1}$ are the estimations.
See Appendix~\ref{sec:softAssignmentUpdateStep} for the proof. Note that in contrast to the hard-assignment process, each observation is not assigned to a single template. Instead, we compute the probability that each observation is associated with each template, thus the name soft assignment. We then average all observations, weighted by the probabilities.  A pseudo-code for the soft assignment procedure is given in Algorithm~\ref{alg:generalizedEfNsoft}. 

We are now ready to formally define confirmation bias in the context of this work.
\begin{definition}
    Let $x$ be an initial hypothesis/template, and $\hat{x}$ be the output of an estimation procedure (e.g., Algorithms \ref{alg:generalizedEfNhard}--\ref{alg:generalizedEfNsoft}). Confirmation bias is quantified by the inner product $\langle \hat{x}, x\rangle$. If $\langle \hat{x}, x\rangle > 0$, then we say that $\hat{x}$ is biased towards $x$.
\end{definition} 

We note that although most of the conditions and results in this work are expressed in terms of inner products, either between templates or between a template and its corresponding estimator, these can equivalently be expressed using the Kullback--Leibler (KL) divergence. In particular, under the isotropic Gaussian noise model with unit-norm templates, the KL divergence between $\mathcal{N}(x_k, I_{d \times d})$ and $\mathcal{N}(x_\ell, I_{d \times d})$ simplifies to
\[
    D_{\mathrm{KL}}(\mathcal{N}(x_k, I_{d \times d}) \,\|\, \mathcal{N}(x_\ell, I_{d \times d})) = 1 - \langle x_k, x_\ell \rangle.
\]
As a result, conditions involving inner products, such as the degree of confirmation bias or similarity between templates, can be equivalently expressed through the lens of information-theoretic divergence.

\begin{algorithm}[t!]
  \caption{\texttt{Soft-assignment} \label{alg:generalizedEfNsoft}}
\textbf{Input:} Initial templates $\ppp{x_\ell}_{\ell=0}^{L-1}$ and observations $\ppp{n_i}_{i=0}^{M-1}\stackrel{\s{i.i.d.}}{\sim} \calN \p{0, I_{d \times d}}$.\\
\textbf{Output:} Soft assignment estimates $\ppp{\hat{x}_\ell}_{\ell=0}^{L-1}$.
\begin{enumerate}
    \item  Compute $p_i^{(\ell)}$ \eqref{eqn:softMaxProbabilityDist}, for all $i=0,\ldots,M-1$ and $\ell=0,\ldots,L-1$.
\item  Compute $\hat{x}_\ell$ in \eqref{eqn:softAssignmentUpdateStep}, for all $\ell=0,\ldots,L-1$.
\end{enumerate}
\end{algorithm}

\subsection{Hard and soft assignments boundaries}

Before presenting the main results of this paper, the bias introduced by the hard-assignment and soft-assignment procedures, we present the tight relationship between these two estimation processes in the extreme signal-to-noise ratio (SNR) levels.
Let us define $\hat{x}_\ell^{(\beta)}$ as the estimator in \eqref{eqn:softAssignmentUpdateStep} but with $p_{i}^{(\ell)}$ replaced by,
\begin{align}
    p_{i,\beta}^{(\ell)} \triangleq \frac{\exp \p{\beta n_i^Tx_\ell}}{\sum_{r=0}^{L-1} \exp \p{\beta n_i^Tx_r}}. \label{eqn:softMaxProbabilityDistWithBeta}
\end{align}
We will refer to $\hat{x}_\ell^{(\beta)}$ as the $\beta$-soft-assignment.
The parameter $\beta$ can be interpreted as if all templates are multiplied by the same constant factor $\beta \in \mathbb{R}, \beta \neq 0$, where~\eqref{eqn:softMaxProbabilityDist} corresponds to $\beta=1$.  
We thus refer to $\beta$ as the SNR parameter.

Proposition~\ref{thm:softAssignmentExtremeNormalization} describes the extreme cases of low and high SNRs. 
When $\beta \to \infty$, the soft-assignment and the hard-assignment estimators converge to the same value. Conversely, in the low SNR regime, as $\beta \to 0$, the soft-assignment  estimator can be expressed as a linear combination of the templates {that converges to zero.}

\begin{proposition}
\label{thm:softAssignmentExtremeNormalization}
Fix $L\geq 2$ and denote by $\{\hat{x}_\ell^{(\beta)}\}_{\ell=0}^{L-1}$ the output of the $\beta$-soft-assignment estimator described above. 
    \begin{enumerate}
         \item For every $0 \leq \ell \leq L-1$, we have,
        \begin{align}
            \lim_{\beta \to \infty} \hat{x}_\ell^{(\beta)} =  \frac{1}{\abs{\calA_\ell}} \sum_{n_i \in \calA_\ell} n_i,
        \end{align}
        almost surely, where $\calA_\ell$ is defined in Algorithm \ref{alg:generalizedEfNhard}. In other words, the soft-assignment procedure in Algorithm \ref{alg:generalizedEfNsoft} and the hard-assignment procedure in Algorithm \ref{alg:generalizedEfNhard} coincide for $\beta\to\infty$.
     \item For every $0 \leq \ell \leq L-1$, we have,
              \begin{align}
                \lim_{\beta \to 0} \lim_{M \to \infty} \frac{ \hat{x}_\ell^{(\beta)}}{\beta } = {x_\ell - \frac{1}{L}\sum_{r = 0}^{L-1} x_r}, \label{eqn:asymptoticNormXtoZeros}
            \end{align}
        almost surely. 
    \end{enumerate}
\end{proposition}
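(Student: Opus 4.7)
The plan is to treat the two parts by separate methods: part~1 is a pointwise softmax-to-argmax limit, while part~2 comes from a first-order Taylor expansion in $\beta$ combined with the strong law of large numbers.

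For part~1, fix a realization of $n_i$. Because $n_i$ has an absolutely continuous distribution and the templates $\{x_r\}$ are pairwise distinct, $\pr\{n_i^{\top}x_r=n_i^{\top}x_{r'}\}=0$ for every $r\neq r'$; hence, almost surely, the maximizer $\hat{R}_i\triangleq\argmax_r n_i^{\top}x_r$ defined in \eqref{eqn:OptShiftRealSpace} is unique. Dividing numerator and denominator of \eqref{eqn:softMaxProbabilityDistWithBeta} by $\exp(\beta n_i^{\top}x_{\hat{R}_i})$ yields the classical softmax limit
\begin{align}
\lim_{\beta\to\infty}p_{i,\beta}^{(\ell)}=\Ind\{\hat{R}_i=\ell\}\quad\text{almost surely.}
\end{align}
Because the numerator and denominator of $\hat{x}_\ell^{(\beta)}$ in \eqref{eqn:softAssignmentUpdateStep} are finite sums over $i=0,\ldots,M-1$, the limit commutes with summation, producing $|\calA_\ell|^{-1}\sum_{n_i\in\calA_\ell}n_i$, exactly the hard-assignment estimator of Algorithm~\ref{alg:generalizedEfNhard}.

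For part~2, I Taylor-expand the exponentials around $\beta=0$. Setting $\bar{x}_\ell\triangleq x_\ell-\frac{1}{L}\sum_{r=0}^{L-1}x_r$, a direct computation gives
\begin{align}
p_{i,\beta}^{(\ell)}=\frac{1}{L}+\frac{\beta}{L}\,n_i^{\top}\bar{x}_\ell+O(\beta^2),
\end{align}
with a remainder dominated by $\|n_i\|^2\max_r\|x_r\|^2$. Substituting into \eqref{eqn:softAssignmentUpdateStep} gives
\begin{align}
\hat{x}_\ell^{(\beta)}=\frac{\tfrac{1}{M}\sum_i n_i+\tfrac{\beta}{M}\sum_i n_in_i^{\top}\bar{x}_\ell+O(\beta^2)}{1+\tfrac{\beta}{M}\sum_i n_i^{\top}\bar{x}_\ell+O(\beta^2)}.
\end{align}
Invoking the strong law of large numbers, $\tfrac{1}{M}\sum_i n_i\to 0$, $\tfrac{1}{M}\sum_i n_in_i^{\top}\to\sigma^2 I_d$, and $\tfrac{1}{M}\sum_i n_i^{\top}\bar{x}_\ell\to 0$ almost surely, so the ratio tends to $\beta\sigma^2\bar{x}_\ell+O(\beta^2)$. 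Dividing by $\beta$ and letting $\beta\to 0$ yields $\sigma^2\bar{x}_\ell$, recovering the right-hand side of \eqref{eqn:asymptoticNormXtoZeros} (the $\sigma^2$ factor is absorbed into the implicit normalization of the noise).

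The main obstacle is the iterated limit: one must control the $O(\beta^2)$ remainder uniformly in $M$ so that, once divided by $\beta$, it still vanishes. The plan is to exhibit a deterministic envelope of the form $|R(\beta,n_i)|\le C\beta^2\|n_i\|^2$ valid on a neighborhood of $\beta=0$; combined with $\tfrac{1}{M}\sum_i\|n_i\|^2\to d\sigma^2$, this guarantees that the remainder in the ratio is $O(\beta^2)$ in the $M\to\infty$ limit, and therefore contributes nothing after dividing by $\beta$ and letting $\beta\to 0$. A parallel Stein's-identity computation, $\E[p_{i,\beta}^{(\ell)}n_i]=\sigma^2\,\E[\nabla p_{i,\beta}^{(\ell)}]$, provides an independent check of the leading coefficient.
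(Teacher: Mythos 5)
Your proof is correct and follows essentially the same route as the paper's: part~1 is the standard softmax-to-argmax limit (the paper sandwiches the log-sum-exp in Lemma~\ref{lem:asymptoticBetaRatio}, you divide through by the maximal exponential --- equivalent arguments), and part~2 combines the SLLN with a first-order Taylor expansion of the softmax in $\beta$, exactly as in the paper's Lemma~\ref{lemma:betaToZero}, merely performing the expansion before rather than after passing to expectations. The only remark worth making is that, like the paper's own proof, you actually establish the iterated limit with $M\to\infty$ taken first and $\beta\to 0$ second (the only order in which the claim can hold, since for fixed $M$ the estimator tends to $\frac{1}{M}\sum_i n_i\neq 0$ as $\beta\to 0$, so division by $\beta$ diverges), and your explicit uniform-in-$M$ envelope for the Taylor remainder is, if anything, more careful than the paper's treatment of the higher-order terms.
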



\section{Main results} \label{sec:mainResults}

We begin by proving that there exists a positive correlation between the assignment estimators and their corresponding templates. Next, we show that this correlation increases as the cross-correlation between the templates decreases. This inverse relation indicates that selecting initial templates with lower correlation results in greater model bias.
We then examine various scenarios depending on the number of templates, $L$. Specifically, we derive exact analytical results for the case of two templates ($L=2$) and investigate the behavior for a finite number of templates. Finally, we consider the scenario where both the number of templates and the signal dimension grow unbounded.
The appendix contains the proofs of the results.

\subsection{Fundamental properties}
The following result highlights three fundamental properties of the hard-assignment and soft-assignment estimators. First, both estimators converge to a non-zero signal, contrasting the unbiased model's prediction that averaging zero-mean pure noise signals would converge to zero. Second, the estimators exhibit a positive correlation with their respective template signals. We further establish a lower bound on the average correlation, which depends on the separation between the most distinct templates. Third, we prove a consistency property for the hard-assignment estimator: the underlying template achieves the maximum correlation with a given estimator among all possible templates.

\begin{thm}\label{thm:hardAssignmentPositiveCorrelation}
Fix $L\geq 2$, and assume that $x_{\ell_1} \neq x_{\ell_2}$ for every $\ell_1 \neq \ell_2$. 
\begin{enumerate}
    \item (Non-vanishing estimators.) Let $\ppp{\hat{x}_\ell}_{\ell=0}^{L-1}$ be the output of either Algorithm~\ref{alg:generalizedEfNhard} or Algorithm~\ref{alg:generalizedEfNsoft}. Then, for every $0 \leq \ell \leq L-1$,
    \begin{align}
        \lim_{M\to\infty} \hat{x}_\ell \neq 0 \label{eqn:nonVanishingEstimator},
    \end{align}
    almost surely.
    \item (Positive correlation.) Let $\ppp{\hat{x}_\ell}_{\ell=0}^{L-1}$ be the output of either Algorithm~\ref{alg:generalizedEfNhard} or Algorithm~\ref{alg:generalizedEfNsoft}. Then, for every $0 \leq \ell \leq L-1$,
    \begin{align}
        \lim_{M\to\infty} \mathbb{E} \pp{{\langle{\hat{x}_\ell}, x_\ell\rangle}} > 0 \label{eqn:positiveCorrelation}.
    \end{align}
    \item (Lower bound for  average correlation.) Denote by $\ppp{\hat{x}_\ell}_{\ell=0}^{L-1}$ the output of Algorithm~\ref{alg:generalizedEfNhard}, and $\s{\hat{R}}$ be defined as in \eqref{eqn:OptShiftRealSpace}, with the templates $\ppp{x_\ell}_{\ell=0}^{L-1}$, satisfying $\|x_\ell\| = 1$, for every $\ell \in \pp{L}$. Let $\rho = \underset{i \neq j} \min \pp{{\langle{{x}_{i}}, x_{j}\rangle}}$. Then, 
    \begin{align}
        \lim_{M\to\infty} \sum_{\ell=0}^{L-1} \mathbb{E}{\langle{\hat{x}_\ell}, x_\ell\rangle} \mathbb{P}[\s{\hat{R}} = \ell] \geq \sqrt{\frac{1-\rho}{\pi}} > 0 \label{eqn:positiveCorrelationSum},
    \end{align}
    almost surely. 
    \item (Consistency.) Let $\ppp{\hat{x}_\ell}_{\ell=0}^{L-1}$ be the output of Algorithm~\ref{alg:generalizedEfNhard}. Then, for every $k \neq \ell$,
    \begin{align}
        \lim_{M\to\infty} \mathbb{E} \pp{{\langle{\hat{x}_\ell}, x_\ell \rangle}} - \mathbb{E} \pp{{\langle{\hat{x}_\ell}, x_k \rangle}} > 0 \label{eqn:closetstTemplateIsCorresponding}.
    \end{align}

\end{enumerate}

\end{thm}

Note that the consistency relation~\eqref{eqn:closetstTemplateIsCorresponding} is not necessarily true if we take the absolute values of the correlations. This means that, in principle, there could be a template $x_k$, different from the underlying one $x_\ell$, whose \emph{negative correlation} with $\hat{x}_\ell$ is \emph{larger} than the \emph{positive} correlation with $x_\ell$, namely, $-\mathbb{E} \pp{{\langle{\hat{x}_\ell}, x_k \rangle}}>\mathbb{E} \pp{{\langle{\hat{x}_\ell}, x_\ell \rangle}}$. 

The second property, which states that the correlation between the estimator and the associated templates is strictly positive, is detailed in Appendices \ref{sec:A5} and \ref{sec:appD}. The third property establishes a lower bound on the \textit{average correlation} between the estimators $\ppp{\hat{x}_\ell}_{\ell =0}^{L-1}$ and their associated templates $\ppp{x_\ell}_{\ell=0}^{L-1}$, demonstrating that the average correlation is lower bounded by the most distinct templates.

Next, for the important case where the templates exhibit a cyclic symmetry relation, as defined below, we derive an explicit lower bound on the expected correlation of \textit{each pair} of estimator and its corresponding template.

\begin{assum}[Cyclic symmetry]\label{assump:0}
We say that $\ppp{x_\ell}_{\ell=0}^{L-1}$ have a cyclic symmetric structure, if there exist a sequence $\{\rho_\ell\}_\ell$ of real-valued numbers, such that,
    \begin{align}
        {\langle{{x}_{\ell_1}}, x_{\ell_2}\rangle} = \rho_{\abs{\ell_1-\ell_2}\s{mod}L},
    \end{align}
for every $0\leq\ell_1,\ell_2\leq L-1$.
\end{assum}

Assumption~\ref{assump:0} specifies that the correlations between templates have a cyclic dependence. A typical example of templates that meet this assumption is a set of signals that includes a reference template and all its cyclic translations, akin to the model studied in~\cite{balanov2024einstein}. This scenario is also likely to occur in applications with intrinsic symmetries, such as cryo-electron microscopy~\cite{bendory2020single} and multi-reference alignment~\cite{bendory2017bispectrum,perry2019sample,bandeira2023estimation,bendory2024sample}. The connection between these applications and the results of this work will be further discussed in Section \ref{sec:outlook}.
Under this assumption, we obtain the following result.
\begin{proposition}
[Positive correlation for cyclic symmetry] \label{thm:positiveCorrForCyclic}
Fix $L\geq 2$. Consider normalized template hypotheses $\|x_\ell\| =1$, for every $\ell \in \pp{L}$, satisfying Assumption \ref{assump:0}. Denote by $\rho = \underset{i \neq j} \min \pp{{\langle{{x}_{i}}, x_{j}\rangle}}$, and by $\ppp{\hat{x}_\ell}_{\ell=0}^{L-1}$ the output of Algorithm~\ref{alg:generalizedEfNhard}. Then, for every $0 \leq \ell \leq L-1$,
    \begin{align}
        \lim_{M\to\infty} \mathbb{E} \pp{{\langle{\hat{x}_\ell}, x_\ell\rangle}} \geq \sqrt{\frac{1-\rho}{\pi}} > 0 \label{eqn:positiveCorrelationCyclic}.
    \end{align}
\end{proposition}
Proposition~\ref{thm:positiveCorrForCyclic} shows that \textit{each} estimator exhibits a guaranteed level of confirmation bias—its expected correlation with the corresponding template is lower bounded by a function of the minimum similarity between any two templates. In particular, the more distinct the templates are (i.e., the smaller $\rho$ is), the larger this lower bound becomes.

\subsection{Inverse dependency}
Next, we demonstrate an intriguing finding regarding the dependency of the correlations between the estimators and the templates on the cross-correlations between different template pairs. Specifically, these correlations increase as the cross-correlations between different templates decrease. This, in turn, implies that choosing initial templates that are less correlated would lead to a higher model bias. 

\begin{proposition}
    [Average inverse dependency] \label{thm:meanInverseDependency}
    Fix $L\geq 2$. Consider two different normalized template hypotheses sets $\ppp{{x}_\ell}_{\ell=0}^{L-1}$ and $\ppp{{y}_\ell}_{\ell=0}^{L-1}$, such that $\|x_\ell\| = \|y_\ell\| = 1$, for every $\ell \in \pp{L}$, and  ${\langle{{x}_{\ell_1}}, x_{\ell_2}\rangle} \leq {\langle{{y}_{\ell_1}}, y_{\ell_2}\rangle}$, for every $0 \leq \ell_1,\ell_2 \leq L-1$. Denote by $\ppp{\hat{x}_\ell}_{\ell=0}^{L-1}$ and $\ppp{\hat{y}_\ell}_{\ell=0}^{L-1}$ the output of Algorithm~\ref{alg:generalizedEfNhard}. Let $\s{\hat{R}}^{(x)}$ and $\s{\hat{R}}^{(y)}$ be defined as in \eqref{eqn:OptShiftRealSpace}, corresponding to the templates sets $\ppp{x_\ell}_{\ell=0}^{L-1}$ and $\ppp{y_\ell}_{\ell=0}^{L-1}$, respectively. Then, as $M\to\infty$, 
        \begin{align}
            \sum_{\ell=0}^{L-1} {\langle{\hat{x}_\ell}, x_{\ell}\rangle} \mathbb{P}[\s{\hat{R}}^{(x)} = \ell] \geq  \sum_{\ell=0}^{L-1} {\langle{\hat{y}_\ell}, y_{\ell}\rangle} \mathbb{P}[\s{\hat{R}}^{(y)} = \ell],
            \label{eqn:hardAssignMeanInverse}
        \end{align}
        almost surely.
\end{proposition}

Proposition \ref{thm:meanInverseDependency} implies that if the correlation between different templates decreases, then the \textit{weighted average} of the correlations between the estimators and the corresponding templates increases. Note that, however, it is not true that each individual correlation between an estimator and its corresponding template increases. For this to happen, additional conditions on the templates should hold; Proposition \ref{prop:softAssignmentGreaterCorrelation} in the Appendix formulates some necessary conditions. In particular, if the templates satisfy the cyclic symmetry property, as specified in Assumption \ref{assump:0}, the inverse property in Proposition~\ref{thm:meanInverseDependency} holds for each pair of an estimator and its corresponding template.

\begin{proposition}
[Individual inverse dependency] \label{thm:largerCorrelationForCycloCorrelations}
Fix $L\geq 2$. Consider two different normalized template hypotheses $\ppp{{x}_\ell}_{\ell=0}^{L-1}$ and $\ppp{{y}_\ell}_{\ell=0}^{L-1}$, both satisfying Assumption \ref{assump:0}, such that ${\langle{{x}_{\ell_1}}, x_{\ell_2}\rangle} \leq {\langle{{y}_{\ell_1}}, y_{\ell_2}\rangle}$, for every $0 \leq \ell_1,\ell_2 \leq L-1$. Denote by $\ppp{\hat{x}_\ell}_{\ell=0}^{L-1}$ and $\ppp{\hat{y}_\ell}_{\ell=0}^{L-1}$ the output of  Algorithm~\ref{alg:generalizedEfNhard}. Then, for every $0 \leq \ell \leq L-1$, as $M\to\infty$,
    \begin{align}
        \mathbb{E} \pp{{\langle{\hat{x}_\ell}, x_\ell\rangle}} \geq  \mathbb{E} \pp{{\langle{\hat{y}_\ell}, y_\ell\rangle}} \label{eqn:hardAssignmentGreaterCorrelation},
    \end{align}
    almost surely.
\end{proposition}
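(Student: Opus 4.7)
The plan is to combine Proposition \ref{thm:meanInverseDependency} with the cyclic symmetry enforced by Assumption \ref{assump:0}. The key observation is that this symmetry forces each individual correlation $\mathbb{E}[\langle \hat{x}_\ell, x_\ell\rangle]$ and each assignment probability $\mathbb{P}[\s{\hat{R}}^{(x)}=\ell]$ (respectively, expected soft-max weight $\mathbb{E}[p_\ell^{(x)}]$) to be independent of $\ell$. Once this is established, the weighted sums in \eqref{eqn:hardAssignMeanInverse} and \eqref{eqn:softAssignMeanInverse} collapse onto the per-index inequality, and \eqref{eqn:hardAssignmentGreaterCorrelation} follows.

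The central probabilistic fact is the following. Under Assumption \ref{assump:0}, the Gram matrix $G$ of $\{x_\ell\}_{\ell=0}^{L-1}$, with entries $G_{\ell_1,\ell_2}=\rho_{\abs{\ell_1-\ell_2}\s{mod}L}$, is circulant. Consequently, for noise $n\sim\mathcal{N}(0,\sigma^2 I_{d\times d})$, the Gaussian random vector $V\triangleq(\langle n,x_0\rangle,\ldots,\langle n,x_{L-1}\rangle)^{T}$ has the circulant covariance $\sigma^2 G$, and hence its joint distribution is invariant under cyclic permutations of its coordinates. The same reasoning applies to $\{y_\ell\}_{\ell=0}^{L-1}$.

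Next, I would rewrite the limits of the estimators as functionals of the law of $V$. By the strong law of large numbers, as $M\to\infty$, the hard-assignment estimator satisfies $\hat{x}_\ell \to \mathbb{E}[n \cdot \Ind_{\{\argmax V=\ell\}}]/\mathbb{P}[\argmax V=\ell]$ almost surely, so that $\mathbb{E}[\langle\hat{x}_\ell,x_\ell\rangle] \to \mathbb{E}[V_\ell \mid \argmax V=\ell]$. A parallel expression holds in the soft case with $p^{(\ell)}=\exp(V_\ell)/\sum_{r}\exp(V_r)$ replacing the indicator. Since cyclic-shift invariance of $V$ transports the events $\{\argmax V=\ell\}$ and the coordinates $V_\ell$ consistently across $\ell$, each of the quantities $\mathbb{P}[\s{\hat{R}}^{(x)}=\ell]$, $\mathbb{E}[p_\ell^{(x)}]$, and $\mathbb{E}[\langle\hat{x}_\ell,x_\ell\rangle]$ is independent of $\ell$; by normalization the first two equal $1/L$. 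The analogous statements hold for $\{y_\ell\}$.

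Finally, inserting these facts into Proposition \ref{thm:meanInverseDependency}: both sides of \eqref{eqn:hardAssignMeanInverse} collapse to $\mathbb{E}[\langle\hat{x}_0,x_0\rangle]$ and $\mathbb{E}[\langle\hat{y}_0,y_0\rangle]$ respectively, yielding \eqref{eqn:hardAssignmentGreaterCorrelation} for $\ell=0$; the $\ell$-independence then extends it to every $\ell$. The soft case is handled identically using \eqref{eqn:softAssignMeanInverse}. I expect the main technical hurdle to be the interchange of the limit $M\to\infty$ with the expectation when passing from the almost-sure convergence to the expected correlations; a standard uniform-integrability argument---using that the Gaussian noise has finite moments of all orders and that the summands defining $\hat{x}_\ell$ are dominated by $\norm{n}_2$---should close this gap.
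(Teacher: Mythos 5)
Your proposal is correct and follows essentially the same route as the paper: both arguments use the circulant Gram structure from Assumption~\ref{assump:0} to show that the assignment weights equal $1/L$ and that the per-index correlations are independent of $\ell$ (the paper packages this as Lemma~\ref{lem:cycloStationary} on cyclo-stationary Gaussian vectors), and then collapse the weighted-average inequality of Proposition~\ref{thm:meanInverseDependency} onto the individual inequality. The only cosmetic difference is that the paper works with a general inverse-temperature $\beta$ and recovers the hard-assignment case by letting $\beta\to\infty$ via Lemma~\ref{lem:asymptoticBetaRatio} and dominated convergence, whereas you treat the two cases in parallel through the indicator representation.
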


A related concept to the inverse dependency property is the monotonicity property: as additional templates are incorporated into the model, the confirmation bias increases. This relationship is formalized in the following result.

\begin{proposition}[Monotonicity of the confirmation bias] \label{thm:monotonicity}  
Fix $L_X, L_Y \geq 2$. Consider two sets of normalized template hypotheses, $\mathcal{X} = \{x_\ell\}_{\ell=0}^{L_X - 1}$ and $\mathcal{Y} = \{y_\ell\}_{\ell=0}^{L_Y - 1}$, such that $L_X \geq L_Y$, and $\mathcal{X} \supseteq \mathcal{Y}$; that is, $\mathcal{X}$ is an extension of $\mathcal{Y}$.  
Let $\{\hat{x}_\ell\}_{\ell=0}^{L_X - 1}$ and $\{\hat{y}_\ell\}_{\ell=0}^{L_Y - 1}$ denote the outputs of Algorithm~\ref{alg:generalizedEfNhard}, and let $\hat{\s{R}}^{(x)}$ and $\hat{\s{R}}^{(y)}$ be defined as in \eqref{eqn:OptShiftRealSpace}, corresponding to the template sets $\{x_\ell\}_{\ell=0}^{L_X - 1}$ and $\{y_\ell\}_{\ell=0}^{L_Y - 1}$, respectively. Then, as $M \to \infty$, 
\begin{align}
    \sum_{\ell=0}^{L_X - 1} \langle \hat{x}_\ell, x_\ell \rangle \, \mathbb{P}[\hat{\s{R}}^{(x)} = \ell] 
    \geq 
    \sum_{\ell=0}^{L_Y - 1} \langle \hat{y}_\ell, y_\ell \rangle \, \mathbb{P}[\hat{\s{R}}^{(y)} = \ell],
    \label{eqn:hardAssignMeanMonotone}
\end{align}
almost surely.
\end{proposition}

\subsection{Two templates}
We now turn to analyze the behavior of the algorithms as a function of the number of templates. We begin with two templates $L=2$. In this case, we derive closed-form expressions for the structure of the hard and soft assignment estimators. Specifically, we show that the estimators $\ppp{\hat{x}_\ell}_{\ell=0}^{1}$ in Algorithms \ref{alg:generalizedEfNhard} and \ref{alg:generalizedEfNsoft} can be represented as specific linear combinations of the template signals $\ppp{{x}_\ell}_{\ell=0}^{1}$, as $M\to\infty$. 
In both algorithms, since $p_{i,\beta}^{(0)} + p_{i,\beta}^{(1)} = 1$ for every $\beta \in \mathbb{R}$ (where $p_{i,\beta}^{(\ell)}$ is defined in \eqref{eqn:softMaxProbabilityDistWithBeta}), and $\mathbb{E}\pp{n_i} = 0$, it follows that $\hat{x}_0 + \hat{x}_1 \to 0$ as $M\to\infty$. That is, the two estimators are the contrasting signals of each other. Furthermore, the estimator is a linear combination of the two templates, where the linear coefficients depend explicitly on the cross-correlation between the two templates.

We start with the hard-assignment estimator. 

\begin{thm}[Hard-assignment for $L=2$]\label{thm:hardAssignmentTwoHypoteses}
Denote by $\hat{x}_0$ and $\hat{x}_1$ the output of Algorithm~\ref{alg:generalizedEfNhard}. Let $\rho \triangleq  {{\langle{{x}_{0}}, x_{1}\rangle}}/\norm{x}_2^2$, and assume that $\rho<1$. Then,
    \begin{align}
        \hat{x}_0 \rightarrow \sqrt{\frac{1}{\pi\p{1-\rho} \norm{x}_2^2}} \p{x_0 - x_1}, \label{eqn:hardAssignTwoHypoteses1}
    \end{align}
and
    \begin{align}
        \hat{x}_1 \rightarrow \sqrt{\frac{1}{\pi\p{1-\rho}\norm{x}_2^2}} \p{x_1 - x_0}, \label{eqn:hardAssignTwoHypoteses2}
    \end{align}
almost surely, as $M\to\infty$.
\end{thm}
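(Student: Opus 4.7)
The plan is to exploit the simple geometry available when $L=2$: the argmax cluster assignment collapses to a half-space condition, after which the strong law of large numbers (SLLN) reduces the limit to a conditional expectation of an isotropic Gaussian over a half-space, which has a well-known closed form.

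First, I would reduce Algorithm~\ref{alg:generalizedEfNhard} to a half-space rule: for $L=2$, the condition $\s{\hat{R}}_i=0$ is equivalent to $\langle n_i, v\rangle>0$ with $v := x_0-x_1$, and ties occur with probability zero. By the symmetry of the standard Gaussian law, $\Pr[\langle n_i,v\rangle>0]=1/2$, so the cluster indicators $\mathbf{1}\{\langle n_i,v\rangle>0\}$ are i.i.d.\ Bernoulli$(1/2)$. Next, I would apply the SLLN twice --- once to $|\calA_0|/M\to 1/2$ almost surely and once to $M^{-1}\sum_i n_i\,\mathbf{1}\{\langle n_i,v\rangle>0\}$ --- and combine them via the continuous mapping theorem to conclude that
\begin{align}
\hat x_0 \xrightarrow{\mathrm{a.s.}} 2\,\mathbb{E}\!\left[n\,\mathbf{1}\{\langle n,v\rangle>0\}\right] = \mathbb{E}\!\left[n \,\middle|\, \langle n,v\rangle>0\right],
\end{align}
where $n\sim\calN(0,I_{d\times d})$. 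Note that $|\rho|<1$ guarantees $v\neq 0$, so the conditioning event is non-degenerate and $|\calA_0|$ is a.s.\ positive for $M$ large.

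Second, I would compute this conditional expectation via an orthogonal decomposition along the unit vector $u := v/\norm{v}_2$. Writing $n = \alpha u + w$ with $\alpha := \langle n,u\rangle\sim\calN(0,1)$ and $w := n-\alpha u$ independent of $\alpha$ with mean zero, the $w$-term contributes nothing and
\begin{align}
\mathbb{E}\!\left[n \,\middle|\, \alpha>0\right] = u\cdot \mathbb{E}[\alpha\mid \alpha>0] = u\cdot\sqrt{\tfrac{2}{\pi}},
\end{align}
by the standard half-normal mean formula. Substituting $\norm{v}_2^2 = 2(1-\rho)\norm{x}_2^2$ and $u = (x_0-x_1)/\norm{v}_2$ yields
\begin{align}
\hat x_0 \xrightarrow{\mathrm{a.s.}} \sqrt{\tfrac{2}{\pi}}\cdot\frac{x_0-x_1}{\sqrt{2(1-\rho)}\,\norm{x}_2} = \sqrt{\frac{1}{\pi(1-\rho)\norm{x}_2^2}}\,(x_0-x_1),
\end{align}
which is \eqref{eqn:hardAssignTwoHypoteses1}. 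The expression \eqref{eqn:hardAssignTwoHypoteses2} follows from the same argument with $v\mapsto -v$, or equivalently by observing that $\calA_0$ and $\calA_1$ partition the observations into complementary half-spaces and $\hat x_0+\hat x_1 \to 0$.

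The only delicate point --- and what I would treat as the main obstacle --- is justifying the passage from the ratio $\bigl(\sum_{n_i\in\calA_0} n_i\bigr)/|\calA_0|$ to a ratio of expectations. This requires verifying that $n_i \mathbf{1}\{\langle n_i,v\rangle>0\}$ is integrable coordinate-wise (immediate since $\mathbb{E}\|n\|_2<\infty$) and that $|\calA_0|/M$ converges to a strictly positive limit (which follows from $|\rho|<1$ and the Bernoulli$(1/2)$ structure above), so that the almost-sure limit of the ratio equals the ratio of the almost-sure limits. Everything else is a routine Gaussian calculation that should be handled in just a few lines.
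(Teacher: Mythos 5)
Your proof is correct, and it takes a genuinely different route from the paper's. The paper first establishes the scalar limit $\langle\hat{x}_0,x_0\rangle\to\sqrt{(1-\rho)/\pi}\,\norm{x}_2$ via the folded-normal formula for $\mathbb{E}[\max\{X_0,X_1\}]$ (Lemma~\ref{lemma:twoTemplatesExpectedValueOfMax}), then invokes the structural result of Theorem~\ref{thm:hardAssignmentLinearCombination} to assert that the limit of $\hat{x}_0$ lies in $\s{span}\{x_0,x_1\}$, adds the antisymmetry $\hat{x}_0+\hat{x}_1\to 0$, and solves the resulting linear system for the coefficients. You instead compute the vector-valued limit $2\,\mathbb{E}[n\,\mathbbm{1}\{\langle n,x_0-x_1\rangle>0\}]$ in one shot by decomposing $n$ along $u=(x_0-x_1)/\norm{x_0-x_1}_2$ and using the independence of the orthogonal component, which kills it in expectation and reduces everything to the half-normal mean $\sqrt{2/\pi}$. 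Your route is more elementary and self-contained: it does not need Theorem~\ref{thm:hardAssignmentLinearCombination} as an input (in effect, your orthogonal decomposition re-proves the $L=2$ case of that theorem for free), and the delicate points you flag --- ties having probability zero, $|\calA_0|/M\to 1/2>0$, coordinatewise integrability so that the ratio of SLLN limits applies --- are exactly the right ones and are all handled. What the paper's approach buys is uniformity with the rest of the development: the linear-combination theorem and the $\mathbb{E}[\max]$ machinery are needed for general $L$ anyway, so the $L=2$ case falls out of them, whereas your direct half-space computation exploits geometry specific to two templates and would not extend as cleanly to $L>2$.
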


Figure \ref{fig:2} illustrates Theorem \ref{thm:hardAssignmentTwoHypoteses} by presenting three extreme cases where $\rho = \{-1, 0, 0.99\}$. When $\rho = -1$ (the images are the contrast of each other), we observe an accurate reconstruction of the Einstein template and its contrast. 
When $\rho = 0$, the estimator appears as a linear combination of the Einstein and cameraman templates. For $\rho = 0.99$, the image appears to be filled with noise, and the correlation with the Einstein template is barely noticeable.
These results are predicted by Theorem \ref{thm:hardAssignmentTwoHypoteses}, and the ``contrast" image of $\hat{x}_1\approx -\hat{x}_0$ is clearly visible in all cases. 

\begin{figure}[t!]
	\centering
	\includegraphics[width=0.95\linewidth]{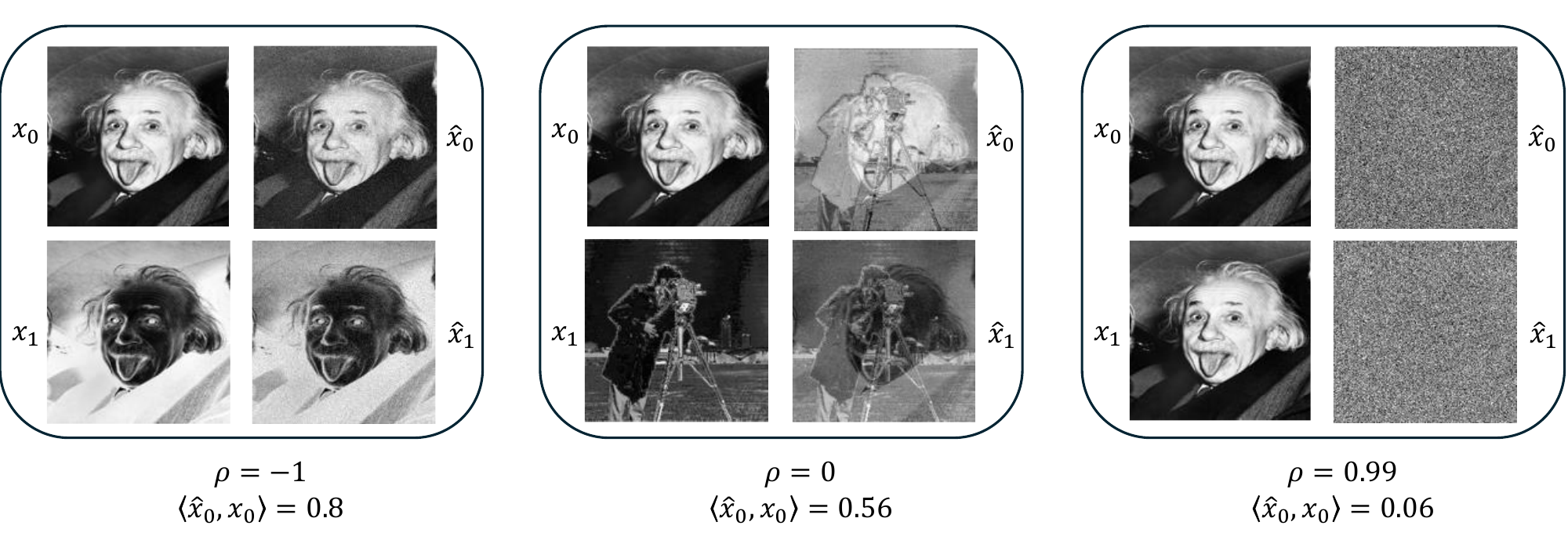}
	\caption{The hard-assignment estimator of Algorithm \ref{alg:generalizedEfNhard} for two templates $L=2$.  The right panel ($\rho = 0.99$) was generated by adding a small additive noise to the Einstein image. All template images are normalized to have unit norm. The inner product between the assignment estimators and the templates, $\langle{\hat{x}_{0}}, x_{0}\rangle$, is accurately predicted by Theorem~\ref{thm:hardAssignmentTwoHypoteses}. The experiments were conducted with $M = 5 \times 10^6$ observations, and dimension $d = 150 \times 150$.}
	\label{fig:2}
\end{figure}

Next, we move to the soft-assignment case. We prove the following result.

\begin{thm}[Soft-assignment for $L=2$] \label{thm:softAssignmentTwoTemplates}
Denote by $\hat{x}_0$ and $\hat{x}_1$ the output of Algorithm~\ref{alg:generalizedEfNsoft}. 
Then,
   \begin{align}
        \hat{x}_0 \to2\cdot\mathbb{E}\pp{\frac{n_1}{1+\exp\p{{\langle{n_1, x_1 - x_0\rangle}}}}},\label{eqn:softBefApp1}
    \end{align}
and
    \begin{align}
        \hat{x}_1 \rightarrow -2\cdot\mathbb{E}\pp{\frac{n_1}{1+\exp\p{{\langle{n_1, x_1 - x_0\rangle}}}}},\label{eqn:softBefApp2}
    \end{align}
almost surely, as $M\to\infty$.
\end{thm}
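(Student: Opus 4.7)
The estimator in \eqref{eqn:softAssignmentUpdateStep} has the form of a ratio of two empirical averages, so the natural plan is to apply the strong law of large numbers (SLLN) to the numerator and denominator separately and then divide. For $L=2$, the soft-max weights simplify to $p_i^{(0)}=[1+\exp(\langle n_i,x_1-x_0\rangle)]^{-1}$ and $p_i^{(1)}=1-p_i^{(0)}$. Since $p_i^{(\ell)}\in[0,1]$ and $n_i\sim\calN(0,\sigma^2 I_{d\times d})$ is integrable, the summands $p_i^{(\ell)}n_i$ are i.i.d.\ integrable random vectors (each coordinate is bounded in absolute value by $|n_i|_{k}$), and likewise $p_i^{(\ell)}$ is i.i.d.\ and bounded. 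Thus, as $M\to\infty$,
\begin{align}
\hat{x}_\ell \;\longrightarrow\; \frac{\mathbb{E}\bigl[p_1^{(\ell)} n_1\bigr]}{\mathbb{E}\bigl[p_1^{(\ell)}\bigr]}\qquad \text{a.s.}
\end{align}
provided $\mathbb{E}[p_1^{(\ell)}]>0$, which is the first thing to verify.

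The next key step is to exploit the symmetry $n_1\stackrel{d}{=}-n_1$ of the isotropic Gaussian. Under the change of variable $n_1\mapsto -n_1$, the weight $p_1^{(0)}$ is mapped to $[1+\exp(-\langle n_1,x_1-x_0\rangle)]^{-1}=p_1^{(1)}$. Combined with $p_1^{(0)}+p_1^{(1)}=1$, this yields
\begin{align}
\mathbb{E}\bigl[p_1^{(0)}\bigr]=\mathbb{E}\bigl[p_1^{(1)}\bigr]=\tfrac{1}{2},
\end{align}
which simultaneously gives the non-degeneracy of the denominator and turns the limit formula into
\begin{align}
\hat{x}_0 \;\longrightarrow\; 2\,\mathbb{E}\!\left[\frac{n_1}{1+\exp(\langle n_1,\,x_1-x_0\rangle)}\right],
\end{align}
which is precisely \eqref{eqn:softBefApp1}.

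For \eqref{eqn:softBefApp2}, the cleanest route is to show that $\hat{x}_1\to -\hat{x}_0$. Applying SLLN to $\hat{x}_0+\hat{x}_1$, and using $p_1^{(0)}+p_1^{(1)}=1$ together with $\mathbb{E}[p_1^{(\ell)}]=\tfrac12$, gives
\begin{align}
\hat{x}_0+\hat{x}_1 \;\longrightarrow\; 2\,\mathbb{E}\bigl[(p_1^{(0)}+p_1^{(1)})\,n_1\bigr]=2\,\mathbb{E}[n_1]=0,
\end{align}
which immediately delivers \eqref{eqn:softBefApp2}. Alternatively, the same identity follows by directly applying the limit formula to $\ell=1$ and then performing the substitution $n_1\mapsto -n_1$ inside the expectation.

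The only potentially delicate step is justifying the termwise application of the SLLN to the ratio: one must ensure the denominator converges to a strictly positive constant so that the ratio of limits equals the limit of the ratio almost surely. This is handled by the symmetry argument above, which pins $\mathbb{E}[p_1^{(\ell)}]$ to $1/2$ without any computation involving $x_0,x_1$. Everything else is routine: boundedness of $p_i^{(\ell)}$ gives the integrability required by SLLN, and the $L=2$ simplification of the soft-max is algebraic.
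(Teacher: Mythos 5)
Your proposal is correct and follows essentially the same route as the paper: apply the SLLN separately to the numerator and denominator of the $L=2$ soft-assignment ratio, use the symmetry of the Gaussian (equivalently, $\langle n_1,x_1-x_0\rangle\sim\calN(0,\norm{x_1-x_0}_2^2)$ being symmetric) to pin the denominator at $\tfrac12$, and obtain \eqref{eqn:softBefApp2} from $\hat{x}_0+\hat{x}_1\to 0$. No gaps.
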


While the consequences of Theorem~\ref{thm:softAssignmentTwoTemplates} are less discernible than Theorem~\ref{thm:hardAssignmentTwoHypoteses}, in Appendix~\ref{eqn:softTwoAppro}, based on a standard approximation of the logistic function, we show that with the notation of ${\rho} \triangleq  {{\langle{{x}_{0}}, x_{1} \rangle} / \norm{x}_2^2}$, and the assumption that ${\rho} < 1$, we have, 
\begin{align}
       \hat{x}_0 \approx \frac{1}{2} \frac{x_0 - x_1}{\sqrt{1+\frac{\pi}{4}\p{1-\rho}\norm{x}_2^2}} \label{eqn:x0hatTwoTemplates},
    \end{align}
    and
    \begin{align}
        \hat{x}_1 \approx \frac{1}{2} \frac{x_1 - x_0}{\sqrt{1+\frac{\pi}{4}\p{1-\rho}\norm{x}_2^2}} \label{eqn:x1hatTwoTemplates},
    \end{align}
as $M\to\infty$. The absolute maximum error of this approximation is less than 0.02 \cite{crooks2009logistic}, and the approximation error vanishes as $\norm{x} \to \infty$.

 We note that Theorem \ref{thm:softAssignmentTwoTemplates} aligns with the results we obtained for the asymptotic cases of low and high SNR regimes in Proposition~\ref{thm:softAssignmentExtremeNormalization}. Specifically, we see that in the high SNR regime, when $\norm{x}_2^2\gg\frac{4}{\pi(1-\rho)}$, then \eqref{eqn:x0hatTwoTemplates}--\eqref{eqn:x1hatTwoTemplates} coincide with the hard-assignment result of Theorem~\ref{thm:hardAssignmentTwoHypoteses}, and when $\norm{x}_2 \to 0$, then \eqref{eqn:x0hatTwoTemplates}--\eqref{eqn:x1hatTwoTemplates} matches Proposition~\ref{thm:softAssignmentExtremeNormalization}.

\subsection{Finite number of templates}

We next consider the case of a finite number of templates $L<\infty$, and prove that the estimators are given by a linear combination of the $L$ templates, as $M\to\infty$. In contrast to the case of $L=2$, we do not derive a closed-form expression for the coefficients; instead, we provide an approximation.

\begin{thm}[Hard and Soft-assignment for finite $L$]\label{thm:hardAssignmentLinearCombination}
Fix $L \geq 2$. Denote by $\ppp{\hat{x}_\ell}_{\ell=0}^{L-1}$ the output of either Algorithm~\ref{alg:generalizedEfNhard} or Algorithm~\ref{alg:generalizedEfNsoft}. Assume that $x_{\ell_1} \neq x_{\ell_2}$ for every $\ell_1 \neq \ell_2$. Then, for every $0\leq\ell\leq L-1$,
        \begin{align}
           \hat{x}_\ell \rightarrow \sum_{k=0}^{L-1} \alpha_{\ell k} x_k,
        \end{align}
almost surely, as $M\to\infty$, for some coefficients $\ppp{\alpha_{\ell k}}_{\ell,k = 0}^{L-1} \in \mathbb{R}$.
\end{thm}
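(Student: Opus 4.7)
The plan is to exploit the rotational symmetry of the isotropic Gaussian noise together with a law-of-large-numbers argument. Let $V \triangleq \mathrm{span}(x_0,\ldots,x_{L-1})$, and decompose each noise sample as $n_i = n_i^{\parallel} + n_i^{\perp}$, where $n_i^{\parallel}$ is the orthogonal projection of $n_i$ onto $V$ and $n_i^{\perp}$ is its projection onto $V^\perp$. Since $n_i \sim \calN(0,\sigma^2 I_{d\times d})$, the components $n_i^{\parallel}$ and $n_i^{\perp}$ are independent Gaussians with mean zero, and in particular $\mathbb{E}[n_i^{\perp} \mid n_i^{\parallel}] = 0$. The key observation is that every quantity appearing in the two algorithms that depends on $n_i$ only through inner products $\langle n_i, x_k\rangle = \langle n_i^{\parallel}, x_k\rangle$ is a measurable function of $n_i^{\parallel}$ alone.

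First, I would handle the hard-assignment case. The membership indicator $\mathbf{1}\{n_i \in \calA_\ell\} = \mathbf{1}\{\ell = \argmax_k \langle n_i, x_k\rangle\}$ depends on $n_i$ only through $n_i^{\parallel}$; denote this event by $\{n_i^{\parallel} \in S_\ell\}$ for an appropriate set $S_\ell \subset V$. By the strong law of large numbers, as $M \to \infty$,
\begin{align}
\hat{x}_\ell = \frac{\frac{1}{M}\sum_{i=0}^{M-1} n_i \mathbf{1}\{n_i^{\parallel} \in S_\ell\}}{\frac{1}{M}\sum_{i=0}^{M-1} \mathbf{1}\{n_i^{\parallel} \in S_\ell\}} \longrightarrow \frac{\mathbb{E}[n \mathbf{1}\{n^{\parallel} \in S_\ell\}]}{\mathbb{P}[n^{\parallel} \in S_\ell]}
\end{align}
almost surely, provided the denominator is positive. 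The assumption $x_{\ell_1} \neq x_{\ell_2}$ guarantees that each $S_\ell$ has positive Gaussian measure. Splitting $n = n^{\parallel} + n^{\perp}$ in the numerator and using independence together with $\mathbb{E}[n^{\perp}]=0$, the contribution of $n^{\perp}$ vanishes, and the limit equals $\mathbb{E}[n^{\parallel} \mid n^{\parallel} \in S_\ell] \in V$. Any element of $V$ can be written as $\sum_{k=0}^{L-1}\alpha_{\ell k} x_k$, which gives the claim.

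For the soft-assignment case, the same decomposition applies: the weights $p_i^{(\ell)}$ in \eqref{eqn:softMaxProbabilityDist} depend on $n_i$ only through $\langle n_i,x_k\rangle$, hence only through $n_i^{\parallel}$. Writing $p^{(\ell)}(n) = p^{(\ell)}(n^{\parallel})$ and applying the strong law of large numbers to numerator and denominator in \eqref{eqn:softAssignmentUpdateStep},
\begin{align}
\hat{x}_\ell \longrightarrow \frac{\mathbb{E}\!\left[n\, p^{(\ell)}(n^{\parallel})\right]}{\mathbb{E}\!\left[p^{(\ell)}(n^{\parallel})\right]} = \frac{\mathbb{E}\!\left[n^{\parallel} p^{(\ell)}(n^{\parallel})\right]}{\mathbb{E}\!\left[p^{(\ell)}(n^{\parallel})\right]} \in V,
\end{align}
where the second equality again uses independence of $n^{\parallel}$ and $n^{\perp}$ and the zero-mean property of $n^{\perp}$. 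The denominator is strictly positive since the softmax weights are positive. As before, the limit lies in $V$ and hence admits an expansion $\sum_{k}\alpha_{\ell k} x_k$.

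The only mildly delicate step is justifying the applicability of the strong law of large numbers for the soft-assignment numerator, because $n\, p^{(\ell)}(n^{\parallel})$ is unbounded; however, $p^{(\ell)}\le 1$ and $\mathbb{E}\|n\| < \infty$ provide the required integrability, so this is routine. The main conceptual point, rather than any technical obstacle, is the invariance of the assignment rule and the softmax weight under orthogonal perturbations of $n_i$ within $V^\perp$, which is exactly what forces the estimator to collapse into the $L$-dimensional subspace spanned by the templates.
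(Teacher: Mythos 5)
Your proof is correct, and it reaches the same structural conclusion as the paper via the same decomposition $\mathbb{R}^d = V \oplus V^\perp$ with $V = \mathrm{span}(x_0,\ldots,x_{L-1})$; both arguments reduce to showing that the limiting estimator $\mathbb{E}\pp{n\,\mathbbm{1}_{\ppp{n\in\calV_\ell}}}/\mathbb{P}\pp{n\in\calV_\ell}$ (resp.\ the softmax-weighted analogue) has no component in $V^\perp$. Where you differ is in the mechanism for that key step. The paper expands $y_\ell = \mathbb{E}\pp{n\,\mathbbm{1}_{\ppp{n\in\calV_\ell}}}$ in a basis of $V\oplus V^\perp$ and kills each coefficient $\langle y_\ell,u_i\rangle$ by an explicit reflection argument: it shows the Voronoi cell $\calV_\ell$ is invariant under $v=\gamma_v u_i+w_v\mapsto -\gamma_v u_i+w_v$ and that the Gaussian density is preserved, so the integral of $\langle v,u_i\rangle f(v)$ over $\calV_\ell$ vanishes. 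You instead observe that $n^{\parallel}$ and $n^{\perp}$ are independent for an isotropic Gaussian and that the indicator (or softmax weight) is a function of $n^{\parallel}$ alone, so $\mathbb{E}\pp{n^{\perp}g(n^{\parallel})}=\mathbb{E}\pp{n^{\perp}}\mathbb{E}\pp{g(n^{\parallel})}=0$. Your route is shorter and handles hard and soft assignment uniformly in one line, at the cost of leaning on isotropy of the noise; the paper's reflection argument is more hands-on but makes explicit which geometric symmetry of $\calV_\ell$ is being used. You also supply the integrability remark needed to invoke the SLLN in the soft case, which the paper leaves implicit. The only detail worth tightening is the claim that each $S_\ell$ has positive measure: this follows because $v=x_\ell$ lies in the interior of $S_\ell$ (using the equal-norm assumption and $x_\ell\neq x_k$, Cauchy--Schwarz gives $\langle x_\ell,x_\ell-x_k\rangle>0$), but it deserves a sentence.
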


In Appendix~\ref{sec:expressionForLinearCoefficents}, we derive an  analytic expression for the coefficients $\ppp{\alpha_{\ell k}}_{\ell, k = 0}^{L-1}$, and show that,
\begin{align}
    \alpha_{k \ell} = \beta \p{\delta_{k \ell} - \frac{\mathbb{E}\pp{p_{1,\beta}^{(\ell)} \cdot p_{1,\beta}^{(k)}}}{\mathbb{E}\pp{p_{1,\beta}^{(\ell)}}} },
\end{align}
where $p_{1,\beta}^{(\ell)}$ is defined in \eqref{eqn:softMaxProbabilityDistWithBeta}. The soft-assignment estimators correspond to $\beta = 1$, while the hard-assignment estimators correspond to the limit $\beta \to \infty$.
Furthermore, in Appendix~\ref{sec:approximationForFiniteNumberOfTemplates}, we use a standard approximation of the expected value of the ratio between two random variables to show that the soft-assignment estimator can be approximated by an explicit expression,
    \begin{align}
            \hat{x}_\ell \approx x_\ell - \frac{1}{C_\ell} \sum_{r=0}^{L-1} x_r e^{{\,\langle{x_\ell}, x_r \rangle}},
            \label{eqn:approximationSoftForFiniteL}
        \end{align}
        as $M\to\infty$, where $C_\ell\triangleq L - \sum_{r=0}^{L-1}  e^{{\,\langle{x_\ell}, x_r \rangle}} + \frac{1}{L}\sum_{r_1,r_2=0}^{L-1}  e^{{\,\langle{x_{r_1}}, x_{r_2} \rangle}}$. 
This approximation shows that templates that exhibit a higher correlation with the $\ell$th template will tend to have a more significant contribution through the weight $e^{{\langle{{x}_{\ell}}, x_{r}\rangle}}$.

\subsection{Growing number of templates and dimension}

We now explore the behavior of the assignment estimators, when $d,L\to\infty$.
We show that the soft-assignment estimator converges to the corresponding template, while the hard-assignment estimator converges to the corresponding template, up to a scaling factor. 

For the hard-assignment procedure, we assume that the correlation between the various templates decays faster than a logarithmic factor. This is formulated as follows.
\begin{assum}\label{assump:1}
We say that the template signals $\ppp{x_\ell}_{\ell=0}^{L-1}$ satisfy Assumption~\ref{assump:1} if 
\begin{align}
    \max_{\ell_1\neq\ell_2\in[L]}{{{\langle{{x}_{\ell_1}}, x_{\ell_2}\rangle}} \cdot \log\p{\abs{\ell_1 - \ell_2}}} \to 0,\label{eqn:assumEQ}
\end{align}
as $d,L\to\infty$.
\end{assum}
While implicit in Assumption~\ref{assump:1}, it should be noted that for \eqref{eqn:assumEQ} to hold, the dimension $d$ must diverge as well. Intuitively speaking, if $d$ is fixed and, as an example, the templates are spread uniformly over the $d$-dimensional hypersphere, then it is clear that we cannot have growing number of templates that will also appear ``almost" orthogonal at the same time. Using the Johnson–Lindenstrauss lemma \cite{JohnsonLindenstrauss}, it is not difficult to argue that \eqref{eqn:assumEQ} induces an asymptotic relation between $d$ and $L$; indeed, \eqref{eqn:assumEQ} can hold when $d\gg\log L$. We have the following result.

\begin{thm}[Hard-assignment for $d,L\to\infty$]\label{thm:hardAssignmentAsymptoticLandAsymptoticD} Let $\ppp{\hat{x}_\ell}_{\ell=0}^{L-1}$ be the output of Algorithm~\ref{alg:generalizedEfNhard}, and $\s{\hat{R}}$ be defined as in \eqref{eqn:OptShiftRealSpace}. Then, under Assumption \ref{assump:1}, we have,
    \begin{align}
        \lim_{d,L \to\infty} \lim_{M \to\infty} \frac{1}{\sqrt{2 \log L}}  \sum_{\ell=0}^{L-1}\langle{\hat{x}_\ell},{x_\ell}\rangle \mathbb{P}[\hat{\s{R}} = \ell]  = 1, \label{eqn:hardAssignmentAsymptoticLandAsymptoticD1}
    \end{align}
almost surely. If, in addition, Assumption \ref{assump:0} holds, then,
    \begin{align}
        \lim_{d,L \to\infty} \lim_{M \to\infty} \frac{1}{\sqrt{2 \log L}}  \langle{\hat{x}_\ell},{x_\ell}\rangle  = 1, \label{eqn:hardAssignmentAsymptoticLandAsymptoticD2}
    \end{align}
almost surely, for every $\ell \in \mathbb{N}$.
\end{thm}

The proof of Theorem \ref{thm:hardAssignmentAsymptoticLandAsymptoticD} relies on certain results from the theory of extrema of Gaussian processes, in particular, the convergence of the maximum of a Gaussian process to the Gumbel distribution, see, e.g., \cite{leadbetter2012extremes}. To demonstrate Theorem~\ref{thm:hardAssignmentAsymptoticLandAsymptoticD}, we conducted the following experiment. We generated template signals (hypotheses) by
    \begin{align}
        {x}_\ell \triangleq U_\ell x_0, \label{eqn:generationOfXell}
    \end{align} 
    for $\ell\in[L]$, a fixed $x_0 \in \mathbb{R}^d$, and $L$ orthogonal matrices  $\ppp{U_\ell}_{\ell=0}^{L-1}$, drawn from a uniform (Haar) distribution. 
     Figure~\ref{fig:4} shows the convergence of the hard-assignment estimator, for large values of $d$ and $L$, in the regime where $\log L\ll d$. It can be seen that
     as $L$ and $d$ increase, the Pearson cross-correlation gets closer to unity, as our results predict. 

\begin{figure}[t!]
    \centering
    \includegraphics[width=0.9\linewidth]{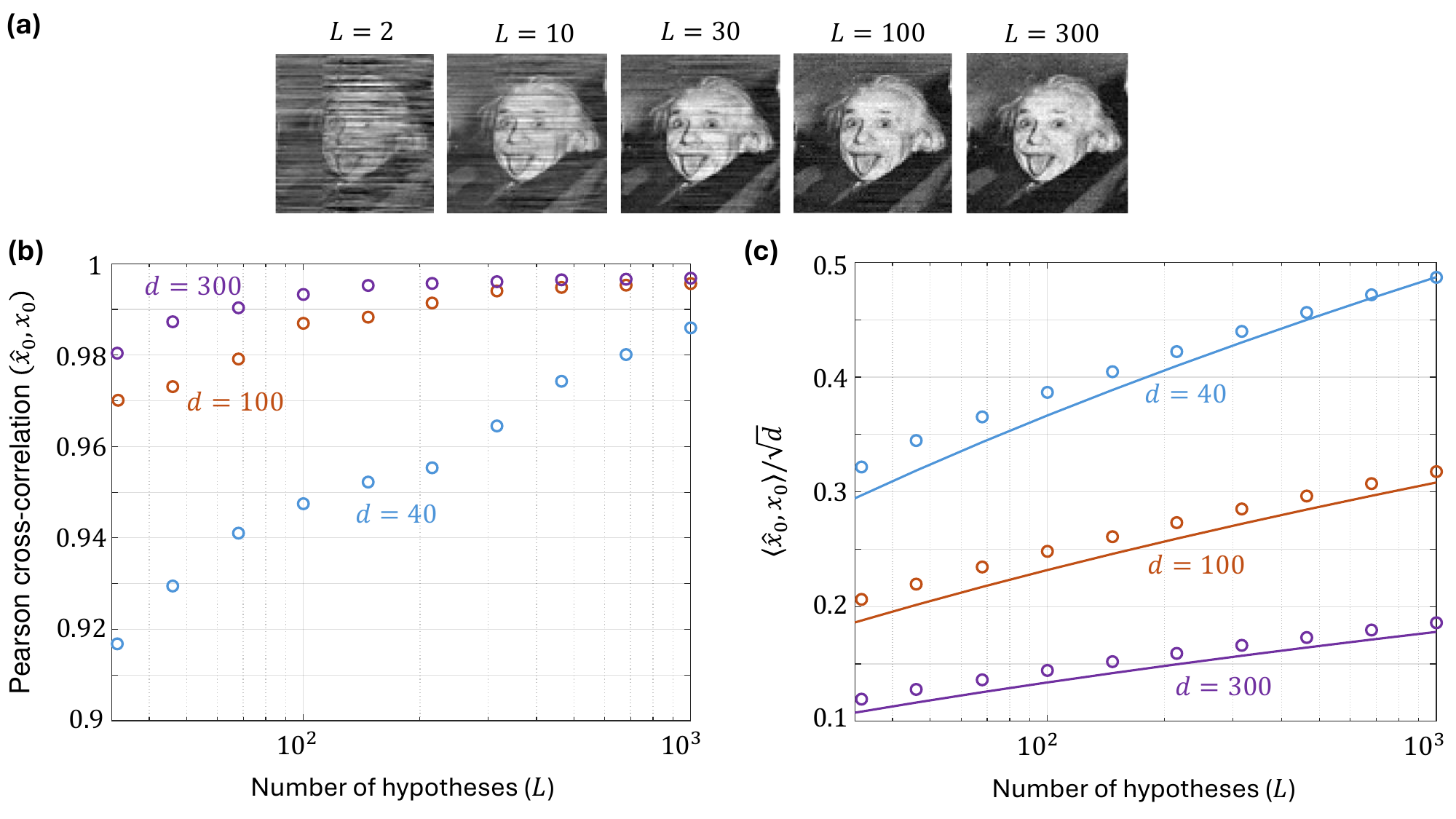}
    \caption{The convergence of the hard-assignment estimator in the high-dimensional regime $\p{L,d\gg1}$, where $\log L\ll d$. (a) A visual illustration shows the correlation between the initial template and its estimator, as $L$ increases. The initial template $x_0$ is an image of Einstein of dimensions $d = 75 \times 75$, while the other templates $\ppp{x_\ell}_{\ell\geq1}$ were generated according to the procedure in \eqref{eqn:generationOfXell}. (b)+(c) The Pearson cross-correlation and the inner product between the hard-assignment estimator $\hat{x}_0$ and the template $x_0$, which is modeled as a normalized exponential vector $x_0 \pp{m} = \exp \ppp{-\alpha \cdot m}$ for $0 \leq m \leq d-1$, and $\alpha = 1/30$, with varying dimension sizes $\p{d = 40, 100, 300}$. The additional templates $x_\ell$, for $1 \leq \ell \leq L-1$, were generated according to~\eqref{eqn:generationOfXell}. (b) The Pearson cross-correlation between $x_0$ and the estimator $\hat{x}_0$ is plotted as a function of $d$ and $L$, showing that the correlation approaches one as the number of hypotheses increases. Similar trends are observed for the correlation between other templates $\{x_\ell\}_{\ell\geq1}$ and their corresponding estimators $\{\hat{x}_\ell\}_{\ell\geq1}$. (c) The inner product between the hard-assignment estimator and the template is shown as a function of $d$ and $L$. Dots represent Monte Carlo simulations with $5 \times 10^6$ trials at each point, while the lines are given by $b_L/\sqrt{d}$, where $b_L$ is defined by \eqref{eqn:b_n}, which is asymptotically equivalent to \eqref{eqn:hardAssignmentAsymptoticLandAsymptoticD2} under Assumption \ref{assump:0}. Although the template vectors $\ppp{x_\ell}_{\ell=0}^{L-1}$ do not strictly satisfy Assumption \ref{assump:0}, they are empirically weakly correlated, leading to a covariance matrix close to the identity matrix. }
    \label{fig:4} 
\end{figure}

Next, we move to the soft-assignment procedure.
\begin{thm}[Soft-assignment for $d,L\to\infty$] \label{thm:softAssignmentAsymptoticL}
Denote by $\ppp{\hat{x}_\ell}_{\ell=0}^{L-1}$ the output of Algorithm~\ref{alg:generalizedEfNsoft}. Assume that ${{\langle{{x}_{\ell_1}}, x_{\ell_2}\rangle}} \to 0$, for $\abs{\ell_1 - \ell_2} \to \infty$, as $d,L \to \infty$. Then, 
        \begin{align}
           \hat{x}_\ell - x_\ell \rightarrow 0,
        \end{align}
in probability, for every $\ell \in \mathbb{N}$, as $M, d, L \to \infty$.
\end{thm}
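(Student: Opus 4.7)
The LLN gives, as $M\to\infty$,
\[
\hat{x}_\ell \;\xrightarrow{\text{a.s.}}\; \frac{\mathbb{E}_n\!\left[n\,p^{(\ell)}(n)\right]}{\mathbb{E}_n\!\left[p^{(\ell)}(n)\right]},\qquad n\sim\mathcal{N}(0,I),
\]
reducing the question to a deterministic limit (I take $\sigma=1$; the extension is routine). The key algebraic step is the shift $n=m+x_\ell$, which combined with the Gaussian identity $e^{\langle n,x_\ell\rangle}\phi(n)=e^{\|x_\ell\|^2/2}\phi(n-x_\ell)$ gives
\[
\hat{x}_\ell \;=\; x_\ell \;+\; \frac{\mathbb{E}_m[m/Z_\ell(m)]}{\mathbb{E}_m[1/Z_\ell(m)]},\qquad Z_\ell(m):=\sum_{r=0}^{L-1} e^{\langle x_\ell,x_r\rangle+\langle m,x_r\rangle},\ m\sim\mathcal{N}(0,I).
\]
It remains to show that the correction ratio vanishes as $d,L\to\infty$.

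\textbf{Denominator.} I would work with $Z_\ell^0(m):=Z_\ell(m)/L$ (the $L$'s cancel in the ratio). A direct computation gives $\mathbb{E}_m[Z_\ell^0]=e^{\|x\|^2/2}\cdot L^{-1}\sum_r e^{\langle x_\ell,x_r\rangle}\to e^{\|x\|^2/2}$, since the assumption forces $e^{\langle x_\ell,x_r\rangle}\to 1$ for all but a vanishing fraction of $r$'s, and $\mathrm{Var}_m(Z_\ell^0)=O(1/L)$ (the diagonal part is $O(1/L)$; the off-diagonal covariances carry factors $e^{\langle x_r,x_s\rangle}-1$ that vanish). So $Z_\ell^0\to e^{\|x\|^2/2}$ in $L^2$. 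Uniform integrability then follows from the Jensen lower bound $Z_\ell^0(m)\geq e^{\bar a_\ell+\langle m,\bar x\rangle}$, where $\bar a_\ell=L^{-1}\sum_r\langle x_\ell,x_r\rangle\to 0$ and $\bar x=L^{-1}\sum_r x_r$ satisfies $\|\bar x\|\to 0$ under the assumption. This yields $\mathbb{E}_m[1/Z_\ell^0]\to e^{-\|x\|^2/2}>0$.

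\textbf{Numerator (main difficulty).} A naive Cauchy--Schwarz bound gives only $\|\mathbb{E}_m[m/Z_\ell^0]\|\leq\sqrt{\mathbb{E}\|m_\parallel\|^2\cdot \mathrm{Var}(1/Z_\ell^0)}=O(\sqrt{L\cdot 1/L})=O(1)$, which is \emph{not} sufficient; the cancellation $\mathbb{E}[m]=0$ must be exploited through Gaussian integration by parts. I would use the exact expansion
\[
\frac{1}{Z_\ell^0} \;=\; e^{-\|x\|^2/2} \;-\; \frac{Z_\ell^0-e^{\|x\|^2/2}}{e^{\|x\|^2}} \;+\; \frac{(Z_\ell^0-e^{\|x\|^2/2})^2}{e^{\|x\|^2}\,Z_\ell^0},
\]
multiply by $m$, and integrate termwise. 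The first term vanishes by $\mathbb{E}[m]=0$; the second becomes, via Stein's lemma $\mathbb{E}_m[m\,Z_\ell^0(m)]=\mathbb{E}_m[\nabla Z_\ell^0(m)]$, equal to $-e^{-\|x\|^2/2}\cdot L^{-1}\sum_r e^{\langle x_\ell,x_r\rangle}x_r$, whose norm is $\to 0$ under near-orthogonality (to leading order it is $e^{-\|x\|^2/2}\|\bar x\|$, since the isolated $r=\ell$ contribution is only $O(1/L)$). The quadratic remainder is bounded by splitting on the high-probability event $\{Z_\ell^0\geq e^{\|x\|^2/2}/2\}$ (whose complement has probability $O(1/L)$ by Chebyshev), applying Cauchy--Schwarz with a fourth-moment bound $\mathbb{E}[(Z_\ell^0-e^{\|x\|^2/2})^4]=O(1/L^2)$, and using the $L$-dimensional support $m_\parallel\in\mathrm{span}(x_0,\dots,x_{L-1})$ (so $\mathbb{E}\|m_\parallel\|^2\leq L$); this yields an $O(1/\sqrt L)$ bound. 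Combining, $\|\mathbb{E}_m[m/Z_\ell^0]\|=o(1)$ and the ratio converges to $0$.

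\textbf{Main obstacle.} The crux is extracting enough cancellation in the numerator to beat the $O(1)$ barrier from brute Cauchy--Schwarz; Stein's lemma is the right device, but using it requires careful justification of the Taylor expansion of $1/Z_\ell^0$ and control of the quadratic remainder on the rare event where $Z_\ell^0(m)$ is abnormally small (where pointwise bounds explode). The bookkeeping also relies on the observation that only the $L$-dimensional projection of $m$ matters, replacing the ambient $d$ by $L$ in dimension counts -- essential for the moment bounds to close under the weak pointwise form of the orthogonality assumption.
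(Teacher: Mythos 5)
Your reduction to $M\to\infty$, the Gaussian shift $n=m+x_\ell$, and the resulting decomposition $\hat{x}_\ell = x_\ell + \mathbb{E}_m[m/Z_\ell^0]/\mathbb{E}_m[1/Z_\ell^0]$ are all correct, and your first-order term (Stein's lemma giving $-\mu^{-2}\,L^{-1}\sum_r x_r e^{\langle x_\ell,x_r\rangle}$, which vanishes because $\|\bar{x}\|\to 0$ and the Ces\`{a}ro average of $e^{\langle x_\ell,x_r\rangle}-1$ vanishes) is sound. The genuine gap is in the quadratic remainder. The theorem's hypothesis is purely qualitative --- $\langle x_{\ell_1},x_{\ell_2}\rangle\to 0$ as $|\ell_1-\ell_2|\to\infty$, with no rate --- so the off-diagonal part of $\mathrm{Var}(Z_\ell^0)$, namely $L^{-2}\sum_{r\neq s}e^{\langle x_\ell,x_r\rangle+\langle x_\ell,x_s\rangle}e^{\|x\|^2}\bigl(e^{\langle x_r,x_s\rangle}-1\bigr)$, is only $o(1)$, not $O(1/L)$: if, say, $|\langle x_r,x_s\rangle|\asymp 1/\log(2+|r-s|)$ the variance decays like $1/\log L$. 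The same applies a fortiori to your claimed fourth-moment bound $\mathbb{E}[(Z_\ell^0-\mu)^4]=O(1/L^2)$ and to the Chebyshev bound $O(1/L)$ on the bad event. Consequently your Cauchy--Schwarz step produces $\sqrt{\mathbb{E}\|m_\parallel\|^2}\cdot\sqrt{\mathbb{E}(Z_\ell^0-\mu)^4}\asymp\sqrt{L}\cdot o(1)$, which need not vanish. You correctly identified beating the $\sqrt{L}$ barrier as the crux, but the proposed resolution only closes under a polynomial decay assumption on the cross-correlations that the theorem does not make.

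The paper avoids this entirely by never paying a dimensional factor: it applies Bernstein's weak law of large numbers over the template index $r$ to show that the scalar $\frac{1}{L}\sum_r e^{\langle n_1,x_r\rangle}$ converges in probability to $e^{\|x\|_2^2/2}$ (this needs exactly the qualitative covariance decay $\mathrm{cov}(e^{\langle n_1,x_{r}\rangle},e^{\langle n_1,x_{s}\rangle})\to 0$), then uses the continuous mapping theorem on the ratio $n_1 e^{\langle n_1,x_\ell\rangle}/\bigl(\frac{1}{L}\sum_r e^{\langle n_1,x_r\rangle}\bigr)$ and uniform integrability to pass to expectations, landing on $\mathbb{E}\bigl[n_1 e^{\langle n_1,x_\ell\rangle-\|x\|_2^2/2}\bigr]=x_\ell$ directly. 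If you want to rescue your route, you would either need to strengthen the hypothesis to a summable/polynomial decay of the correlations, or replace the termwise Taylor-plus-Cauchy--Schwarz treatment of the remainder by a concentration-in-probability argument for $Z_\ell^0$ followed by uniform integrability, which essentially reproduces the paper's mechanism.
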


The proof of Theorem \ref{thm:softAssignmentAsymptoticL} relies on Bernstein's law of large numbers for correlated sequences (see, for example, \cite{durrett2019probability}). 
Note that for the soft-assignment case, we only require that the cross-correlations vanish, without any restriction on the decay rate, whereas for the hard-assignment algorithm, the cross-correlation is required to decay faster than $\log \p{\abs{\ell_1 - \ell_2}}$. 

\section{Multi-iteration hard-assignment and soft-assignment} \label{sec:multiIteration}

So far, we have analyzed a single iteration of the $K$-means and EM algorithms. A key question that naturally arises is how the confirmation bias evolves as the number of iterations increases. This is particularly relevant in practical applications, where these algorithms typically run for many iterations. The fundamental question is whether these algorithms eventually forget their initializations (i.e., templates). Intuitively, the initial templates exert the greatest influence during the early iterations, with their impact expected to fade over time. However, as we demonstrate in this section, both theoretically and through numerical simulations, the bias persists even after numerous iterations, underscoring the enduring role of confirmation bias throughout the iterative process.

In the multi-iteration versions of the $K$-means and EM algorithms, the means estimators at iteration $t+1$, denoted by $\hat{x}_{\ell}^{(t+1)}$ for $\ell = 0,1,\dots, L-1$, are computed by updating the estimates from the previous iteration, $\hat{x}_{\ell}^{(t)}$, using either the hard-assignment approach (as outlined in Algorithm \ref{alg:generalizedEfNhardMultiIteration}) or the soft-assignment approach (Algorithm \ref{alg:generalizedEfNSoftMultiIteration}). We denote the total number of iterations by $T$. As in the single-iteration case, multi-iteration algorithms start with an initial set of template estimates, $\hat{x}_\ell^{(0)}$. A key question is how the final estimates after $T$ iterations, $\hat{x}_{\ell}^{(T)}$, relate to the initial hypotheses, $\hat{x}_\ell^{(0)}$ for $\ell \in \pp{L}$, when the observed data consists of noise only. As we demonstrate in this section, the bias becomes more pronounced as the number of observations $M$ and hypotheses $L$ increase. It should be emphasized here that in the single-iteration analysis (Algorithms \ref{alg:generalizedEfNhard} and \ref{alg:generalizedEfNsoft}), we have assumed that all templates are normalized. Under this assumption, minimizing the Euclidean distance $\norm{n_i - x_\ell}^2$ is equivalent to maximizing the inner product $\langle n_i, x_\ell \rangle$. However, in the general case, the estimates after the first iteration are not necessarily normalized, except in some special cases (e.g., when the cyclic symmetry in Assumption~\ref{assump:0} holds). As a result, the update rules in Algorithms \ref{alg:generalizedEfNhardMultiIteration} and \ref{alg:generalizedEfNSoftMultiIteration} consider the more general Frobenius distances instead of inner products.

\begin{algorithm}[t!]
  \caption{\texttt{$K$-means (Multi-iteration hard-assignment)} \label{alg:generalizedEfNhardMultiIteration}}
\textbf{Input:} Initial templates $\ppp{\hat{x}_\ell^{(0)}}_{\ell=0}^{L-1}$, observations $\ppp{n_i}_{i=0}^{M-1}\stackrel{\s{i.i.d.}}{\sim} \calN \p{0, I_{d \times d}}$, and number of iterations $T$.\\
\textbf{Output:} K-means estimates $\ppp{\hat{x}_\ell^{(T)}}_{\ell=0}^{L-1}$.\\
\textbf{Each Iteration, for $t=0\ldots,T-1$:}
\begin{enumerate}
    \item Initialize: $\calA_\ell^{(t)} \leftarrow\emptyset$, for $\ell = 0,1,\ldots,L-1$. 
    \item For $i=0,\ldots,M-1$, compute
     \begin{align}
        \s{\hat{R}}_i^{(t)}\triangleq \underset{0 \leq \ell \leq L-1}{\argmin} {\,   \norm{n_i - \hat{x}_\ell^{(t)}}_F^2},
    \label{eqn:OptShiftRealSpaceMulti}
    \end{align} 
    and add to the set $\calA_{\s{\hat{R}}_i}^{(t)}$ the noise observation $n_i$: $\calA_{\s{\hat{R}}_i}^{(t)} \leftarrow \calA_{\s{\hat{R}}_i}^{(t)} \cup \ppp{n_i}$.
\item  Compute for $0 \leq \ell \leq L-1$: 
    \begin{align}
        \hat{x}_\ell^{(t+1)}\triangleq \frac{1}{\abs{\calA_\ell^{(t)}}} \sum_{n_i \in \calA_\ell^{(t)}} n_i \label{eqn:generalizedEfNEqnMulti}.
    \end{align}
\end{enumerate}
\end{algorithm}

\begin{algorithm}[t!]
  \caption{\texttt{Expectation-maximization (Multi-iteration soft-assignment)} \label{alg:generalizedEfNSoftMultiIteration}}
\textbf{Input:} Initial templates $\ppp{\hat{x}_\ell^{(0)}}_{\ell=0}^{L-1}$, observations $\ppp{n_i}_{i=0}^{M-1}\stackrel{\s{i.i.d.}}{\sim} \calN \p{0, I_{d \times d}}$, and number of iterations $T$.\\
\textbf{Output:} Expectation-maximization estimates $\ppp{\hat{x}_\ell^{(T)}}_{\ell=0}^{L-1}$.\\
\textbf{Each Iteration, for $t=0\ldots,T-1$:}
\begin{enumerate}
    \item For $i=0,\ldots,M-1$, and $\ell = 0, \dots, L-1$, compute
    \begin{align}
        p_{i,t}^{(\ell)} \triangleq \frac{\exp \p{- \frac{1}{2} \norm{n_i - \hat{x}_\ell^{(t)}}_F^2 }}{\sum_{r=0}^{L-1} \exp \p{- \frac{1}{2} \norm{n_i - \hat{x}_r^{(t)}}_F^2 }}, \label{eqn:softMaxProbabilityDistNorm}
    \end{align}
\item  Compute for $0 \leq \ell \leq L-1$: 
    \begin{align} 
       \hat{x}_\ell^{(t+1)} = \frac{\sum_{i=0}^{M-1}{ p_{i,t}^{(\ell)}n_i}}{\sum_{i=0}^{M-1}{p_{i,t}^{(\ell)}}},\label{eqn:softAssignmentUpdateStepMulti} 
\end{align}
\end{enumerate}
\end{algorithm}

\subsection{Theoretical results}

In the previous sections, we analyzed the correlation between two consecutive iterations of the $K$-means and EM algorithms. To extend these results to multiple iterations, we analyze correlations across multiple iterations.
In general, correlation is not transitive, that is, given normalized vectors $u, v, w$ such that $\langle u,v \rangle > 0$ and $\langle v,w \rangle > 0$, it does not necessarily follow that $\langle u,w \rangle > 0 $. Nonetheless, we show that for strongly correlated vectors, as defined below, a certain property can be proved. The following result holds, in fact, for any set of iterative estimators $\{\hat{x}_\ell^{(t)}\}_{\ell\in[L],t\in\mathbb{N}}$ of templates $\{x_\ell\}_{\ell\in[L]}$.

\begin{proposition}
[Lower bound on confirmation bias] \label{thm:lowerBoundAfterTiterations}
Let $\{x_\ell\}_{\ell\in[L]}$ be a set of templates, and $\{\hat{x}_\ell^{(t)}\}_{\ell\in[L]}$ denote a set of estimators, for each iteration index $t\in[T]$. We assume that at $t=0$, the estimation procedure is initialized at $\hat{x}_\ell^{(0)} = x_\ell$, for $\ell\in[L]$. Assume $\| \hat{x}_\ell^{(t)}\|_2 = 1$, for all $\ell\in[L]$ and $t\in[T]$. We further assume that for all $t\in[T]$,
\begin{equation}
    \langle \hat{x}_\ell^{(t+1)}, \hat{x}_\ell^{(t)} \rangle \geq 1 - \epsilon,    
\end{equation}
where $0 < \epsilon < 2/T^2$. Then,
\begin{equation}
    \langle \hat{x}_\ell^{(T)}, \hat{x}_\ell^{(0)} \rangle \geq 1 - T^2 \epsilon. \label{eqn:lowerBoundMultipleIterations}
\end{equation}
\end{proposition}

This proposition suggests that if the correlation between every two consecutive iterations is at least $1 - \epsilon$, then after $T$ iterations, the correlation between $\hat{x}_\ell^{(T)}$ and the initial estimate remains lower-bounded by $1 - T^2 \epsilon$.
As we will demonstrate in Section \ref{sec:numericalSimulationMulti}, this scaling law of $1-T^2 \epsilon$ holds empirically.

\paragraph{Closed-form convergence.} There are two particular regimes of interest, in which $K$-means and EM converge after a single iteration to their final value, which is biased towards the initial hypotheses. First, we analyze the case of two templates in the hard assignment algorithm.
\begin{proposition}
\label{thm:LequalsTwoMultiIteration}
Fix $L=2$, and assume the conditions of Theorem \ref{thm:hardAssignmentTwoHypoteses} are satisfied.  Denote the initialization by $\hat{x}_\ell^{(0)} = x_\ell$ for $\ell = 0,1$. Assume $M \to \infty$ and $T<\infty$. Then, the $K$-means estimates converge to their final values after a single iteration, as given by the right-hand side of \eqref{eqn:hardAssignTwoHypoteses1} and \eqref{eqn:hardAssignTwoHypoteses2}.
\end{proposition}

Proposition \ref{thm:LequalsTwoMultiIteration} implies that in the case of two templates $L=2$ and an asymptotically large number of observations $M \to \infty$, the $K$-means algorithm converges  after a single iteration. These estimators, given by \eqref{eqn:hardAssignTwoHypoteses1} and \eqref{eqn:hardAssignTwoHypoteses2}, remain biased toward the initial templates.
We now turn our attention to the regime where $L \to \infty$, focusing on the soft-assignment algorithm.
\begin{proposition}
\label{thm:infiniteLmultiIteration}
Denote by $\hat{x}_\ell^{(t)}$ the $t$-th iteration of soft-assignment EM algorithm (Algorithm \ref{alg:generalizedEfNSoftMultiIteration}) and denote the initialization by $\hat{x}_\ell^{(0)}$. Assume $M, d, L \to \infty$, satisfying the conditions of Theorem \ref{thm:softAssignmentAsymptoticL}. Then, for $1 \leq T < \infty$ iterations, we have 
    \begin{align}
       \hat{x}_\ell^{(T)} - \hat{x}_\ell^{(0)} \rightarrow 0.
    \end{align}
\end{proposition}

Proposition \ref{thm:infiniteLmultiIteration} suggests that when the number of templates is asymptotically large and the conditions of Theorem \ref{thm:softAssignmentAsymptoticL} are satisfied, the EM algorithm converges to its final value after a single iteration, determined by the initial hypotheses.

\subsection{Numerical simulations} \label{sec:numericalSimulationMulti}

Figure \ref{fig:5} and Figure \ref{fig:6} illustrate the confirmation bias phenomenon of the $K$-means algorithm over multiple iterations. Specifically, Figure \ref{fig:5} demonstrates this bias in the context of 2D images. In these experiments, we use $L=96$ hypotheses, with the first 12 corresponding to the notable mathematicians depicted in Figure \ref{fig:1}. As observed, although the bias diminishes as iterations progress, it remains strongly correlated with the initial templates.

\begin{figure}[t]
    \centering
    \includegraphics[width=1.0\linewidth]{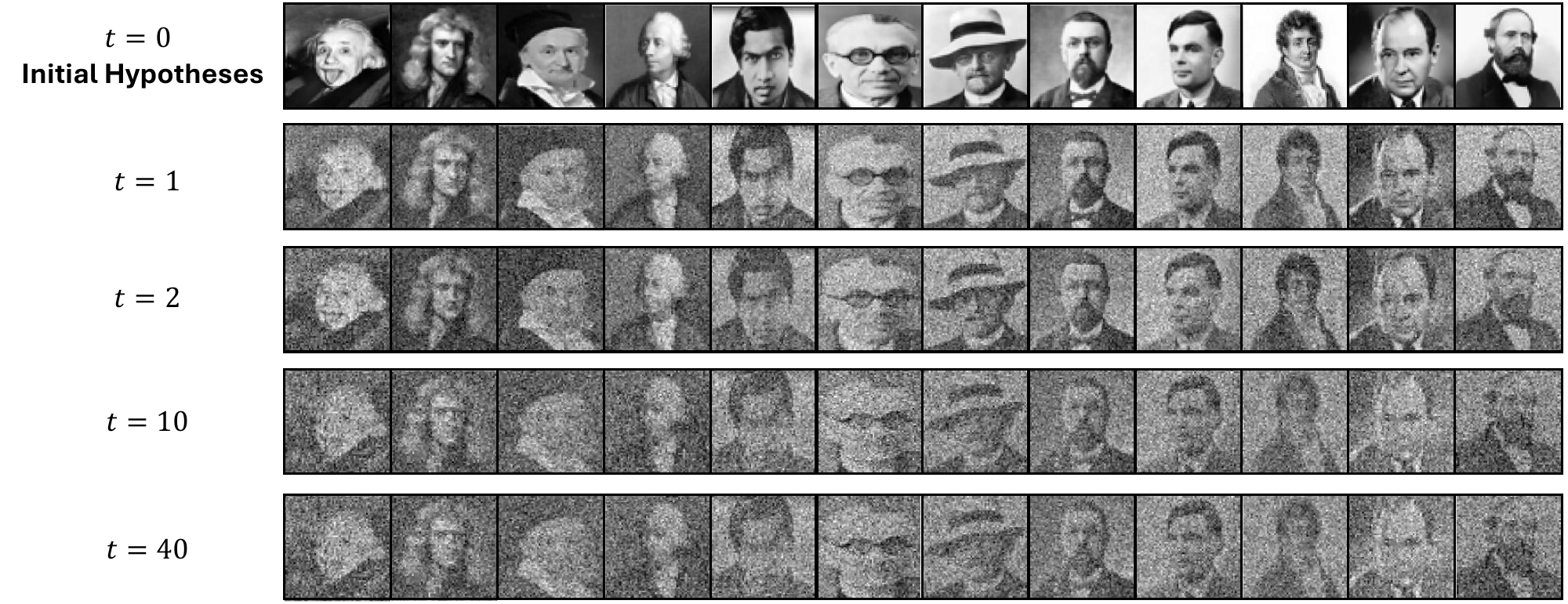}
    \caption{\textbf{Confirmation bias in the multi-iteration hard-assignment ($K$-means) algorithm.} The different rows illustrate the estimators as a function of the iteration number, $t$. The $K$-means algorithm converges to a local minimum after $t = 40$ iterations. The simulation parameters are as follows: the number of observations is $M = 2 \times 10^5$, the image size is $d = 60 \times 60$, and the number of templates is $L = 96$. Only the first 12 templates are shown, while the remaining templates were drawn from a uniform distribution over the sphere. As observed, confirmation bias reduces as the number of iterations increases, but it remains significant even after convergence to the local minimum ($t = 40$).}
    \label{fig:5} 
\end{figure}

Figure \ref{fig:6} illustrates the impact of the number of observations $M$ and the number of hypotheses $L$ on confirmation bias. In Figure \ref{fig:6}(a), we consider the case where $L = 2$ and examine how the number of observations $M$ influences the correlation between the estimator at the $t$-th iteration and the corresponding initial templates. As observed, larger $M$ leads to a stronger confirmation bias over a greater number of iterations, as predicted by Proposition \ref{thm:LequalsTwoMultiIteration}. In the initial iterations, the correlation follows the scaling law described in \eqref{eqn:lowerBoundMultipleIterations}. However, as $T \to \infty$, the estimators eventually converge to their local minima. Between these two extremes, there is a transition phase from the initial iteration to the asymptotic behavior.

In Figure \ref{fig:6}(b), we investigate the effect of the number of hypotheses $L$ on confirmation bias as a function of the number of iterations. The templates are generated based on the scheme presented in \eqref{eqn:generationOfXell}. As shown, in the initial iterations, there is a rapid transition toward the convergence region. As $L$ increases, the confirmation bias also increases and converges to larger correlation values, as expected from Proposition \ref{thm:infiniteLmultiIteration}.

\begin{figure}[t]
    \centering
    \includegraphics[width=1.0\linewidth]{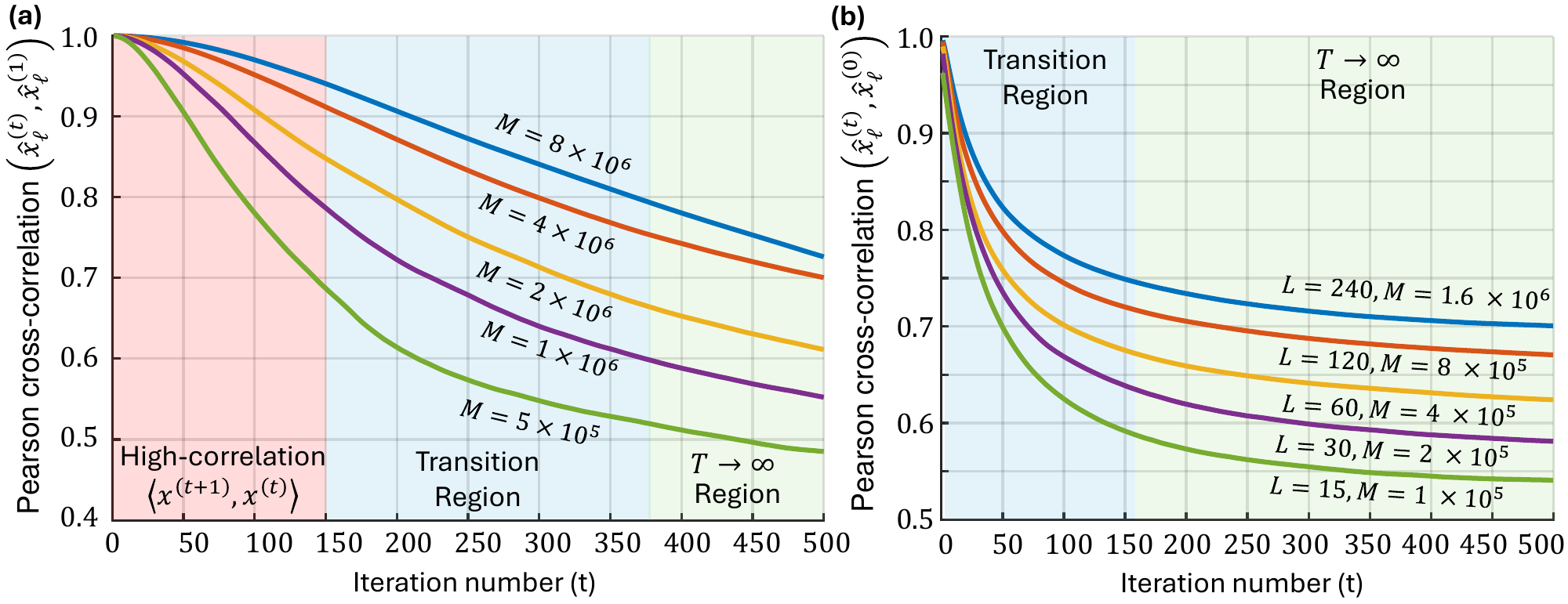}
    \caption{\textbf{Confirmation bias in the multi-iteration hard-assignment estimation process as a function of the number of observations $M $ and the number of templates $L$.} 
    \textbf{(a)} The correlation between $\hat{x}^{(t)}$, and $\hat{x}^{(1)}$ for two templates ($L = 2$) of the multi-iteration hard-assignment process is shown as a function of the iterations. In the initial iterations, the correlation follows the scaling law predicted by \eqref{eqn:lowerBoundMultipleIterations} as a function of $t$ and $M$. Following this phase, a transition region emerges, leading to the convergence of the algorithm toward local minima.  
    \textbf{(b)} The correlation between $\hat{x}^{(t)}$ and $\hat{x}^{(0)}$ of the multi-iteration hard-assignment process is shown as a function of the iteration number $t$ for different numbers of hypotheses. Each curve represents a different number of hypotheses, while ensuring that the ratio $M/L$ remains constant across curves so that each class, on average, contains the same number of observations. As observed, increasing the number of templates leads to a stronger correlation between the estimator and the initial templates.  
    The numerical experiments were conducted on a signal of dimension $d = 80$, with templates generated according to \eqref{eqn:generationOfXell}.}
    \label{fig:6} 
\end{figure}

\section{Perspective and future work}\label{sec:outlook}

In this paper, we have addressed the problem of confirmation bias in GMMs and examined its statistical properties. Our primary objective is to enhance our understanding of confirmation bias, particularly in scientific fields where observations exhibit a low SNR.
For instance, confirmation bias (also known as model bias) is a significant issue in single-particle cryo-electron microscopy—a leading technique for determining the spatial structure of biological molecules—where the data is often heavily contaminated by noise~\cite{henderson2013avoiding,bendory2020single}.

We next delineate important future extensions of our results.

\paragraph{Mixture models.} 
In this study, we have concentrated on a specific GMM with known variances and weights. While this approach is instructive, it does not fully capture the flexibility of GMMs. In future work, we plan to extend our analysis to more general GMMs to investigate whether the initial templates influence not only the means (as demonstrated in this study) but also the variances and other parameters.
Additionally, we aim to explore the presence of confirmation bias in other types of mixture models. While GMMs are associated with the Euclidean distance metric, other distributions correspond to a variety of metrics, necessitating further analysis.

\paragraph{Multi-iteration $K$-means and expectation-maximization.}  
This work focuses on the iterative algorithms $K$-means (hard-assignment) and EM (soft-assignment). While most of our analysis concerned a single iteration of these algorithms, Section \ref{sec:multiIteration} extended the discussion to the multi-iteration case, both theoretically and empirically, for specific scenarios. 
Several open questions remain regarding the extension to the multi-iteration setting, which we leave for future work. In particular, under the model considered in this study (with pure noise observations), what are the asymptotic values of the mean estimators as $T \to \infty$? How do these values depend on the number of observations $M$, the number of templates $L$, and the correlations between the templates?
Another key area of interest is the behavior of these algorithms in low SNR environments, where observations are generated as a combination of a signal and high levels of noise (as opposed to pure noise as in this work). In such environments, the algorithms tend to exhibit bias, with most observations being assigned to only a few clusters (Gaussian components), while other clusters remain nearly empty. This phenomenon is commonly referred to as the ``rich get richer" effect~\cite{sorzano2010clustering}.

\paragraph{Models with algebraic structure.}
In certain applications, such as multi-reference alignment \cite{bendory2017bispectrum,perry2019sample,bandeira2023estimation,bendory2024sample}, the templates follow an underlying algebraic structure, such as translations or rotations (see, e.g.,~\cite{bendory2020single,balanov2024einstein}). In this setting, all Gaussian centroids are derived from a single underlying template, transformed according to a specific algebraic structure (e.g., an orbit of a group).
A key application of such models is the computational problem of single-particle cryo-electron microscopy (cryo-EM)—a leading technology for reconstructing the 3D spatial structure of biological molecules. It is well established that the output of the EM and $K$-means algorithms in cryo-EM is significantly influenced by initialization~\cite{singer2020computational, bendory2020single}. This has led to extensive research into robust initialization methods, often referred to as ab initio algorithms, which employ various techniques such as the method of moments \cite{levin20183d}, common lines \cite{singer2011three}, and frequency marching \cite{barnett2017rapid}. However, none of these studies analyze the inherent bias of $K$-means and EM towards their initial points in the general case.

Our results for GMMs without structural constraints naturally extend to settings like multi-reference alignment and cryo-EM. However, we expect that the algebraic properties of the observations will introduce additional distinct characteristics in the induced conformation bias, analogous to how estimation rates differ between unconstrained GMMs and GMMs with algebraic constraints (see, e.g.,~\cite{bandeira2023estimation,bendory2017bispectrum,perry2019sample,bendory2024sample}). Investigating the role of algebraic structures, particularly group actions, in conformation bias is anticipated to provide fundamental insights into cryo-EM, where conformation bias remains a significant challenge, closely linked to the ``Einstein from noise" problem~\cite{balanov2024einstein}.

\paragraph{Confirmation bias in Bayesian settings.} A possible extension of this work involves analyzing confirmation bias within a Bayesian framework. For example, instead of assuming purely noisy observations, we could model the observations as being generated with probability $1-\epsilon$ from pure noise and with probability $\epsilon$ from the $L$ templates. In this setting, the key question is how the mean estimators would correlate with the initial templates, given that the true underlying statistics of the observations follow this mixture model rather than pure noise.

\paragraph{Connection to information-theoretic metrics and extension to other noise models.}
As discussed in Section~\ref{sec:2_2}, the geometric conditions in our framework, formulated in terms of inner products, can be equivalently expressed using the KL divergence. For instance, conditions of the form $\langle x_{\ell_1}, x_{\ell_2} \rangle \leq \langle y_{\ell_1}, y_{\ell_2} \rangle$ translate into reversed inequalities between the KL divergences of the corresponding Gaussian components. This equivalence highlights the potential to generalize our framework beyond isotropic Gaussian settings by replacing inner-product-based assumptions with divergence-based ones. A promising direction for future work is to explore whether this correspondence persists in the presence of non-isotropic Gaussian noise, where KL divergence or alternative information-theoretic measures may offer a more suitable basis for analyzing confirmation bias and estimation accuracy.

\paragraph{Finite-sample extensions.}
This work focuses on the asymptotic regime in which the number of observations $M \to \infty$. A compelling direction for future research is to extend the analysis to the finite-sample setting. In particular, it would be valuable to characterize the minimum number of observations required to detect confirmation bias with a prescribed confidence level $1 - \alpha$, drawing connections to hypothesis testing and information-theoretic sample complexity. Addressing this question would require more refined probabilistic tools, such as Berry--Esseen-type inequalities or modern non-asymptotic concentration techniques. We leave this as an open and promising direction for future investigation.

\section*{Acknowledgment}
T.B. is supported in part by BSF under Grant 2020159, in part by NSF-BSF under Grant 2019752, in part by ISF under Grant 1924/21, and in part by a grant from The Center for AI and Data Science at Tel Aviv University (TAD).
W.H. is supported by ISF under Grant 1734/21. 

\bibliographystyle{plain}


\begin{appendices}

{\centering{\section*{Appendix}}}

\section{Preliminaries}\label{sub:Preliminaries}
Before delving into the proofs, we start with a few definitions and auxiliary results which will aid our main derivations. We use $\xrightarrow[]{\calD}$, $\xrightarrow[]{\calP}$, and $\xrightarrow[]{\s{a.s.}}$, to denote the convergence of sequences of random variables in distribution, in probability, and almost surely, respectively. We use the indicator function $\mathbbm{1}_{A}\p{x}$ to indicate that $\mathbbm{1}_{A}\p{x} = 1$ if and only if $x \in A$. 

\subsection{The Gaussian vector induced by the templates}
Define the $L$-dimensional random vector $\s{S}_i^{(x)}$ as,
\begin{align}
    {S}_{i, \ell}^{(x)} \triangleq \frac{\langle{n_i, x_{\ell}\rangle}}{\norm{x_\ell}_2}\label{eqn:SxDef},
\end{align}
for $\ell \in \pp{L}$. For simplicity, we will omit the index $i$ in ${S}_{i, \ell}^{(x)}$ when the dependence is clear from the context. By our model assumptions, it is clear that $\s{S}^{(x)}$ is a zero-mean Gaussian random vector with covariance matrix $\Sigma_{\s{S}}$, whose entries are,
\begin{align}
 [\Sigma_{\s{S}}]_{\ell_1+1,\ell_2+1}  = \sigma_{\ell_1,\ell_2}^{(x)} = \frac{\langle{ x_{\ell_1}, x_{\ell_2}\rangle}}{\norm{x}_2^2},
\end{align}
for $\ell_1,\ell_2\in[L]$. Since we always assume that $x_{\ell_1} \neq x_{\ell_2}$, whenever $\ell_1\neq\ell_2$, it follows that $\Sigma_{\mathrm{S}}$ is positive definite. 
Next, let us denote the function $p_{\ell, \beta}:\mathbb{R}^L \to \mathbb{R}$, parameterized by $\beta\in\mathbb{R}_+$, as follows,
\begin{align}
   p_{\ell,\beta}\p{\s{Q}} \triangleq \frac{ \exp{\p{\beta Q_\ell}}}{\sum_{r=0}^{L-1}\exp{\p{\beta Q_r}}}, \label{eqn:softMaxFuncDef1}
\end{align}
for $\ell\in[L]$, where $\s{Q} = (Q_0,Q_1,\ldots,Q_{L-1})\in\mathbb{R}^L$. The function $p_{\ell,\beta}$ is known as the \emph{softmax function}. Finally, we define the log-sum-exp function  $F_\beta: \mathbb{R}^L \to \mathbb{R}$, as follows,
    \begin{align}
        F_\beta \p{\s{Q}} \triangleq \frac{1}{\beta}\log \p{\sum_{\ell=0}^{L-1} \exp\p{\beta Q_\ell}}. \label{eqn:logSumExpDef}
    \end{align}

\subsection{Asymptotic number of observations}
In this subsection, we find an asymptotic expression for the hard-assignment and soft-assignment estimators, as $M \to \infty$. We denote the set of all observations by $\calN \triangleq \ppp{n_i}_{i=0}^{M-1} \subset \mathbb{R}^d$. 

\paragraph{Hard-assignment.} For the hard-assignment estimator, it turns out that it is much more convenient to work with the following set of Voronoi regions, compared to $\{\calA_\ell\}_{\ell\geq0}$, as defined in Algorithm \ref{alg:generalizedEfNhard}.
Specifically, given the set of templates $\calX\triangleq\{x_\ell\}_{\ell\geq0}$, for every $\ell \in \pp{L}$, define,
    \begin{align}
        \nonumber \calV_\ell & \triangleq \ppp{
        v \in \mathbb{R}^d:  { \norm{v-x_\ell}_2^2 \leq \underset{k \neq \ell} {\min} \norm{v-x_k}_2^2 } } \\ & =  \ppp{
        v \in \mathbb{R}^d: {{\langle{v, x_\ell\rangle}} \geq \underset{k \neq \ell} \max {\langle{v, x_k\rangle}} } }\label{eqn:VlDef}.
    \end{align}
Note that $\{\calV_\ell\}_{\ell\in[L]}$ are deterministic, and they form a partition of $\mathbb{R}^d$. Furthermore, it is clear that $\calA_\ell \subset \calV_\ell$, for every $l \in \pp{L}$, $\cup_{\ell=0}^{L-1} \calA_\ell = \calN$, and $\calA_k \cap \calV_\ell = \emptyset$, for $k \neq \ell$. 
Following these properties, it follows that $\mathbbm{1}_{\ppp{n_i \in \calA_\ell}} = \mathbbm{1}_{\ppp{n_i \in \calV_\ell}}$, for every observation $n_i \in \calN$.

Let us see the implications of the above definitions. 
Recall the hard-assignment estimator in Algorithm~\ref{alg:generalizedEfNhard}. From \eqref{eqn:generalizedEfNEqn}, we have,
    \begin{align}
        {\langle{\hat{x}_\ell}, x_\ell\rangle} &= \frac{1}{\abs{\calA_\ell}} \sum_{n_i \in \calA_\ell} {\langle{n_i}, x_\ell\rangle} \\
        &= \frac{1}{\abs{\calA_\ell}} \sum_{n_i \in \calA_\ell} {\underset{0 \leq r \leq L-1} \max {\langle{n_i, x_r\rangle}}}\\
        &= \frac{1}{\abs{\calA_\ell}} \sum_{i=0}^{M-1} \pp{{\underset{0 \leq r \leq L-1} \max {\langle{n_i, x_r\rangle}}} \mathbbm{1}_{\ppp{n_i \in \calV_\ell}}}, \label{eqn:hardAssignmentAsymptotic1}
    \end{align}
where the second equality follows from the definition of $\calA_\ell$ in \eqref{eqn:OptShiftRealSpace}, and the third equality is from the definition of $\calV_\ell$. 
Then, we have the following Lemma:

\begin{lem}[Convergence of the hard-assignment estimator as $M \to \infty$]\label{lem:convergenceOfHardAssign}
As $M \to \infty$, the hard-assignment estimator converges almost surely to 
\begin{align}
    \hat{x}_\ell \xrightarrow[]{\text{a.s.}} \frac{\mathbb{E}\left[n_1 \, \mathbbm{1}_{\{n_1 \in \mathcal{V}_\ell\}}\right]}{\mathbb{P}\left(n_1 \in \mathcal{V}_\ell\right)}. \label{eqn:hardAsymptoticObservationsEstimatorMain}
\end{align}
Moreover, the confirmation bias between the hard-assignment estimator $\hat{x}_\ell$ and the corresponding template $x_\ell$ converges almost surely as $M \to \infty$ to
    \begin{align}
        {\langle{\hat{x}_\ell}, x_\ell\rangle} \xrightarrow[]{\s{a.s.}} \frac{\mathbb{E}\pp{\langle{n_1, x_\ell\rangle} \mathbbm{1}_{\ppp{n_1 \in \calV_\ell}}}}{\mathbb{P}\pp{n_1 \in \calV_\ell}} = \frac{\mathbb{E}\pp{{\underset{0 \leq r \leq L-1} \max {\langle{n_1, x_r\rangle}}} \mathbbm{1}_{\ppp{n_1 \in \calV_\ell}}}}{\mathbb{P}\pp{n_1 \in \calV_\ell}}. \label{eqn:hardAsymptoticObservaionsCorrelationMain}
    \end{align}
\end{lem}

\begin{proof}[Proof of Lemma~\ref{lem:convergenceOfHardAssign}]
Rearranging the last term in \eqref{eqn:hardAssignmentAsymptotic1}, we have,
    \begin{align}
        \nonumber {\langle{\hat{x}_\ell}, x_\ell\rangle} & = \frac{1}{\abs{\calA_\ell}} \sum_{i=0}^{M-1} \pp{{\underset{0 \leq r \leq L-1} \max {\langle{n_i, x_r\rangle}}} \mathbbm{1}_{\ppp{n_i \in \calV_\ell}}}\\
        &= \frac{\frac{1}{M} \sum_{i=0}^{M-1} \pp{{\underset{0 \leq r \leq L-1} \max {\langle{n_i, x_r\rangle}}} \mathbbm{1}_{\ppp{n_i \in \calV_\ell}}}}{\abs{\calA_\ell}/M}. \label{eqn:hardAssignmentAsymptotic2}
    \end{align}
Now, by the strong law of large numbers (SLLN) \cite{durrett2019probability}, we have,
    \begin{align}
        \frac{1}{M} \sum_{i=0}^{M-1} \pp{{\underset{0 \leq r \leq L-1} \max {\langle{n_i, x_r\rangle}}} \mathbbm{1}_{\ppp{n_i \in \calV_\ell}}} \xrightarrow[]{\s{a.s.}} \mathbb{E}\pp{{\underset{0 \leq r \leq L-1} \max {\langle{n_1, x_r\rangle}}} \mathbbm{1}_{\ppp{n_1 \in \calV_\ell}}},
    \end{align}
as $M\to\infty$. In addition, by the definition of $\calA_\ell$ in \eqref{eqn:OptShiftRealSpace}, we have,
    \begin{align}
        \frac{\abs{\calA_\ell}}{M} = \frac{1}{M} \sum_{i=0}^{M-1} \mathbbm{1}_{\ppp{n_i \in \calV_\ell}} \xrightarrow[]{\s{a.s.}} \mathbb{E}\pp{\mathbbm{1}_{\ppp{n_1 \in \calV_\ell}}} = \mathbb{P}\pp{n_1 \in \calV_\ell},
    \end{align}
as $M\to\infty$, where the almost sure convergence follows from the SLLN. Since both the numerator and denominator in \eqref{eqn:hardAssignmentAsymptotic2} converge almost surely, and the denominator converges to a positive number, it follows by the continuous mapping theorem \cite{mann1943stochastic, durrett2019probability} that,
    \begin{align}
        {\langle{\hat{x}_\ell}, x_\ell\rangle} \xrightarrow[]{\s{a.s.}} \frac{\mathbb{E}\pp{\langle{n_1, x_\ell\rangle} \mathbbm{1}_{\ppp{n_1 \in \calV_\ell}}}}{\mathbb{P}\pp{n_1 \in \calV_\ell}} = \frac{\mathbb{E}\pp{{\underset{0 \leq r \leq L-1} \max {\langle{n_1, x_r\rangle}}} \mathbbm{1}_{\ppp{n_1 \in \calV_\ell}}}}{\mathbb{P}\pp{n_1 \in \calV_\ell}} \label{eqn:hardAsymptoticObservaionsCorrelation}, 
    \end{align}
as $M\to\infty$. Using the same arguments as above, it can be shown that,
    \begin{align}
        \hat{x}_\ell \xrightarrow[]{\s{a.s.}} \frac{\mathbb{E}\pp{n_1\mathbbm{1}_{\ppp{n_1 \in \calV_\ell}}}}{\mathbb{P}\pp{n_1 \in \calV_\ell}},\label{eqn:hardAsymptoticObservaionsEstimator} 
    \end{align}
again, as $M\to\infty$. 
\end{proof}

\paragraph{Soft-assignment.} We move forward to the soft-assignment estimator in Algorithm~\ref{alg:generalizedEfNsoft}. Using \eqref{eqn:softAssignmentUpdateStep}, and following the same steps as for the hard-assignment estimator, we get,
\begin{align}
    \hat{x}_\ell = \frac{\sum_{i=0}^{M-1}{ p_{i}^{(\ell)}n_i}}{\sum_{i=0}^{M-1}{p_{i}^{(\ell)}}} \xrightarrow[]{\s{a.s.}} \frac{\mathbb{E}\pp{n_1 p_1^{(\ell)}}}{\mathbb{E}[p_1^{(\ell)}]} ,\label{eqn:softAsymptoticObservaionsEstimator}
\end{align}
as $M\to\infty$. Similarly, it follows that the correlation between the soft-assignment estimator and its corresponding template is,
\begin{align}
    {\langle{\hat{x}_\ell}, x_\ell\rangle} \xrightarrow[]{\s{a.s.}} \frac{\mathbb{E}\pp{{\langle{n_1}, x_\ell\rangle} p_1^{(\ell)}}}{\mathbb{E}[p_1^{(\ell)}]} \label{eqn:softAsymptoticObservaionsCorrelation},
\end{align}
as $M\to\infty$.

\subsection{The high-SNR regime} \label{sec:highSnrRegime}
The following result studies the convergence of $p_{\ell,\beta}$ in \eqref{eqn:softMaxFuncDef1}, as $\beta \to \infty$.
\begin{proposition} \label{prop:highSnrRegime}
    Let $p_{\ell,\beta}$ be as defined in \eqref{eqn:softMaxFuncDef1}, and $\s{S}^{(x)}$ in \eqref{eqn:SxDef}. Then,
    \begin{align}
        \lim_{\beta \to \infty} p_{\ell,\beta}(\s{S}^{(x)})  = \mathbbm{1}_{\ppp{n_i \in \calV_\ell}} \label{eqn:piDistForLargeNormalization},
    \end{align}
almost surely, where $\calV_\ell$ is defined in \eqref{eqn:VlDef}.
\end{proposition}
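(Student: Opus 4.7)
The plan is to combine two observations: (i) the softmax at inverse temperature $\beta$ converges pointwise to the argmax indicator as $\beta\to\infty$, and (ii) the random coordinates of $S^{(x)}$ admit a unique maximum almost surely, so no tie-breaking issues arise.

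First, I will verify that $\mathbb{P}[S^{(x)}_{\ell_1}=S^{(x)}_{\ell_2}]=0$ for every $\ell_1\neq\ell_2$. Because all templates share the common norm $\|x\|_2$ and $x_{\ell_1}\neq x_{\ell_2}$ for $\ell_1\neq\ell_2$, the covariance matrix $\Sigma_{\s{S}}$ is positive definite, so each pairwise difference $S^{(x)}_{\ell_1}-S^{(x)}_{\ell_2}$ is a centered Gaussian of strictly positive variance and therefore has a continuous distribution. A union bound over the $\binom{L}{2}$ pairs shows that, almost surely, there is a unique maximizer
\begin{align}
    \ell^\star \;=\; \ell^\star(n_i) \;\triangleq\; \argmax_{r\in[L]} S^{(x)}_r.
\end{align}

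Second, I will condition on this almost sure event and split into two cases. If $\ell=\ell^\star$, rewrite
\begin{align}
    p_{\ell,\beta}\!\left(\s{S}^{(x)}\right) \;=\; \frac{1}{1+\sum_{r\neq\ell}\exp\!\left(\beta\bigl(S^{(x)}_r-S^{(x)}_\ell\bigr)\right)}.
\end{align}
Every exponent in the sum is strictly negative, so each summand tends to zero as $\beta\to\infty$, and consequently $p_{\ell,\beta}(\s{S}^{(x)})\to 1$. Because the templates have equal norm, the definition of $\calV_\ell$ in \eqref{eqn:VlDef} is equivalent to $S^{(x)}_\ell\geq\max_{k\neq\ell}S^{(x)}_k$; so on this event $n_i\in\calV_\ell$ and $\mathbbm{1}_{\{n_i\in\calV_\ell\}}=1$, matching the limit.

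Third, if $\ell\neq\ell^\star$, factor $\exp(\beta S^{(x)}_{\ell^\star})$ from numerator and denominator to obtain
\begin{align}
    p_{\ell,\beta}\!\left(\s{S}^{(x)}\right) \;=\; \frac{\exp\!\left(\beta\bigl(S^{(x)}_\ell-S^{(x)}_{\ell^\star}\bigr)\right)}{\sum_{r}\exp\!\left(\beta\bigl(S^{(x)}_r-S^{(x)}_{\ell^\star}\bigr)\right)}.
\end{align}
The numerator tends to zero while the denominator is bounded below by $1$ (via the $r=\ell^\star$ term), so $p_{\ell,\beta}(\s{S}^{(x)})\to 0$; and correspondingly $n_i\notin\calV_\ell$, so $\mathbbm{1}_{\{n_i\in\calV_\ell\}}=0$. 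The two cases together yield \eqref{eqn:piDistForLargeNormalization} almost surely.

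The only delicate point is the almost sure uniqueness of $\ell^\star$, which is handled immediately by positive-definiteness of $\Sigma_{\s{S}}$; after that the convergence is a routine exponential calculation, with no need for probabilistic machinery beyond the continuity of the Gaussian law.
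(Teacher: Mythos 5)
Your proof is correct and follows essentially the same two-step structure as the paper's: establish almost-sure uniqueness of the maximizer of $\s{S}^{(x)}$ from the non-degeneracy of the pairwise differences, then show the deterministic pointwise convergence of the softmax to the argmax indicator. The only cosmetic difference is that you compute the limit directly by factoring out the dominant exponential, whereas the paper routes through bounds on the log-sum-exp function $F_\beta$ (Lemma~\ref{lem:asymptoticBetaRatio}); both arguments are equivalent in substance.
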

Before we prove the above result, it is useful prove the following auxiliary lemma.  
\begin{lem} \label{lem:asymptoticBetaRatio}
    Let $p_{\ell,\beta}$ be as defined in \eqref{eqn:softMaxFuncDef1}. Let $\s{Q} = (Q_0,Q_1,\ldots,Q_{L-1}) \in \mathbb{R}^L$ and assume that $\underset{0 \leq k \leq L-1} {\max}{Q_k}$ is unique. Then, 
    \begin{align}
        \lim_{\beta \to \infty} p_{\ell,\beta} \p{\s{Q}} = \mathbbm{1}{\ppp{Q_\ell = \underset{0 \leq k \leq L-1} {\max} {Q_k}}}. \label{eqn:convergenceForAsymptoticBetaToIndicator}
    \end{align}
    
\end{lem}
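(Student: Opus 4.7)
The plan is to split into the two exhaustive cases according to whether the fixed index $\ell$ is the (unique) arg-max of $\mathsf{Q}$ or not, and show separately that $p_{\ell,\beta}(\mathsf{Q})$ tends to $1$ in the first case and to $0$ in the second. Let $\ell^{\ast} \triangleq \arg\max_{0\le k\le L-1} Q_k$, which is well-defined by the uniqueness hypothesis, and set $\Delta_k \triangleq Q_k - Q_{\ell^{\ast}}$, so that $\Delta_{\ell^{\ast}} = 0$ and $\Delta_k < 0$ for all $k\neq \ell^{\ast}$.

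First, I would rewrite the softmax in a scale-invariant form by factoring $\exp(\beta Q_{\ell^{\ast}})$ out of the numerator and denominator:
\begin{align}
p_{\ell,\beta}(\mathsf{Q}) \;=\; \frac{\exp(\beta \Delta_\ell)}{\sum_{r=0}^{L-1} \exp(\beta \Delta_r)}.
\end{align}
If $\ell = \ell^{\ast}$, then $\Delta_\ell = 0$ and every other $\Delta_r < 0$, so
\begin{align}
p_{\ell^{\ast},\beta}(\mathsf{Q}) \;=\; \frac{1}{1 + \sum_{r\neq \ell^{\ast}} \exp(\beta \Delta_r)} \;\longrightarrow\; \frac{1}{1+0} \;=\; 1,
\end{align}
as $\beta\to\infty$, since each $\exp(\beta \Delta_r) \to 0$. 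If instead $\ell \neq \ell^{\ast}$, then $\Delta_\ell < 0$ while the denominator is at least $\exp(\beta \cdot 0) = 1$, hence
\begin{align}
0 \;\le\; p_{\ell,\beta}(\mathsf{Q}) \;\le\; \exp(\beta \Delta_\ell) \;\longrightarrow\; 0,
\end{align}
as $\beta\to\infty$. Combining the two cases yields exactly $\mathbbm{1}\{Q_\ell = \max_k Q_k\}$ in the limit, which is \eqref{eqn:convergenceForAsymptoticBetaToIndicator}.

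I do not expect a real obstacle here; the argument is a routine application of the standard fact that softmax degenerates to the hard-max as the inverse-temperature parameter diverges. The only subtle point is that uniqueness of the maximizer is genuinely needed: if two components tie for the maximum, the indicator function on the right-hand side of \eqref{eqn:convergenceForAsymptoticBetaToIndicator} would equal $1$ at both indices, whereas the left-hand side splits the mass (each tied index converges to $1/(\text{number of maximizers})$), so the stated identity would fail. The uniqueness hypothesis in the lemma is exactly what rules this out, and in the intended application (Proposition~\ref{prop:highSnrRegime}) the random vector $\mathsf{S}^{(x)}$ is a non-degenerate Gaussian, so ties occur with probability zero, which is why the convergence there will hold almost surely.
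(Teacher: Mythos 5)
Your proof is correct. It takes a slightly different, and in fact more elementary, route than the paper's: you normalize the softmax by the maximal coordinate, writing $p_{\ell,\beta}(\s{Q}) = \exp(\beta\Delta_\ell)/\sum_r \exp(\beta\Delta_r)$ with $\Delta_r = Q_r - Q_{\ell^\ast} \le 0$, and then read off both cases directly, whereas the paper routes the argument through the log-sum-exp function $F_\beta$ and its standard sandwich bounds $\max_k Q_k \le F_\beta(\s{Q}) \le \max_k Q_k + \beta^{-1}\log L$, deducing first that $\beta^{-1}\log p_{\ell,\beta}(\s{Q}) \to Q_\ell - \max_k Q_k$ and then concluding $p_{\ell,\beta}\to 0$ for non-maximizers and $p_{\ell,\beta}\to 1$ for the unique maximizer via the constraint $\sum_k p_{k,\beta} = 1$. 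Your version buys a cleaner treatment of the winning index (a direct computation rather than the complementarity argument) and an explicit exponential decay bound $p_{\ell,\beta}(\s{Q}) \le e^{\beta\Delta_\ell}$ for losing indices; the paper's version has the mild advantage of introducing the $F_\beta$ bounds, which it reuses elsewhere (e.g., in the proof of Lemma~\ref{lemma:secondDerivativeIsPositive}). Your closing remark about ties --- that the limit would split the mass $1/(\text{number of maximizers})$ among tied indices, so uniqueness is genuinely needed, and that non-degeneracy of the Gaussian in Proposition~\ref{prop:highSnrRegime} supplies it almost surely --- is exactly the role the hypothesis plays in the paper.
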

\begin{proof}[Proof of Lemma~\ref{lem:asymptoticBetaRatio}]
    Recall the definition of $F_\beta \p{\s{Q}}$ in \eqref{eqn:logSumExpDef}. It is well known that \cite{calafiore2014optimization},
        \begin{align}
        \underset{0 \leq k \leq L-1} {\max} Q_k \leq F_\beta \p{\s{Q}} \leq \underset{0 \leq k \leq L-1} {\max} Q_k + \frac{\log L}{\beta}. \label{eqn:FbetaBounds}
    \end{align}
    Next, note that,
    \begin{align}
        p_{\ell,\beta} \p{\s{Q}} &= \exp \pp{\beta Q_\ell - \log \p{\sum_{r=0}^{L-1} \exp\p{\beta Q_r}}}. \label{eqn:representationOfSoftMaxByLogSumExp}
    \end{align}  
    Then, combining \eqref{eqn:FbetaBounds} into \eqref{eqn:representationOfSoftMaxByLogSumExp} leads to,
    \begin{align}
        { Q_\ell - \underset{0 \leq k \leq L-1} {\max} {Q_k} - \frac{\log L}{\beta}} \leq \frac{1}{\beta} \log {p_{\ell,\beta}\p{\s{Q}}}
        \leq { Q_\ell - \underset{0 \leq k \leq L-1} {\max} {Q_k}}.\label{eqn:FbetaBounds2}
    \end{align}
    Taking $\beta \to \infty$, we get,
    \begin{align}
         \lim_{\beta \to \infty} \frac{1}{\beta} \log {p_{\ell,\beta}\p{\s{Q}}} = Q_\ell - \underset{0 \leq k \leq L-1} {\max} {Q_k}. \label{eqn:convergenceForAsymptoticBetaAsDifference}
    \end{align}
    Now, if $Q_\ell \neq \underset{0 \leq k \leq L-1} {\max} {Q_k} $, then clearly,
        \begin{align}
         \lim_{\beta \to \infty}  p_{\ell,\beta}\p{\s{Q}} = 0. \label{eqn:convergenceToZeroOfnotMaximum}
    \end{align}
   Conversely, if $Q_\ell = \underset{0 \leq k \leq L-1} {\max} {Q_k}$, and is unique, then must have that,
   \begin{align}
         \lim_{\beta \to \infty}  p_{\ell,\beta}\p{\s{Q}} = 1,
    \end{align}
    which concludes the proof.
\end{proof}

\begin{proof} [Proof of Proposition~\ref{prop:highSnrRegime}]
Using Lemma \ref{lem:asymptoticBetaRatio}, we can infer that for every realization of $n_i$, such that $\underset{0 \leq k \leq L-1} {\max} {\langle{n_i}, x_k \rangle} / \norm{x}_2$ is unique, we get,
    \begin{align}
        \lim_{\beta \to \infty}  \frac{\exp \p{\beta \frac{{\,\langle{n_i}, x_\ell \rangle}}{\norm{x}_2}}}{\sum_{r=0}^{L-1} \exp \p{\beta\frac{{\,\langle{n_i}, x_r \rangle}}{\norm{x}_2}}} 
        = \mathbbm{1}_{\ppp{n_i \in \calV_\ell}}, \label{eqn:piDistForLargeNormalization2}
    \end{align}
where $\calV_\ell$ is defined in \eqref{eqn:VlDef}. Since the maximum of a Gaussian vector with positive definite covariance matrix is unique with probability one, it follows that \eqref{eqn:piDistForLargeNormalization2} holds almost surely, which proves the desired result.

\end{proof}

\subsection{Soft-assignment positive correlation}
The following result will be used for proving that the correlation between the soft-assignment estimator and its corresponding template is positive (Theorem \ref{thm:hardAssignmentPositiveCorrelation}, second part, for Algorithm \ref{alg:generalizedEfNsoft}).
\begin{proposition}\label{prop:softAssignemtPositivePreli}
Let ${\s{X}} = \p{X_0, X_1,\ldots, X_{L-1}}^T$ be a zero-mean Gaussian vector, with ${\s{X}} \sim \calN \p{0, \Sigma_{\s{X}}}$, where $\mathbb{E}\p{X_i^2} = 1$, for $0 \leq i \leq L-1$. Then, for every $0 \leq l \leq L-1$ and $\beta>0$,
    \begin{align}
       \mathbb{E} \pp{X_\ell \cdot p_{\ell,\beta}\p{\s{X}}} \geq 0 \label{eqn:prop3}.
    \end{align}
If the off-diagonal covariance entries satisfy $\sigma_{ij}\triangleq[\Sigma_{\s{X}}]_{ij}< 1$, for every $i \neq j$, then the inequality in \eqref{eqn:prop3} is strict.
\end{proposition}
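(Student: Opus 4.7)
\textbf{Proof plan for Proposition~\ref{prop:softAssignemtPositivePreli}.}

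The plan is to apply Gaussian integration by parts (Stein's lemma) to rewrite $\E[X_\ell \cdot p_{\ell,\beta}(\s{X})]$ as a sum of manifestly non-negative terms. Recall that for any jointly Gaussian vector $\s{X}\sim\calN(0,\Sigma_{\s{X}})$ and any sufficiently smooth, bounded $f:\R^L\to\R$, one has $\E[X_\ell f(\s{X})] = \sum_{r=0}^{L-1}\sigma_{\ell r}\,\E\bigl[\partial_{x_r}f(\s{X})\bigr]$. The softmax $p_{\ell,\beta}$ is bounded by $1$ and is $C^\infty$, so all regularity conditions are satisfied.

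The next step is to compute the partial derivatives of the softmax. A direct calculation gives
\begin{align}
    \frac{\partial p_{\ell,\beta}}{\partial x_r}(\s{x})
    = \beta\,p_{\ell,\beta}(\s{x})\bigl(\delta_{\ell r} - p_{r,\beta}(\s{x})\bigr),
\end{align}
where $\delta_{\ell r}$ is the Kronecker delta. Substituting into Stein's identity and using $\sigma_{\ell\ell}=1$, we obtain
\begin{align}
    \E\bigl[X_\ell\,p_{\ell,\beta}(\s{X})\bigr]
    &= \beta\,\E\bigl[p_{\ell,\beta}(1-p_{\ell,\beta})\bigr]
       - \beta\sum_{r\neq \ell}\sigma_{\ell r}\,\E\bigl[p_{\ell,\beta}\,p_{r,\beta}\bigr].
\end{align}
Using the softmax identity $\sum_{r=0}^{L-1}p_{r,\beta}=1$, we rewrite $p_{\ell,\beta}(1-p_{\ell,\beta}) = p_{\ell,\beta}\sum_{r\neq\ell}p_{r,\beta}$, and combine the two sums into the clean expression
\begin{align}
    \E\bigl[X_\ell\,p_{\ell,\beta}(\s{X})\bigr]
    = \beta\sum_{r\neq \ell}(1-\sigma_{\ell r})\,\E\bigl[p_{\ell,\beta}(\s{X})\,p_{r,\beta}(\s{X})\bigr].
    \label{eqn:planMainIdentity}
\end{align}

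Now the positivity is immediate. Since $\Sigma_{\s{X}}$ is positive semidefinite with unit diagonal, the Cauchy-Schwarz inequality yields $\sigma_{\ell r}\leq 1$ for every $r$, so each factor $1-\sigma_{\ell r}$ in \eqref{eqn:planMainIdentity} is non-negative; meanwhile each $p_{\cdot,\beta}$ is non-negative by construction. Together with $\beta>0$, this proves \eqref{eqn:prop3}. For the strict inequality, note that $p_{\ell,\beta}(\s{X})$ and $p_{r,\beta}(\s{X})$ are strictly positive almost surely (they are ratios of exponentials), hence $\E[p_{\ell,\beta}\,p_{r,\beta}]>0$; if in addition $\sigma_{\ell r}<1$ for all $r\neq\ell$, every summand in \eqref{eqn:planMainIdentity} is strictly positive, which gives strictness.

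The only potential obstacle is justifying the use of Stein's lemma here, but since $|p_{\ell,\beta}|\le 1$ and its partials are bounded by $\beta$, the standard version applies without issue. Thus the proof reduces, essentially, to the algebraic rearrangement leading to \eqref{eqn:planMainIdentity}.
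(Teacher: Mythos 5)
Your proof is correct and follows essentially the same route as the paper's: Gaussian integration by parts applied to the softmax, using the derivative $\partial_{x_r}p_{\ell,\beta}=\beta p_{\ell,\beta}(\delta_{\ell r}-p_{r,\beta})$ together with $\sum_r p_{r,\beta}=1$. Your final rearrangement into $\beta\sum_{r\neq\ell}(1-\sigma_{\ell r})\,\mathbb{E}[p_{\ell,\beta}p_{r,\beta}]$ is in fact slightly cleaner than the paper's bound via $\max_k\mathbb{E}(X_\ell X_k)$ (which, taken over all $k$ including $k=\ell$, equals $1$ and does not immediately yield strictness); your identity — which the paper itself only writes down in the proof of the subsequent hard-assignment proposition — makes both the non-negativity and the strict inequality immediate.
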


To prove proposition \ref{prop:softAssignemtPositivePreli}, we use the following known Gaussian integration by parts lemma \cite[Exercise 13.3]{boucheron2013concentration},\cite[Lemma 2.1]{ross2011fundamentals}.

\begin{lem} \label{lemma:3} Let $F: \mathbb{R}^n \to \mathbb{R}$ be a $C^1$ function, and ${\s{X}} = \p{X_0, X_1,\ldots, X_{n-1}}^T$ be a zero-mean Gaussian random vector, such that for any $a > 0$, $\lim_{\|\s{X}\| \to \infty} F(\s{X}) \exp \p{-a \|\s{X}\|^2} =0$. Then, for every $0 \leq i \leq n-1$,
    \begin{align}
       \mathbb{E} \pp{X_i F\p{\s{X}}} = \sum_{j=0}^{n-1}\mathbb{E}\p{X_iX_j} \mathbb{E}\p{\frac{\partial F}{\partial x_j} \p{\s{X}}}.
    \end{align}
\end{lem}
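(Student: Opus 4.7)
My plan is to prove the multivariate Stein-type identity by reducing it to the scalar Gaussian integration by parts on a decorrelated version of $\s{X}$. First, in the scalar case, for $Z \sim \mathcal{N}(0,1)$ and any $C^1$ function $f:\mathbb{R}\to\mathbb{R}$ of moderate growth, integration by parts against the density $\phi(z) = (2\pi)^{-1/2} e^{-z^2/2}$ together with the identity $\phi'(z) = -z\,\phi(z)$ yields
\begin{align*}
\mathbb{E}[Z f(Z)] \;=\; -\int_{\mathbb{R}} f(z)\,\phi'(z)\,dz \;=\; \int_{\mathbb{R}} f'(z)\,\phi(z)\,dz \;=\; \mathbb{E}[f'(Z)],
\end{align*}
provided that the boundary terms $f(z)\phi(z)$ vanish as $|z|\to\infty$. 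Applying this identity coordinatewise (conditioning on the remaining components and using Fubini) upgrades it to the independent-case statement $\mathbb{E}[Z_k G(\s{Z})] = \mathbb{E}[\partial_k G(\s{Z})]$ for $\s{Z}\sim\mathcal{N}(0, I_n)$ and any $C^1$ function $G:\mathbb{R}^n\to\mathbb{R}$.

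To handle a general zero-mean Gaussian $\s{X}$ with covariance $\Sigma$, I would factor $\Sigma = AA^T$ (which always exists for positive semidefinite $\Sigma$, e.g., by Cholesky or symmetric square root) and write $\s{X} = A\s{Z}$ with $\s{Z} \sim \mathcal{N}(0,I_n)$. Then $X_i = \sum_k A_{ik}Z_k$, so with the auxiliary function $G(\s{z}) \triangleq F(A\s{z})$ I obtain
\begin{align*}
\mathbb{E}[X_i F(\s{X})] \;=\; \sum_{k=0}^{n-1} A_{ik}\,\mathbb{E}[Z_k G(\s{Z})] \;=\; \sum_{k=0}^{n-1} A_{ik}\,\mathbb{E}[\partial_k G(\s{Z})],
\end{align*}
by the independent-case identity applied to $G$. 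The chain rule gives $\partial_k G(\s{z}) = \sum_j A_{jk}\,(\partial_j F)(A\s{z})$, and substituting back,
\begin{align*}
\mathbb{E}[X_i F(\s{X})] \;=\; \sum_{j=0}^{n-1} \Bigl(\sum_{k=0}^{n-1} A_{ik}A_{jk}\Bigr)\,\mathbb{E}\bigl[(\partial_j F)(\s{X})\bigr] \;=\; \sum_{j=0}^{n-1} \Sigma_{ij}\,\mathbb{E}\bigl[(\partial_j F)(\s{X})\bigr],
\end{align*}
which is the claimed identity, since $\Sigma_{ij} = \mathbb{E}[X_i X_j]$.

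The main technical obstacle, and the reason the result requires a qualifier in any fully rigorous treatment, is the growth condition on $F$: for the scalar integration by parts to be valid, the boundary term $f(z)\phi(z)$ must vanish at infinity and all relevant expectations must be finite. The standard convention in Stein-type identities is to assume that $F$ and $\nabla F$ grow at most polynomially (or, more generally, are dominated by $e^{\alpha|\s{x}|^2}$ with $\alpha$ smaller than the Gaussian decay rate), a condition that is trivially satisfied by the bounded softmax-type functions to which Lemma~\ref{lemma:3} is applied in this paper. Once this hypothesis is assumed, the argument above is a routine combination of the scalar Gaussian Stein lemma with the linear change of variables $\s{X} = A\s{Z}$.
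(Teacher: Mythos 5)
Your proof is correct. The paper does not actually prove Lemma~\ref{lemma:3}; it invokes it as a known Gaussian integration-by-parts (Stein) identity with a citation to \cite{ross2011fundamentals}, so there is no in-paper argument to compare against — your reduction via the factorization $\Sigma = AA^T$, the scalar identity $\mathbb{E}[Zf(Z)]=\mathbb{E}[f'(Z)]$ applied coordinatewise, and the chain rule is precisely the standard derivation that the citation points to. Your remark on the growth hypothesis is also well placed: the lemma as stated for arbitrary $C^1$ functions needs an integrability/decay qualifier, but the softmax functions $p_{\ell,\beta}$ to which it is applied in this paper are bounded with bounded derivatives, so the boundary terms vanish and all expectations are finite.
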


\begin{proof}[Proof of Proposition~\ref{prop:softAssignemtPositivePreli}]
By Lemma~\ref{lemma:3}, we have
\begin{align}
   \mathbb{E}\left[X_\ell \cdot p_{\ell,\beta}(\s{X})\right] = \sum_{k=0}^{L-1} \mathbb{E}\left[X_\ell X_k\right] \mathbb{E}\left[\frac{\partial p_{\ell,\beta}}{\partial X_k}(\s{X})\right], \label{eqn:gaussianIntegByPartsForPl}
\end{align}
where we note that $p_{\ell,\beta}$ satisfies the conditions required by Lemma~\ref{lemma:3}. Specifically, since $0 \leq p_{\ell,\beta}(\s{X}) \leq 1$ for all \( \s{X} \in \mathbb{R}^L \), and $p_{\ell,\beta}$ is continuously differentiable, it follows that
\begin{align}
    \lim_{\s{X} \to \infty} p_{\ell,\beta}(\s{X}) \exp(-a \|\s{X}\|^2) = 0,
\end{align}
for any $a > 0$. Therefore, the conditions of Lemma~\ref{lemma:3} are satisfied, and the integration by parts identity applies. 
It is clear that,
    \begin{align}
       \frac{\partial p_{\ell,\beta}}{\partial x_k}(\s{X}) = \beta\p{-p_{\ell,\beta}(\s{X}) p_{k,\beta}(\s{X}) + \delta_{\ell k}p_{\ell,\beta}(\s{X})} \label{eqn:derivativeOfPl},
    \end{align}
where $\delta_{kn}=1$ if $k=n$, and zero otherwise. Thus, substituting \eqref{eqn:derivativeOfPl} in \eqref{eqn:gaussianIntegByPartsForPl} leads to,
    \begin{align}
       \nonumber \frac{1}{\beta} \mathbb{E}\pp{X_\ell \cdot p_{\ell,\beta}\p{X}} & = \sum_{k=0}^{L-1}\mathbb{E}\p{X_\ell X_k}\mathbb{E}\p{-p_{\ell,\beta}(\s{X}) p_{k,\beta}(\s{X}) + \delta_{\ell k}p_{\ell,\beta}(\s{X})} \\ \nonumber &  = \mathbb{E}\p{X_\ell^2}\mathbb{E}\p{p_{\ell,\beta}(\s{X})} - \sum_{k=0}^{L-1}\mathbb{E}\p{X_\ell X_k}\mathbb{E}\p{p_{\ell,\beta}(\s{X}) p_{k,\beta}(\s{X})} \\ & = \mathbb{E}\p{p_{\ell,\beta}(\s{X})} - \sum_{k=0}^{L-1}\mathbb{E}\p{X_\ell X_k}\mathbb{E}\p{p_{\ell,\beta}(\s{X}) p_{k,\beta}(\s{X})} \label{eqn:gaussianIntegByPartsExplicit}.
    \end{align}
Since $\sum_{k=0}^{L-1}p_{k,\beta}(\s{X}) = 1$, $p_{k,\beta}(\s{X}) \geq 0$ for every $0 \leq k \leq L-1$, and $\mathbb{E}\p{X_lX_k} \leq 1$ for every $0 \leq \ell,k \leq L-1$, it follows from \eqref{eqn:gaussianIntegByPartsExplicit} that,
    \begin{align}
       \mathbb{E}\pp{X_\ell\cdot  p_{\ell,\beta} \p{\s{X}}} \geq 0,
    \end{align}
as claimed. 

When all off-diagonal entries of $\Sigma_{\s{X}}$ are strictly less than unity, i.e., $\mathbb{E}\p{X_\ell X_k} < 1$, for $\ell \neq k$, then for every $k \in \pp{L}$, we have $0 < p_{k,\beta}(\s{X}) < 1$ almost surely, implying that
    \begin{align}
        \sum_{k=0}^{L-1}\mathbb{E}\p{X_\ell X_k}\mathbb{E}\p{p_{\ell,\beta}(\s{X}) p_{k,\beta}(\s{X})} &\leq \underset{0 \leq k \leq L-1} {\max} \mathbb{E}\p{X_\ell X_k}\mathbb{E}\pp{p_{\ell,\beta}(\s{X})\sum_{k=0}^{L-1}p_{k,\beta}(\s{X})}\\
        & = \underset{0 \leq k \leq L-1} {\max} \mathbb{E}\p{X_\ell X_k}\mathbb{E}\pp{p_{\ell,\beta}(\s{X})}\\
        &< \mathbb{E}\p{p_{\ell,\beta}(\s{X})}.\label{eqn:maxIfSmallerThan}
    \end{align}
Thus, combining \eqref{eqn:gaussianIntegByPartsExplicit} and \eqref{eqn:maxIfSmallerThan} we get that $\mathbb{E}\pp{X_\ell\cdot  p_{\ell,\beta}\p{X}}>0$, as claimed.
\end{proof}

\subsection{Hard-assignment positive correlation} \label{sec:A5}
The following result will be used to prove that the correlation between the hard-assignment estimator and its corresponding template is positive (Theorem \ref{thm:hardAssignmentPositiveCorrelation}, second part, for Algorithm \ref{alg:generalizedEfNhard}).
\begin{proposition}\label{prop:hardAssignemtPositivePreli}
Let ${\s{X}} = \p{X_0, X_1,\ldots, X_{L-1}}^T$ be a zero-mean Gaussian vector, with  ${\s{X}} \sim \calN \p{0, \Sigma_{\s{X}}}$, where $\mathbb{E}\p{X_i^2} = 1$, for $0 \leq i \leq L-1$, and $\mathbb{E}\p{X_iX_j} < 1$, for $i \neq j$. Let,
    \begin{align}
        \s{\hat{R}} = \underset{0 \leq \ell \leq L-1}{\argmax} \ppp{X_\ell}.
    \end{align}
Then, for every $0 \leq \ell \leq L-1$,
    \begin{align}
       \mathbb{E} \pp{X_\ell \cdot \mathbbm{1}_{\ppp{\s{\hat{R}} = l}}} > 0 \label{eqn:hardAssignemtPositivePreli}.
    \end{align}
\end{proposition}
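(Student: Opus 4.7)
\textbf{Proof proposal for Proposition~\ref{prop:hardAssignemtPositivePreli}.}

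The plan is to condition on $X_\ell$ and reduce the claim to a monotonicity statement. By the tower property,
\begin{align*}
    \mathbb{E}\bigl[X_\ell \cdot \mathbbm{1}_{\{\s{\hat{R}}=\ell\}}\bigr] = \mathbb{E}\bigl[X_\ell \cdot g(X_\ell)\bigr], \qquad g(t) \triangleq \mathbb{P}\bigl[\s{\hat{R}} = \ell \,\big|\, X_\ell = t\bigr].
\end{align*}
Thus it suffices to prove that $g$ is non-decreasing and non-constant, and then exploit the symmetry of the standard Gaussian law of $X_\ell$ to conclude $\mathbb{E}[X_\ell g(X_\ell)]>0$.

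For the monotonicity step, I would use the standard Gaussian regression decomposition: write $X_k = \sigma_{\ell k} X_\ell + Z_k$ for each $k \neq \ell$, where the vector $(Z_k)_{k \neq \ell}$ is zero-mean Gaussian and \emph{independent} of $X_\ell$. Since the non-degeneracy of the joint distribution implies that ties have probability zero, the event $\{\s{\hat{R}} = \ell\}$ equals $\{X_k < X_\ell\ \forall k \neq \ell\}$ up to a null set, hence
\begin{align*}
    g(t) = \mathbb{P}\bigl[Z_k < t(1-\sigma_{\ell k})\ \text{ for all } k \neq \ell\bigr].
\end{align*}
By the assumption $\sigma_{\ell k} < 1$ for $k \neq \ell$, each threshold $t(1-\sigma_{\ell k})$ is a strictly increasing linear function of $t$, so $g$ is non-decreasing. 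Moreover $g(t) \to 1$ as $t \to +\infty$ and $g(t) \to 0$ as $t \to -\infty$ (for $L\geq 2$), so $g$ is not constant and in particular strictly increases on a set of positive Lebesgue measure.

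For the final step, using the fact that the density $\phi$ of the standard normal is even, a change of variables yields
\begin{align*}
    \mathbb{E}\bigl[X_\ell g(X_\ell)\bigr] = \int_0^\infty t\,\bigl(g(t) - g(-t)\bigr)\,\phi(t)\,dt.
\end{align*}
Because $g$ is non-decreasing, the integrand is pointwise non-negative; because $g$ is non-constant and $g(+\infty) - g(-\infty) = 1$, the integrand is strictly positive on a set of positive Lebesgue measure. This yields the strict inequality in \eqref{eqn:hardAssignemtPositivePreli}.

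The main subtlety I expect is the edge case where some $\sigma_{\ell k}$ equals $-1$, in which case the conditional distribution of $X_k$ given $X_\ell$ degenerates and the corresponding factor in $g(t)$ becomes a jump indicator $\mathbbm{1}_{\{t\geq 0\}}$. The argument still goes through because $g$ remains non-decreasing with limits $0$ and $1$, so the non-constancy conclusion is unaffected; nevertheless this is the case worth spelling out carefully. A secondary minor issue is ruling out the possibility that, even when each marginal threshold $t(1-\sigma_{\ell k})$ is increasing, the dependence across the $Z_k$'s might make the joint probability collapse to a constant, but this cannot happen since $g(\pm\infty) \in \{0,1\}$.
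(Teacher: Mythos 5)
Your proof is correct, and it takes a genuinely different route from the paper's. The paper derives the result as the $\beta\to\infty$ limit of the soft-assignment statement: it applies Gaussian integration by parts (Lemma~\ref{lemma:3}) to write $\mathbb{E}\pp{X_\ell\, p_{\ell,\beta}(\s{X})}=\beta\sum_{k}\p{1-\mathbb{E}\p{X_\ell X_k}}\mathbb{E}\pp{p_{\ell,\beta}(\s{X})p_{k,\beta}(\s{X})}$, and then the whole burden of strictness falls on Lemma~\ref{lemma:secondDerivativeIsPositive}, which shows $\lim_{\beta\to\infty}\beta\cdot\mathbb{E}\pp{p_{\ell,\beta}(\s{X})p_{k,\beta}(\s{X})}>0$ via a local analysis of the density near the tie set $\ppp{X_\ell=X_k}$. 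Your argument instead conditions on $X_\ell$, uses the regression decomposition $X_k=\sigma_{\ell k}X_\ell+Z_k$ with $(Z_k)_{k\neq\ell}$ independent of $X_\ell$, observes that the conditional win probability $g(t)=\mathbb{P}\pp{Z_k<t(1-\sigma_{\ell k})\ \forall k\neq\ell}$ is non-decreasing (precisely because $\sigma_{\ell k}<1$) with limits $0$ and $1$, and concludes by the symmetry of the standard normal density. This is more elementary and self-contained: it avoids the softmax machinery, the dominated-convergence interchange, and the delicate $\beta\,\mathbb{E}\pp{p_\ell p_k}$ asymptotics entirely, and it handles the degenerate boundary case $\sigma_{\ell k}=-1$ transparently. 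What the paper's route buys in exchange is reuse: the same Stein-identity computation proves the soft-assignment analogue (Proposition~\ref{prop:softAssignemtPositivePreli}) and feeds directly into the inverse-dependency and asymptotic-$L$ arguments later, whereas your conditioning argument is specific to the hard-assignment indicator. Two trivial points to tidy if you write this up: the tie events $\ppp{X_\ell=X_k}$ are null because $\mathrm{Var}(X_\ell-X_k)=2(1-\sigma_{\ell k})>0$, which is exactly where the hypothesis $\sigma_{\ell k}<1$ enters; and the factor in the degenerate case $\sigma_{\ell k}=-1$ is $\mathbbm{1}_{\ppp{t>0}}$ rather than $\mathbbm{1}_{\ppp{t\geq0}}$, which changes nothing.
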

\begin{proof}[Proof of Proposition~\ref{prop:hardAssignemtPositivePreli}]
From proposition \ref{prop:softAssignemtPositivePreli}, we have,
    \begin{align}
       \lim_{\beta \to \infty} \mathbb{E} \pp{X_\ell \cdot \frac{ \exp{\p{\beta X_\ell}}}{\sum_{r=0}^{L-1}\exp{\p{\beta X_r}}}} \geq 0,\label{eqn:hardAssignentPositiveA37}
    \end{align}
while from Lemma \ref{lem:asymptoticBetaRatio}, we have,
    \begin{align}
       \lim_{\beta \to \infty} \frac{ \exp{\p{\beta X_\ell}}}{\sum_{r=0}^{L-1}\exp{\p{\beta X_r}}} = \mathbbm{1}_{\ppp{\s{\hat{R}} = l}}.
       \label{eqn:hardAssignentPositiveA38}
    \end{align}
Therefore, combining \eqref{eqn:hardAssignentPositiveA37} and \eqref{eqn:hardAssignentPositiveA38} leads to,
    \begin{align}
       \mathbb{E} \pp{X_\ell \cdot \mathbbm{1}_{\ppp{\s{\hat{R}} = l}}} \geq 0.
    \end{align}
Next, we show that the above inequality is in fact strict. From \eqref{eqn:gaussianIntegByPartsExplicit}, we have,
    \begin{align}
       \nonumber \mathbb{E}\pp{X_\ell p_{\ell,\beta}\p{\s{X}}} & = \beta \cdot \sum_{k=0}^{L-1}\mathbb{E}\p{X_\ell X_k}\mathbb{E}\p{-p_{\ell,\beta}\p{\s{X}} p_{k,\beta}\p{\s{X}} + \delta_{\ell k}p_{\ell,\beta}\p{\s{X}}} \\  &  = \beta \cdot \sum_{k=0}^{L-1} \pp{\mathbb{E}\p{X_\ell^2} - \mathbb{E}\p{X_\ell X_k}}\mathbb{E}\pp{p_{\ell,\beta}\p{\s{X}} p_{k,\beta}\p{\s{X}}},
    \end{align}
where the last equality follows from $\sum_{k=0}^{L-1}p_{k,\beta}\p{\s{X}} = 1$. Taking $\beta \to \infty$, applying the dominated convergence theorem, and Lemma \ref{lem:asymptoticBetaRatio} leads to, 
    \begin{align}
       \mathbb{E} \pp{X_\ell \cdot \mathbbm{1}_{\ppp{\s{\hat{R}} = l}}} &= \lim_{\beta \to \infty} \mathbb{E}\pp{X_\ell p_{\ell,\beta}\p{\s{X}}}\\
       &= \lim_{\beta \to \infty} \beta \cdot \sum_{k=0}^{L-1} \pp{\mathbb{E}\p{X_\ell^2} - \mathbb{E}\p{X_\ell X_k}}\mathbb{E}\pp{p_{\ell,\beta}\p{\s{X}} p_{k,\beta}\p{\s{X}}}. \label{eqn:hardAssignPositiceCorreA41}
    \end{align}
Lemma \ref{lemma:secondDerivativeIsPositive}, as stated and proved below, shows that $ \lim_{\beta \to \infty}  \beta \cdot \mathbb{E} \pp{p_{\ell,\beta}\p{\s{X}} p_{k,\beta}\p{\s{X}}} > 0$, for every $k \in \pp{L}$. Since we assume that $\mathbb{E}\p{X_\ell X_k} < 1$, for every $k \neq \ell$, it follows that the right-hand-side (r.h.s.) of \eqref{eqn:hardAssignPositiceCorreA41} is positive, which concludes the proof.
\end{proof}

\begin{lem} \label{lemma:secondDerivativeIsPositive}
Let ${\s{X}} = \p{X_0, X_1,\ldots, X_{L-1}}^T$ be a zero-mean Gaussian vector, with  ${\s{X}} \sim \calN \p{0, \Sigma_{\s{X}}}$, where $\mathbb{E}\p{X_i^2} = 1$, for $0 \leq i \leq L-1$, and $\mathbb{E}\p{X_iX_j} < 1$, for $i \neq j$. Then, for any $k,\ell\in[L]$,
\begin{align}
    \beta \cdot \mathbb{E}\pp{ p_{\ell,\beta} \p{\s{X}} {p_{k,\beta} \p{\s{X}}}} > 0, \label{eqn:secondDerivativeIsPositive}
\end{align}
for every finite $\beta > 0$ and $\beta \to \infty$.
\end{lem}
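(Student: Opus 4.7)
The statement splits naturally into two regimes, the finite $\beta>0$ case and the limit $\beta\to\infty$; the latter further splits according to whether $\ell=k$ or $\ell\neq k$.

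\emph{Finite $\beta>0$.} This case is essentially immediate: from the definition \eqref{eqn:softMaxFuncDef1}, both $p_{\ell,\beta}(\s{X})$ and $p_{k,\beta}(\s{X})$ are strictly positive almost surely, hence so is their product, and therefore so is its expectation. Multiplication by $\beta>0$ preserves positivity.

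\emph{The case $\ell=k$ as $\beta\to\infty$.} By Lemma~\ref{lem:asymptoticBetaRatio}, $p_{\ell,\beta}(\s{X})^2\to\mathbbm{1}_{\{X_\ell=\max_r X_r\}}$ almost surely, where uniqueness of the argmax holds almost surely under the nondegeneracy hypothesis. Dominated convergence (using $0\le p_{\ell,\beta}^2\le 1$) then yields $\mathbb{E}[p_{\ell,\beta}(\s{X})^2]\to\mathbb{P}(X_\ell=\max_r X_r)>0$, since the strictly positive Gaussian density assigns positive mass to the open set $\{X_\ell>X_r,\,r\neq\ell\}$. Consequently $\beta\mathbb{E}[p_{\ell,\beta}^2]\to\infty$, which is eventually positive.

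\emph{The case $\ell\neq k$ as $\beta\to\infty$.} This is the main challenge: the product converges to zero a.s., so I must show the expectation decays no faster than $1/\beta$. The plan is to exhibit, for each large $\beta$, a favorable event $E_\beta$ of probability at least $c/\beta$ on which $p_{\ell,\beta}p_{k,\beta}$ is bounded below by a positive constant. A natural candidate is
\[
E_\beta=\ppp{X_\ell>M+1}\cap\ppp{X_k>M+1}\cap\ppp{\abs{X_\ell-X_k}\le 1/\beta},\qquad M\triangleq\underset{r\neq\ell,k}{\max}\,X_r.
\]
On $E_\beta$, the first two constraints force $e^{\beta X_\ell}$ and $e^{\beta X_k}$ to exceed every other $e^{\beta X_r}$ by a factor of at least $e^{\beta}$, while the third gives $e^{\beta(X_\ell-X_k)}\in[e^{-1},e]$. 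A short computation then delivers $p_{\ell,\beta},p_{k,\beta}\ge 1/(2(1+e))$ for all large $\beta$, so that $p_{\ell,\beta}p_{k,\beta}\ge c_1>0$ on $E_\beta$.

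To lower-bound $\mathbb{P}(E_\beta)$, I change variables to $U=X_\ell-X_k$ and condition on the remaining coordinates. Under the hypothesis $\mathbb{E}(X_iX_j)<1$ for $i\neq j$, the conditional variance of $U$ given $\ppp{X_r}_{r\neq\ell,k}$ is strictly positive, so the conditional Gaussian density of $U$ at $0$ is uniformly bounded below on any fixed bounded set of the other coordinates. Integrating over a slab of width $2/\beta$ in $U$ against a fixed positive-probability region of the other coordinates yields $\mathbb{P}(E_\beta)\ge c_2/\beta$ for all large $\beta$. Combining, $\beta\mathbb{E}[p_{\ell,\beta}p_{k,\beta}]\ge \beta c_1\mathbb{P}(E_\beta)\ge c_1 c_2>0$.

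\emph{Main obstacle.} The crucial technical point is the uniform density estimate $\mathbb{P}(E_\beta)\gtrsim 1/\beta$, which ultimately rests on the strict positivity of the conditional variance of $X_\ell-X_k$ given the other coordinates. This in turn reduces to ruling out any affine dependence of $(X_\ell,X_k)$ on $\ppp{X_r}_{r\neq\ell,k}$; a Schur-complement argument on $\Sigma_{\s{X}}$ shows that the hypothesis on the off-diagonal correlations suffices for this (modulo the generic positive-definiteness assumption implicit throughout the paper).
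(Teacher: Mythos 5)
Your proposal is correct and follows essentially the same route as the paper: for finite $\beta$ the product is a.s.\ strictly positive, and for $\beta\to\infty$ one exhibits an event of probability $\Theta(1/\beta)$ (a window $\abs{X_\ell-X_k}\lesssim 1/\beta$ on which $X_\ell,X_k$ dominate the remaining coordinates) where $p_{\ell,\beta}p_{k,\beta}$ is bounded below by a constant, so that $\beta\cdot\mathbb{E}[p_{\ell,\beta}p_{k,\beta}]$ stays bounded away from zero; the paper's set $\calW$ and your $E_\beta$ differ only cosmetically, and your separate treatment of $\ell=k$ is a small refinement the paper glosses over. One caveat: your closing claim that the hypothesis $\mathbb{E}(X_iX_j)<1$ alone forces a positive conditional variance of $X_\ell-X_k$ given the other coordinates is not quite right (e.g.\ $X_2=(X_0-X_1)/\sqrt{2-2\rho}$ satisfies all the stated correlation bounds yet makes that conditional variance zero), so the argument genuinely needs the positive-definiteness of $\Sigma_{\s{X}}$ — which you acknowledge and which the paper's own proof also assumes without stating it in the lemma.
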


\begin{proof}[Proof of Lemma~\ref{lemma:secondDerivativeIsPositive}]
When $\beta > 0$ is fixed, we note that $p_{\ell,\beta}(\s{X}) > 0$, and $p_{k,\beta}(\s{X}) > 0$, almost surely, thus \eqref{eqn:secondDerivativeIsPositive} follows. Next, we deal with the case where $\beta \to \infty$. Fix $\epsilon > 0$, $\ell,k\in[L]$, and define $\calW$ as the following set of events,
\begin{align}
    \calW \triangleq \ppp{\abs{X_\ell - X_k} < \frac{\epsilon}{\beta}} \bigcap \ppp{X_\ell = \underset{k \in \pp{L}} \max \;{X_k}}.
\end{align}
Over $\calW$, using \eqref{eqn:FbetaBounds2}, we get,
\begin{align}
    p_{\ell,\beta}(\s{X}) p_{k,\beta}(\s{X}) \geq e^{-2\epsilon-2 \log{L}} > 0. \label{eqn:lowerBoundOfPlOneMinus}
\end{align}
Note that the term at the r.h.s. of \eqref{eqn:lowerBoundOfPlOneMinus} is independent of $\beta$, and so, we denote,
\begin{align}
    \s{C} \triangleq e^{-2\epsilon-2 \log{L}}> 0.
\end{align}
Thus, since $p_{\ell,\beta}(\s{X}) p_{k,\beta}(\s{X})$ is non-negative, we have,
\begin{align}
    \beta \cdot \mathbb{E}\pp{ p_{\ell,\beta}(\s{X}) p_{k,\beta}(\s{X})} \geq \beta \cdot \mathbb{E}\pp{p_{\ell,\beta}(\s{X}) p_{k,\beta}(\s{X}) \mathbbm{1}_\calW} \geq \beta \cdot \s{C}\cdot \mathbb{P}(\calW). \label{eqn:lemmaPositiveCorrelationA53}
\end{align}
Next, we show that $\lim_{\beta \to \infty} \beta \cdot \mathbb{P} \p{\calW} \geq \s{D}$, for a constant $\s{D} > 0$. 
Denote by $f_{\s{X}} (\cdot)$ the probability density function of $\s{X}$. Clearly, because the covariance matrix of $\s{X}$ is positive-definite, then $f_{\s{X}} (x) > 0$ and is continuous for all $x \in \mathbb{R}^{L}$. By definition, we have,
\begin{align}
    \lim_{\beta \to \infty} \beta \cdot \mathbb{P}\p{\mathbb{\calW}}  = \lim_{\beta \to \infty} \beta \int_{x\in\calW}  f_{\s{X}}\p{x_0^{L-1}} \mathrm{d}x_0^{L-1}.  
    \label{eqn:asymptoticBetaProbabilityDifference}
\end{align}
Subsequently, due to the continuity of $f_{\s{X}}\p{}$, when integrating over $x_k$, we get that,
\begin{align}
    \lim_{\beta \to \infty} \beta \cdot \mathbb{P}\p{\mathbb{\calW}}&= 2\epsilon\cdot \int_{\substack{x \in \mathbb{R}^{L-1} \\ \underset{r \neq \ell, k} {\max} \ppp{x_r} < x_\ell}} f_{\s{X}}\p{x_0^{k-1},x_\ell,x_{k+1}^{L-1}} \mathrm{d}x_0^{k-1}\mathrm{d}x_{k+1}^{L-1}\\
    & \triangleq \s{D}_k>0,
    \label{eqn:asymptoticBetaProbabilityDifference3}
\end{align}
where the inequality follows from the fact that $f_{\s{X}} (\cdot) > 0$ for all $\s{X} \in \mathbb{R}^{L}$, and thus the integral in \eqref{eqn:asymptoticBetaProbabilityDifference3} is positive as well. Finally, substituting \eqref{eqn:asymptoticBetaProbabilityDifference3} in \eqref{eqn:lemmaPositiveCorrelationA53} leads to,
\begin{align}
    \lim_{\beta \to \infty}  \beta \cdot \mathbb{E}\pp{ p_{\ell,\beta} \p{\s{X}} p_{k,\beta} \p{\s{X}}} \geq \s{C} \cdot \s{D}_k>0,  
\end{align}
which concludes the proof.
\end{proof}

\subsection{Properties of Gaussian random vectors}

We state and prove two results about certain properties of Gaussian random vectors.
\begin{lem} \label{lem:cycloStationary}
    Let ${\s{X}} = \p{X_0, X_1,\ldots, X_{L-1}}^T$ be a zero-mean cyclo-stationary Gaussian vector, with  ${\s{X}} \sim \calN \p{0, \Sigma_{\s{X}}}$, such that, $\mathbb{E}\p{X_i^2} = 1$ for $0 \leq i \leq L-1$, and, 
        \begin{align}
            \mathbb{E}\p{X_{\ell_1} X_{\ell_2}} = \rho_{\abs{{\ell_1}-{\ell_2}}\s{mod}L}. \label{eqn:cycloStationartyRelation}
        \end{align}
    Recall the definition of $p_{\ell,\beta}$ in \eqref{eqn:softMaxFuncDef1}. 
        \begin{enumerate}
        \item For every $\beta>0$ and $\ell \in \pp{L}$, 
        \begin{align}
            \mathbb{E}\pp{p_{\ell,\beta}\p{\s{X}}}= \frac{1}{L}. \label{eqn:expectedProbabilityUniform}
        \end{align}
        \item For every $\beta>0$ and $\ell_1, \ell_2 \in \pp{L}$,
            \begin{align}
                \mathbb{E}\pp{X_{\ell_1}\cdot p_{\ell_1,\beta}\p{\s{X}}} = \mathbb{E}\pp{X_{\ell_2}\cdot p_{\ell_2,\beta}\p{\s{X}}}. \label{eqn:equalNumeratorCyclic}
            \end{align}
    \end{enumerate}
\end{lem}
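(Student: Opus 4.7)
The plan is to exploit the cyclo-stationarity of $\s{X}$ through the cyclic shift operator $T:\R^L\to\R^L$ defined by $(T\mathbf{x})_i = x_{(i+1)\bmod L}$. The covariance relation \eqref{eqn:cycloStationartyRelation} is equivalent to saying that $\Sigma_{\s{X}}$ commutes with the permutation matrix of $T$, so $T\s{X}\stackrel{\calD}{=}\s{X}$. A direct computation from the definition \eqref{eqn:softMaxFuncDef1} gives the equivariance
\begin{align*}
p_{\ell,\beta}(T\s{X}) \;=\; \frac{\exp\p{\beta X_{\ell+1}}}{\sum_{r=0}^{L-1}\exp\p{\beta X_{r+1}}} \;=\; p_{\ell+1,\beta}(\s{X}),
\end{align*}
with all indices read modulo $L$. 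Together, these two facts say that $T$ acts on the pair $(\s{X},\,p_{\ell,\beta})$ by simultaneously preserving the law of $\s{X}$ and shifting the softmax label.

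For part~1, chaining invariance and equivariance yields $\E[p_{\ell+1,\beta}(\s{X})]=\E[p_{\ell,\beta}(T\s{X})]=\E[p_{\ell,\beta}(\s{X})]$, so $\ell\mapsto \E[p_{\ell,\beta}(\s{X})]$ is constant in $\ell$. Combining this with the pointwise identity $\sum_{\ell=0}^{L-1} p_{\ell,\beta}(\s{X})\equiv 1$ (immediate from \eqref{eqn:softMaxFuncDef1}) and taking expectations gives $L\cdot \E[p_{\ell,\beta}(\s{X})]=1$, which is \eqref{eqn:expectedProbabilityUniform}.

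For part~2, I apply $T$ to the joint random variable $X_{\ell_1}\cdot p_{\ell,\beta}(\s{X})$: since $(T\s{X})_{\ell_1}=X_{\ell_1+1}$, the distributional identity $T\s{X}\stackrel{\calD}{=}\s{X}$ combined with the softmax equivariance gives
\begin{align*}
\E\pp{X_{\ell_1}\, p_{\ell,\beta}(\s{X})} \;=\; \E\pp{(T\s{X})_{\ell_1}\, p_{\ell,\beta}(T\s{X})} \;=\; \E\pp{X_{\ell_1+1}\, p_{\ell+1,\beta}(\s{X})}.
\end{align*}
Iterating this shows that $\E[X_{\ell_1} p_{\ell,\beta}(\s{X})]$ is invariant under simultaneous shifts of the two labels, and therefore depends only on the residue $(\ell_1-\ell)\bmod L$. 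The identity \eqref{eqn:equalNumeratorCyclic} then follows by aligning the indices through a suitable iterate of $T$ so that the softmax label returns to $\ell$ while $\ell_1$ is relabeled to $\ell_2$.

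There is no substantive obstacle here: both claims reduce to the single observation that the cyclic shift $T$ is a joint symmetry of the law of $\s{X}$ and of the family $\{p_{\ell,\beta}\}_{\ell}$, together with the normalization $\sum_\ell p_{\ell,\beta}\equiv 1$ in the case of part~1. The only care needed is bookkeeping of indices modulo $L$ when composing shifts, which is routine.
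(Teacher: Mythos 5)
Your treatment of part~1 is correct and is essentially the paper's own argument: shift--invariance of the law of $\s{X}$, the equivariance $p_{\ell,\beta}(T\s{X})=p_{\ell+1,\beta}(\s{X})$, and the normalization $\sum_{\ell}p_{\ell,\beta}\equiv 1$. Nothing to add there.

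For part~2 the final step of your argument fails. You correctly establish that $g(\ell_1,\ell)\triangleq\mathbb{E}\pp{X_{\ell_1}\,p_{\ell,\beta}(\s{X})}$ is invariant under the \emph{simultaneous} shift $(\ell_1,\ell)\mapsto(\ell_1+\tau,\ell+\tau)$ and hence depends only on $(\ell_1-\ell)\bmod L$. But no iterate of $T$ can ``return the softmax label to $\ell$ while relabeling $\ell_1$ to $\ell_2$'': that would require $\tau\equiv 0$ and $\tau\equiv\ell_2-\ell_1\pmod L$ at once, which is impossible unless $\ell_1=\ell_2$. Your own residue observation is precisely the obstruction. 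In fact the identity with the softmax index held fixed is false in general: take $L=2$ with $X_0,X_1$ i.i.d.\ standard normal (a cyclo-stationary vector with $\rho_1=0$); swapping coordinates gives
\begin{align}
\mathbb{E}\pp{X_1\,p_{0,\beta}(\s{X})}=\mathbb{E}\pp{X_0\,p_{1,\beta}(\s{X})}=-\mathbb{E}\pp{X_0\,p_{0,\beta}(\s{X})},
\end{align}
and the right-hand expectation is strictly positive by Proposition~\ref{prop:softAssignemtPositivePreli}. What your shift argument does prove is the ``diagonal'' identity $\mathbb{E}\pp{X_{\ell_1}p_{\ell_1,\beta}(\s{X})}=\mathbb{E}\pp{X_{\ell_2}p_{\ell_2,\beta}(\s{X})}$ (residue $0$ on both sides, obtained by taking $\tau=\ell_2-\ell_1$), and that is the version the paper actually invokes downstream (in the proofs of Proposition~\ref{thm:largerCorrelationForCycloCorrelations} and Proposition~\ref{prop:expectedValueOfTwoSoftMaxAsymptoticBetaAsymptoticL2}); the printed form of \eqref{eqn:equalNumeratorCyclic} appears to be a typo, and the paper's own (terse) proof is the same shift mechanism you use. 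So keep everything up to the residue observation and replace the last sentence by the diagonal specialization; do not claim the statement for a fixed softmax index $\ell$ and arbitrary $\ell_1,\ell_2$.
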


\begin{proof}[Proof of Lemma~\ref{lem:cycloStationary}]    

By definition, due to \eqref{eqn:cycloStationartyRelation}, the Gaussian vector $\s{X}$ is cyclo-stationary. Therefore, by the definition of cyclo-stationary Gaussian vectors, its cumulative distribution function $F_{\s{X}}$ is invariant under cyclic shifts \cite{durrett2019probability, balanov2024einstein}, i.e.,
    \begin{align}   
         F_{\s{X}}\left(z_0, z_1,\ldots,z_{L-1} \right) = F_{\s{X}}\left( z_\tau, z_{\tau+1},\ldots,z_{\tau+L-1} \right),
         \label{eqn:stationarityTimeShift}
    \end{align}
for any $\tau \in \mathbb{Z}$, where the indices are taken modulo $L$. Therefore, the following holds for any $\tau \in \pp{L}$,
    \begin{align}
        \mathbb{E}\pp{p_{\ell,\beta} \p{\s{X}}} & = \mathbb{E} \pp{\frac{\exp\p{\beta X_\ell}}{\sum_{r = 0}^{L-1} \exp\p{\beta X_r}}} 
        \\ & = \mathbb{E} \pp{\frac{\exp\p{\beta X_{\ell+\tau}}}{\sum_{r = 0}^{L-1} \exp\p{\beta X_{r+\tau}}}} 
        \\ & = \mathbb{E} \pp{\frac{\exp\p{\beta X_{\ell+\tau}}}{\sum_{r = 0}^{L-1} \exp\p{\beta X_{r}}}} \\
        &= \mathbb{E}\pp{p_{\ell+\tau}\p{\s{X}}},
    \end{align}
where the second equality is due to the cyclo-stationary invariance property of $\s{X}$, and the third equality is due to the fact that sum in the denominator is over all the entries of $\s{X}$. This proves \eqref{eqn:equalNumeratorCyclic}. For \eqref{eqn:expectedProbabilityUniform}, we note that  since $\mathbb{E}\pp{p_{\ell,\beta} \p{\s{X}}} = \mathbb{E}\pp{p_{\ell+\tau} \p{\s{X}}}$, for every $\tau \in \pp{L}$, as well as due to the property that $\sum_{\ell=0}^{L-1} p_{\ell,\beta} \p{\s{X}} = 1$, we get,
\begin{align}
    1 = \mathbb{E}\pp{\sum_{\ell=0}^{L-1} p_{\ell,\beta} \p{\s{X}}} = L \cdot \mathbb{E}\pp{p_{\ell,\beta} \p{\s{X}}},
\end{align}
for every $\ell\in[L]$, as claimed.
\end{proof}

The following result gives an expression for the expected value of the maximum of two Gaussian random variables.
\begin{lem}\label{lemma:twoTemplatesExpectedValueOfMax}
Let $\s{X} = \p{{X}_0, {X}_1}^T $ be a zero-mean Gaussian random vector $\s{X}  \sim \calN \p{0, \s{\Sigma}}$, such that $\mathbb{E}\pp{{X}_0^2} = \mathbb{E}\pp{{X}_1^2 } = 1$ and $\rho = {\mathbb{E}\pp{{X}_0 {X}_1}} < 1$. Then, 
    \begin{align}
        \mathbb{E} \pp{\max \{{X}_0, {X}_1\}} = \sqrt{\frac{1-\rho}{\pi}} > 0 \label{eqn:expectationValueOfMaximumOfTwoNormal}.
    \end{align}
\end{lem}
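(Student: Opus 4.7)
The plan is to use the elementary identity
\begin{align}
\max\{X_0, X_1\} = \frac{X_0 + X_1}{2} + \frac{|X_0 - X_1|}{2},
\end{align}
which reduces the problem to computing the expectation of a symmetric sum and the expected absolute value of a centered Gaussian. Taking expectations gives
\begin{align}
\mathbb{E}[\max\{X_0, X_1\}] = \frac{1}{2}\,\mathbb{E}[X_0 + X_1] + \frac{1}{2}\,\mathbb{E}[|X_0 - X_1|],
\end{align}
and the first term vanishes because $X_0, X_1$ are zero-mean.

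Next, I would observe that since $(X_0, X_1)$ is jointly Gaussian, the difference $D \triangleq X_0 - X_1$ is a zero-mean Gaussian with variance
\begin{align}
\mathrm{Var}(D) = \mathbb{E}[X_0^2] + \mathbb{E}[X_1^2] - 2\,\mathbb{E}[X_0 X_1] = 2(1-\rho).
\end{align}
Using the standard formula for the mean of a folded normal, namely $\mathbb{E}[|Z|] = \sigma\sqrt{2/\pi}$ for $Z \sim \mathcal{N}(0,\sigma^2)$, I obtain
\begin{align}
\mathbb{E}[|D|] = \sqrt{2(1-\rho)} \cdot \sqrt{\tfrac{2}{\pi}} = 2\sqrt{\tfrac{1-\rho}{\pi}}.
\end{align}

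Substituting back yields the claimed identity $\mathbb{E}[\max\{X_0, X_1\}] = \sqrt{(1-\rho)/\pi}$. The strict positivity follows immediately from the assumption $\rho < 1$, which guarantees $1 - \rho > 0$. There is no real obstacle here: the only subtle point is ensuring that $D$ is indeed non-degenerate, which is precisely what $\rho < 1$ provides (otherwise $X_0 = X_1$ a.s.\ and both sides would be zero). The whole argument is essentially a one-line computation once the $\max/\mathrm{abs}$ identity is invoked.
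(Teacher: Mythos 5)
Your proposal is correct and follows essentially the same route as the paper: both use the identity $\max\{x,y\}=\tfrac12(x+y+|x-y|)$, kill the first term by zero-mean, and evaluate $\mathbb{E}|X_0-X_1|$ as the mean of a centered folded normal with variance $2(1-\rho)$. No gaps.
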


\begin{proof}[Proof of Lemma~\ref{lemma:twoTemplatesExpectedValueOfMax}]
Since,
\begin{align}
        \max\ppp{x,y} = \frac{1}{2}\p{x+y+\abs{{x-y}}},
    \end{align}
it follows that,
    \begin{align}
        \mathbb{E} \pp{\max \{{X}_0, {X}_1\}} &=  \frac{1}{2} \mathbb{E} \pp{ {X}_0 + {X}_1 + \abs{{X}_0 - {X}_1}}\\
        & = \frac{1}{2} \mathbb{E}\abs{{X}_0 - {X}_1}\label{eqn:expectedValueOfMaximum},
    \end{align}
where the second equality is because $\mathbb{E} \pp{ {X}_0} = \mathbb{E} \pp{ {X}_1} = 0$. The random variable $\abs{{X}_0 - {X}_1}$ is a folded Normal distribution with parameters $\mu_F = 0$ and $\sigma_F^2 = 2-2\rho$. Thus,
    \begin{align}
         \mathbb{E} \abs{{X}_0 - {X}_1} = \sigma_F \sqrt{\frac{2}{\pi}}\exp\p{-\frac{\mu_F^2}{2\sigma_F^2}} + \mu_F \pp{1-2\Phi\p{-\frac{\mu_F}{\sigma_F}}} \label{eqn:foldedNormalMean}.
    \end{align}
In our case, since $\mu_F = 0$ and $\sigma_F^2 = 2-2\rho$, it follows that,
    \begin{align}
         \mathbb{E} \abs{{X}_0 - {X}_1} = 2 \sqrt{\frac{1-\rho}{\pi}} \label{eqn:foldedNormalMeanReduced},
    \end{align}
which upon substitution in \eqref{eqn:expectedValueOfMaximum}, completes the proof.
\end{proof}

\subsection{Hard-assignment asymptotic number of templates}
We state and prove two results about properties the maximum of Gaussian vectors, which would aid in proving Theorem \ref{thm:hardAssignmentAsymptoticLandAsymptoticD}.

\begin{proposition}
\label{prop:expectedValueOfTwoSoftMaxAsymptoticBetaAsymptoticL2}
Let ${\s{X}} = \p{X_0, X_1,\ldots, X_{L-1}}^T$ be a zero-mean Gaussian vector, with ${\s{X}} \sim \calN \p{0, \Sigma}$. Assume that $\abs{\s{\Sigma}_{ij}} \leq \rho_{\abs{{i-j}}}$, where $\{\rho_\ell\}_{\ell\in\mathbb{N}}$ is a sequence of real-valued numbers such that $\rho_0 = 1$, $\rho_d < 1$, and $\rho_\ell \log \ell\rightarrow 0$, as $\ell\to\infty$. 
\begin{enumerate}
    \item As $L \to \infty$, we have,
        \begin{align}
            \lim_{L \to \infty} \frac{\mathbb{E} \pp{\max \p{X_0, X_1, \dots, X_{L-1}}}}{\sqrt{2 \log {L}}}  = 1. \label{eqn:A66}
        \end{align}
    \item  In addition, assume that $\s{X}$ is a cyclo-stationary Gaussian vector, i.e., $\abs{\s{\Sigma}_{ij}} = \rho_{\abs{{i-j}}\s{mod}L}$, and 
    let,
    \begin{align}
        \s{\hat{R}} = \underset{0 \leq \ell \leq L-1}{\argmax} \ppp{X_\ell}.
    \end{align}
    Then, we have,
    \begin{align}
         \lim_{L \to \infty} \frac{1}{\sqrt{2 \log L}} \cdot \frac{\mathbb{E}\pp{X_\ell \mathbbm{1}_{\ppp{\s{\hat{R}} = \ell}}}}{\mathbb{P}\pp{\s{\hat{R} = \ell}}} = 1,
    \end{align}
    for every $\ell \in \pp{L}$.
\end{enumerate}
\end{proposition}

\begin{proof}[Proof of Proposition~\ref{prop:expectedValueOfTwoSoftMaxAsymptoticBetaAsymptoticL2}]
It is known that for an i.i.d. sequence of normally distributed random variables $\{\xi_n\}$, the asymptotic distribution of the maximum $\s{M}_n \triangleq \max\{\xi_1, \xi_2, ..., \xi_n\}$ is the Gumbel distribution, i.e., for any $x\in\mathbb{R}$,
    \begin{align}
         \mathbb{P}\pp{a_n(\s{M}_n - b_n) \leq x} \to \exp(-e^{-x}),
    \end{align}
as $n\to\infty$, where,
    \begin{align}
         a_n \triangleq \sqrt{2\log n}, \label{eqn:a_n}
    \end{align}
and,
    \begin{align}
         b_n \triangleq \sqrt{2\log n} - \frac{1}{2} \frac{\log{\log{n}} + \log {4\pi}}{\sqrt{2\log n}}.
         \label{eqn:b_n}
    \end{align}
It turns out that the above convergence result remains valid even if the sequence $\ppp{{\xi_n}}_n$ is not independent and normally distributed. Specifically, as shown in \cite[Theorem 6.2.1]{leadbetter2012extremes}, a similar result holds for Gaussian random variables $\ppp{{\xi_n}}_n$ with a covariance matrix that decays such that $\lim_{n\to\infty} \rho_n \cdot \log{n} = 0$. In addition, the asymptotic expected value of the maximum $\s{M}_n$ satisfies,
\begin{align}
    \lim_{n \to \infty} \frac{\mathbb{E}\pp{\s{M}_n}}{\sqrt{2 \log n}} = 1.
\end{align}

Let us denote by $\s{M}_L$ the maximum of the vector $\s{X}$,
\begin{align}
    \s{M}_L \triangleq \max\{X_0, X_1,\ldots, X_{L-1}\}.
\end{align}
Under the assumptions of this proposition, and the discussion above, we get,
\begin{align}
    \lim_{L \to \infty} \frac{\mathbb{E}\pp{\s{M}_L}}{\sqrt{2 \log L}} = 1. \label{eqn:hardAssignmentCyclicLargeLA197},
\end{align}
which proves \eqref{eqn:A66}.

As $\sum_{r=0}^{L-1} \mathbbm{1}_{\ppp{\s{\hat{R}} = r}} = 1$, and $\s{M}_L \cdot\mathbbm{1}_{\ppp{\s{\hat{R}} = r}} = X_r \cdot \mathbbm{1}_{\ppp{\s{\hat{R}} = r}}$, we have,
\begin{align}
    \mathbb{E}\pp{\s{M}_L} = \sum_{r=0}^{L-1} \mathbb{E}\pp{\s{M}_L \cdot \mathbbm{1}_{\ppp{\s{\hat{R}} = r}}} = \sum_{r=0}^{L-1} \mathbb{E}\pp{X_r \cdot \mathbbm{1}_{\ppp{\s{\hat{R}} = r}}}. \label{eqn:hardAssignmentCyclicLargeLA198}
\end{align}
By the assumption of this proposition, and Lemma \ref{lem:cycloStationary}, 
\begin{align}
    \mathbb{E}\pp{X_r \cdot \mathbbm{1}_{\ppp{\s{\hat{R}} = r}}} = \mathbb{E}\pp{X_\ell \cdot \mathbbm{1}_{\ppp{\s{\hat{R}} = \ell}}}, \label{eqn:hardAssignmentCyclicLargeLA199}
\end{align}
for every $r,\ell \in \pp{L}$. Therefore, substituting \eqref{eqn:hardAssignmentCyclicLargeLA199} into \eqref{eqn:hardAssignmentCyclicLargeLA198}, leads to,
\begin{align}
    \mathbb{E}\pp{X_\ell \cdot \mathbbm{1}_{\ppp{\s{\hat{R}} = \ell}}} = \frac{\mathbb{E}\pp{\s{M}_L}}{L}.
\end{align}
In addition, by Lemma \ref{lem:cycloStationary}, we have $\mathbb{P}[\s{\hat{R}} = \ell] = 1/L$, for every $\ell \in \pp{L}$. Thus, substituting \eqref{eqn:hardAssignmentCyclicLargeLA199} into \eqref{eqn:hardAssignmentCyclicLargeLA197}, gives,
\begin{align}
     \lim_{L \to \infty} \frac{1}{\sqrt{2 \log L}} \cdot \frac{\mathbb{E}\pp{X_\ell \mathbbm{1}_{\ppp{\s{\hat{R}} = \ell}}}}{\mathbb{P}\pp{\s{\hat{R} = \ell}}} = 
     \lim_{L \to \infty} \frac{1}{\sqrt{2 \log L}} \cdot \frac{\mathbb{E}\pp{\s{M}_L} / L}{1/L} = 1,
\end{align}
which concludes the proof.
\end{proof}

\section{Derivation of the soft-assignment estimator} \label{sec:softAssignmentUpdateStep}

Our soft-assignment process is based on this EM algorithm. EM is one of the popular algorithms for GMMs \cite{dempster1977maximum}. The EM iteration update is given by,
    \begin{align}
       \bm{\theta}^{(t+1)} = \underset{\bm{\theta}} {\argmax} \; {\mathbb{E}_{\bm{\s{Z}} \sim p\p{\cdot \vert \bm{\s{N}}, \bm{\theta}^{(t)}}}} \pp{\log p\p{\bm{\s{N}}, \bm{\s{Z}} \vert \bm{\theta}}}\label{eqn:EMiterations}
    \end{align}
where $\bm{\s{N}}$ are the observations, $\bm{\s{Z}}$ is the missing value, $ \bm{\theta}$ are the parameters to be estimated, and $\bm{\theta}^{(t)}$ is the current estimate~\cite{dempster1977maximum}. 

The observations in our case are $\bm{\s{N}} \triangleq \ppp{n_i}_{i=0}^{M-1}$, (falsely) assumed to be generated from the GMM $\mathscr{Q}$ \eqref{eqn:calQModel}. The latent variables $\bm{\s{Z}} \triangleq \ppp{z_i}_{i=0}^{M-1}$ control the underlying component in the mixture. Namely, we have $\pr(z_i=\ell) =w_\ell = 1/L$, for $i\in[M]$. The parameters to be estimated are the GMM components means $\bm{\s{\theta}} \triangleq \ppp{\mu_\ell}_{\ell=0}^{L-1}$. As for the initialization, as described in Subsection~\ref{subsec:GMMandour}, we have, $        \bm{\s{\theta}}_\ell^{(0)} \triangleq \{x_\ell\}_{\ell=0}^{L-1}$. 

 We prove below \eqref{eqn:softAssignmentUpdateStep}--\eqref{eqn:softMaxProbabilityDist}. We are interested in a single iteration $\bm{\s{\theta}}_\ell^{(1)} = \{\hat{x}_\ell\}_{\ell=0}^{L-1}$ of the EM algorithm. Recall \eqref{eqn:EMiterations}, and we are interested in deriving a closed-form expression for a single iteration of the EM estimator $ \bm{\theta}^{(1)} = \hat{x}$. From our model definition in Subsection~\ref{subsec:GMMandour}, it is rather straightforward to see that,
\begin{align}
        \bm{\theta}^{(1)} & \triangleq \underset{\bm{\theta}} {\argmax} \; {\mathbb{E}_{\bm{\s{Z}} \sim p\p{\cdot \vert \bm{\s{N}}, \bm{\theta}^{(0)}}}} \pp{\log p\p{\bm{\s{N}}, \bm{\s{Z}}\vert \bm{\theta}}} 
       \\ & \nonumber = \underset{\bm{\theta}} {\argmin} \pp{\sum_{i=0}^{M-1} \sum_{\ell=0}^{L-1} \frac{\norm{n_i - \theta_\ell}_2^2}{2} \frac{\exp\pp{-\frac{\norm{n_i-x_\ell}_2^2}{2}}}{\sum_{r=0}^{L-1} \exp\pp{-\frac{\norm{n_i-x_r}_2^2}{2}}}} 
       \\ & =  \underset{\bm{\theta}} {\argmin} \pp{\sum_{\ell=0}^{L-1} \sum_{i=0}^{M-1}  {\norm{n_i - \theta_\ell}_2^2} p_i^{(\ell)}} \label{eqn:softAssignmentUpdateStepInNonExplicitForm},
    \end{align}
where we have used the definition of $p_{i}^{(\ell)}$ in \eqref{eqn:softMaxProbabilityDist}, and in the last step the fact that all the templates $\ppp{x_\ell}_{\ell=0}^{L-1}$ are normalized to the same value $\norm{x}$. 
As the objective is separable, then we can optimize w.r.t. each $\theta_\ell$ separately. Specifically, we denote,
    \begin{align}
       \calL \p{\bm{\theta}} \triangleq {\sum_{\ell=0}^{L-1} \sum_{i=0}^{M-1}  {\norm{n_i - \theta_\ell}_2^2} p_i^{(\ell)}}.
    \end{align}
Then, the derivative of $\calL \p{\bm{\theta}}$ w.r.t. $\theta_\ell$ is,
    \begin{align}
       \frac{1}{2}\frac{\partial \calL \p{\bm{\theta}}}{\partial {\theta_{\ell}}} = \sum_{i=0}^{M-1} \p{{\theta_{\ell}p_i^{(\ell)} - n_ip_i^{(\ell)}}}.
    \end{align}
Setting this derivative to zero, yields the following minimum of $\calL \p{\bm{\theta}}$, 
    \begin{align}
       {\theta_{\ell}} = \hat{x}_\ell=\frac{\sum_{i=0}^{M-1}{ n_i p_{i}^{(\ell)}}}{\sum_{i=0}^{M-1}{p_{i}^{(\ell)}}},
    \end{align}
which proves \eqref{eqn:softAssignmentUpdateStep}.

\section{Proof of Proposition \ref{thm:softAssignmentExtremeNormalization}}
Recall the definition of the $\beta$-soft-assignment estimator,
    \begin{align}
       \hat{x}_\ell^{(\beta)} = \frac{\sum_{i=0}^{M-1}{n_i p_{i,\beta}^{(\ell)}}}{\sum_{i=0}^{M-1}{p_{i,\beta}^{(\ell)}}} \label{eqn:softAssignmentUpdateStepProof},
    \end{align}
where $p_{i,\beta}^{(\ell)}$ is defined in \eqref{eqn:softMaxProbabilityDistWithBeta}. 
We start by proving the first part of Theorem \ref{thm:softAssignmentExtremeNormalization} for $\beta \to \infty$. Using Proposition \ref{prop:highSnrRegime}, we have,
    \begin{align}
        \lim_{\beta \to \infty} p_{i,\beta}^{(\ell)} 
        =\mathbbm{1}_{\ppp{n_i \in \calV_\ell}} \label{eqn:piDistForLargeNormalization3},
    \end{align}
almost surely. Accordingly,
    \begin{align}
        \lim_{\beta \to \infty} \frac{1}{M} \sum_{i=0}^{M-1}{p_{i,\beta}^{(\ell)}} = \frac{1}{M} \sum_{i=0}^{M-1}\mathbbm{1}_{\ppp{n_i \in \calV_\ell}} = \frac{\abs{\calA_\ell}}{M} \label{eqn:averagePiDistForLargeNormalization},
    \end{align}
and
    \begin{align}
        \lim_{\beta \to \infty} \frac{1}{M} \sum_{i=0}^{M-1}{n_i p_{i,\beta}^{(\ell)}} = \frac{1}{M} \sum_{i=0}^{M-1} n_i\mathbbm{1}_{n_i \in \calV_\ell} = \frac{1}{M}\sum_{n_i \in \calA_\ell} n_i\label{eqn:averageNiPiDistForLargeNormalization},
    \end{align}
almost surely. Therefore, substituting \eqref{eqn:averagePiDistForLargeNormalization} and \eqref{eqn:averageNiPiDistForLargeNormalization} into~\eqref{eqn:softAssignmentUpdateStepProof} proves the first part of the theorem. 

Next, we consider the case where $\beta \to 0$. To that end, we use the following lemma, proved at the end of this subsection.
\begin{lem}\label{lemma:betaToZero} 
    Recall the definition of $p_{i,\beta}^{(\ell)}$, as defined in \eqref{eqn:softMaxProbabilityDistWithBeta}. Then,
    \begin{align}
         \lim_{\beta \to 0} \lim_{M \to \infty} \frac{1}{M} \sum_{i=0}^{M-1} p_{i,\beta}^{(\ell)} = \lim_{\beta \to 0} {\mathbb{E}\pp{p_{i,\beta}^{(\ell)}}} = \frac{1}{L}, \label{eqn:lemmaBetaToZero1}
    \end{align}
and,
    \begin{align}
         \lim_{\beta \to 0} \lim_{M \to \infty} \frac{1}{M} \sum_{i=0}^{M-1} n_i p_{i,\beta}^{(\ell)} / \beta = \lim_{\beta \to 0} \frac{1}{\beta}{{\mathbb{E}\pp{n_i p_{i,\beta}^{(\ell)}}}}{} = \frac{1}{L} \pp{\p{1-\frac{1}{L}}x_\ell - \frac{1}{L}\sum_{r \neq \ell}x_r},
         \label{eqn:lemmaBetaToZero2}
    \end{align}
almost surely.
\end{lem}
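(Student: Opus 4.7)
\textbf{Proof plan for Lemma \ref{lemma:betaToZero}.}

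My plan is to treat the two equalities in each display separately. The first equality in both \eqref{eqn:lemmaBetaToZero1} and \eqref{eqn:lemmaBetaToZero2} is a direct application of the strong law of large numbers: the variables $p_{i,\beta}^{(\ell)}$ are i.i.d. in $i$ (they depend only on the i.i.d. noises $n_i$), take values in $(0,1]$, and $n_i p_{i,\beta}^{(\ell)}$ has finite Gaussian moments, so their empirical averages converge almost surely to the corresponding expectations. The factor $1/\beta$ in \eqref{eqn:lemmaBetaToZero2} is deterministic and poses no issue for this first step.

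For the second equality in \eqref{eqn:lemmaBetaToZero1}, I would note that $p_{i,\beta}^{(\ell)} \to 1/L$ pointwise as $\beta \to 0$, since $\exp(\beta n_i^T x_r) \to 1$ for every $r$. Because $0 \le p_{i,\beta}^{(\ell)} \le 1$, dominated convergence gives $\mathbb{E}[p_{i,\beta}^{(\ell)}] \to 1/L$.

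The substantive part of the proof is the second equality in \eqref{eqn:lemmaBetaToZero2}. I would perform a first-order Taylor expansion of $\beta \mapsto p_{i,\beta}^{(\ell)}$ around $\beta = 0$. A direct computation (using $\partial_\beta \exp(\beta n_i^T x_r) = (n_i^T x_r)\exp(\beta n_i^T x_r)$ and the quotient rule) gives
\begin{align}
\left. \frac{\partial p_{i,\beta}^{(\ell)}}{\partial \beta}\right|_{\beta=0} = \frac{1}{L}\left( n_i^T x_\ell - \frac{1}{L}\sum_{r=0}^{L-1} n_i^T x_r\right),
\end{align}
so that
\begin{align}
\frac{p_{i,\beta}^{(\ell)}}{\beta} = \frac{1}{\beta L} + \frac{1}{L}\left(n_i^T x_\ell - \frac{1}{L}\sum_{r=0}^{L-1} n_i^T x_r\right) + \frac{R(\beta,n_i)}{\beta},
\end{align}
where the remainder $R(\beta,n_i) = O(\beta^2)$ pointwise. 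Multiplying by $n_i$ and taking expectations, the first term vanishes because $\mathbb{E}[n_i] = 0$. For the linear term, using the isotropy identity $\mathbb{E}[n_i(n_i^T x)] = x$ for any deterministic $x \in \mathbb{R}^d$, I obtain $\tfrac{1}{L}(x_\ell - \tfrac{1}{L}\sum_r x_r)$, which rearranges to the claimed right-hand side of \eqref{eqn:lemmaBetaToZero2}.

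The main obstacle is justifying that $\mathbb{E}[n_i R(\beta,n_i)/\beta] \to 0$ as $\beta \to 0$, i.e., controlling the Taylor remainder under the expectation. I would address this by writing the remainder via the mean-value form
\begin{align}
\frac{R(\beta,n_i)}{\beta} = \frac{\beta}{2} \left. \frac{\partial^2 p_{i,\tilde\beta}^{(\ell)}}{\partial \beta^2}\right|_{\tilde\beta = \tilde\beta(n_i,\beta)}
\end{align}
for some intermediate $\tilde\beta \in (0,\beta)$. The second derivative of a softmax at argument $\beta (n_i^T x_0,\ldots,n_i^T x_{L-1})$ can be bounded explicitly by a quadratic polynomial in the quantities $n_i^T x_r$ (since the softmax probabilities lie in $[0,1]$), whose Gaussian moments are finite and independent of $\beta$. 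Thus $\|n_i R(\beta,n_i)/\beta\|$ is dominated, uniformly for $\beta \in (0,\beta_0]$, by an integrable random variable, and another application of dominated convergence closes the argument. Everything else is routine algebra to collect the coefficients into the stated form.
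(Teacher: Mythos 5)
Your proposal is correct and follows essentially the same route as the paper: the strong law of large numbers reduces the empirical averages to expectations for each fixed $\beta$, and a first-order Taylor expansion of $p_{i,\beta}^{(\ell)}$ at $\beta=0$ combined with $\mathbb{E}[n_i]=0$ and $\mathbb{E}\big[n_i\,(n_i^Tx)\big]=x$ yields the stated limits. If anything, your treatment of the Taylor remainder (Lagrange form with a $\beta$-uniform integrable dominating function built from the bounded softmax probabilities) is more careful than the paper's, which expands $p_{i,\beta}^{(\ell)}$ as a formal power series and asserts that the expected tail is a finite constant.
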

Combining \eqref{eqn:lemmaBetaToZero1} and \eqref{eqn:lemmaBetaToZero2}, we have,
    \begin{align}
         \lim_{\beta \to 0} \frac{\left.\mathbb{E}\pp{n_i p_{i,\beta}^{(\ell)}}\middle/ \mathbb{E}\pp{p_{i, \beta}^{(\ell)}}\right.}{\beta \cdot \pp{\p{1-\frac{1}{L}}x_\ell - \frac{1}{L}\sum_{r \neq \ell}x_r}} = 1\label{eqn:betaGoesToZeroA104},
    \end{align}
almost surely. By \eqref{eqn:softAsymptoticObservaionsEstimator}, we have,
    \begin{align}
        \lim_{M \to \infty} \hat{x}_\ell^{(\beta)} = \frac{\mathbb{E}\pp{n_i p_{i,\beta}^{(\ell)}}}{\mathbb{E}\pp{p_{i,\beta}^{(\ell)}}} \label{eqn:strongLawOfLargeNumbersSoftEstimator},
    \end{align}
almost surely. Therefore, substituting \eqref{eqn:betaGoesToZeroA104} into \eqref{eqn:strongLawOfLargeNumbersSoftEstimator}, we get,
    \begin{align}
        \lim_{\beta \to 0} \lim_{M \to \infty} \frac{ \hat{x}_\ell^{(\beta)}}{\beta \cdot \pp{\p{1-\frac{1}{L}}x_\ell - \frac{1}{L}\sum_{r \neq \ell}x_r}} 
        = \lim_{\beta \to 0} \frac{\left.\mathbb{E}\pp{n_i p_{i,\beta}^{(\ell)}}\middle/ \mathbb{E}\pp{p_{i, \beta}^{(\ell)}}\right.}{\beta \cdot \pp{\p{1-\frac{1}{L}}x_\ell - \frac{1}{L}\sum_{r \neq \ell}x_r}} =  1,
    \end{align}
which proves the second part of the theorem. It is left to prove Lemma \ref{lemma:betaToZero}.

\begin{proof}[Proof of Lemma \ref{lemma:betaToZero}]
By the SLLN, we have,
    \begin{align}
        \lim_{M \to \infty} \frac{1}{M} \sum_{i=0}^{M-1}{n_i p_{i,\beta}^{(\ell)}} = \mathbb{E}\pp{n_i p_{i,\beta}^{(\ell)}}, \label{eqn:betaGoesToZeroA97}
    \end{align}
and
     \begin{align}
        \lim_{M \to \infty} \frac{1}{M} \sum_{i=0}^{M-1}{p_{i,\beta}^{(\ell)}} = \mathbb{E}\pp{p_{i,\beta}^{(\ell)}}, \label{eqn:betaGoesToZeroA98}
    \end{align}   
almost surely. By definition, we have,
\begin{align}
       \nonumber \lim_{\beta \to 0} p_{i,\beta}^{(\ell)}  & =  \lim_{\beta\to 0} {\frac{e^{\beta {\,\langle{n_i}, x_\ell\rangle}}}{\sum_{r=0}^{L-1} e^{\beta {\,\langle{n_i}, x_r\rangle}}}} 
       \\ \nonumber &  = \lim_{\beta \to 0} {\frac{1}{\sum_{r=0}^{L-1} e^{\beta {\,\langle{n_i}, x_r - x_\ell\rangle} }}} 
       \\ & = \lim_{\beta \to 0} {\frac{1}{1 + \sum_{r\neq \ell} e^{\beta {\,\langle{n_i}, x_r - x_\ell\rangle}}}}. \label{eqn:A37}
    \end{align}
Using Taylor series expansion around $\beta=0$, we get,
    \begin{align}
        \frac{1}{M} & \sum_{i=0}^{M-1} p_{i,\beta}^{(\ell)} = \frac{1}{M}  \sum_{i=0}^{M-1} \frac{1}{1 + \sum_{r\neq \ell} e^{\beta {\,\langle{n_i}, x_r - x_\ell\rangle}}} \\ & = \frac{1}{L} + \frac{\beta}{L^2} \sum_{r\neq\ell} \pp{\frac{1}{M} \sum_{i=0}^{M-1} \langle{n_i}, x_\ell - x_r \rangle} +  \frac{1}{M}\sum_{i=0}^{M-1} \sum_{k=2}^{\infty}\beta^k \cdot f_k\p{\ppp{\langle{n_i}, x_r - x_\ell\rangle}_{r \neq \ell}}, \label{eqn:betaToZeroA121}
    \end{align}
where $f_k$ is the $k$th order Taylor expansion of $p_{i,\beta}^{(\ell)}$. Similarly, we have,
    \begin{align}
        \frac{1}{M}\sum_{i=0}^{M-1} n_i p_{i,\beta}^{(\ell)} &= \frac{1}{M}  \sum_{i=0}^{M-1} \frac{n_i}{1 + \sum_{r\neq \ell} e^{\beta {\,\langle{n_i}, x_r - x_\ell\rangle}}} \\ & = \frac{1}{L \cdot M} \sum_{i=0}^{M-1} n_i + \frac{\beta}{L^2} \sum_{r\neq\ell} \pp{\frac{1}{M} \sum_{i=0}^{M-1} n_i \langle{n_i}, x_\ell - x_r \rangle}  + \\ &\quad\quad\quad\quad +  \frac{1}{M}\sum_{i=0}^{M-1} \sum_{k=2}^{\infty}\beta^k n_i \cdot f_k\p{\ppp{\langle{n_i}, x_r - x_\ell\rangle}_{r \neq \ell}}.\label{eqn:betaToZeroA122}
    \end{align}
Now, taking $M \to \infty$ in \eqref{eqn:betaToZeroA121} and applying the SLLN once again, we obtain,
    \begin{align}
        \mathbb{E} \pp{p_{i,\beta}^{(\ell)}} = \frac{1}{L} + \frac{\beta}{L^2} \sum_{r\neq\ell} {\mathbb{E} \pp{ \langle{n_i}, x_\ell - x_r \rangle}} +  \mathbb{E} \pp{\sum_{k=2}^{\infty}\beta^k \cdot f_k\p{\ppp{\langle{n_i}, x_r - x_\ell\rangle}_{r \neq \ell}}}. \label{eqn:betaToZeroA123}
    \end{align}
Similarly, taking $M \to \infty$ in \eqref{eqn:betaToZeroA122}, and applying the SLLN, we get,
\begin{align}
        \mathbb{E} \pp{n_i p_{i,\beta}^{(\ell)}} = \frac{1}{L} \mathbb{E}\pp{n_i} + \frac{\beta}{L^2} \sum_{r\neq\ell} {\mathbb{E} \pp{n_i \langle{n_i}, x_\ell - x_r \rangle}} +  \mathbb{E} \pp{\sum_{k=2}^{\infty}\beta^k \cdot n_i f_k\p{\ppp{\langle{n_i}, x_r - x_\ell\rangle}_{r \neq \ell}}}. \label{eqn:betaToZeroA124}
    \end{align}
Since $\mathbb{E}\pp{p_{i, \beta}^{(\ell)}} < \infty$, and $\mathbb{E}\pp{\abs{n_i p_{i, \beta}^{(\ell)}}} \leq \mathbb{E}\pp{\abs{n_i}} < \infty$, then the terms at the r.h.s. of \eqref{eqn:betaToZeroA123} and \eqref{eqn:betaToZeroA124} are finite, i.e.,
\begin{align}
     \mathbb{E}\pp{\sum_{k=2}^{\infty}\beta^{k-2} \cdot f_k\p{\ppp{\langle{n_i}, x_r - x_\ell\rangle}_{r \neq \ell}}} &= \s{C}_0 < \infty, \label{eqn:betaToZeroA125}\\
     \mathbb{E}\pp{\sum_{k=2}^{\infty}\beta^{k-2} n_i \cdot f_k\p{\ppp{\langle{n_i}, x_r - x_\ell\rangle}_{r \neq \ell}}} &= \s{C}_1 < \infty.
     \label{eqn:betaToZeroA126}
\end{align}
Therefore, combining \eqref{eqn:betaToZeroA123}, and \eqref{eqn:betaToZeroA125}, we have,
    \begin{align}
         \lim_{\beta \to 0} \frac{\mathbb{E}\pp{p_{i,\beta}^{(\ell)}}}{\frac{1}{L} + \frac{\beta}{L^2} \sum_{r\neq\ell} {\mathbb{E} \pp{ \langle{n_i}, x_\ell - x_r \rangle}} + C_0 \cdot \beta^2} = \lim_{\beta \to 0} \frac{\mathbb{E}\pp{p_{i,\beta}^{(\ell)}}}{\frac{1}{L} + \frac{\beta}{L^2} \sum_{r\neq\ell} {\mathbb{E} \pp{ \langle{n_i}, x_\ell - x_r \rangle}}} = 1. \label{eqn:betaToZeroA127}
    \end{align}
Similarly, combining \eqref{eqn:betaToZeroA124}, and \eqref{eqn:betaToZeroA126}, we get,
    \begin{align}
         \lim_{\beta \to 0} & \frac{\mathbb{E}\pp{n_i p_{i,\beta}^{(\ell)}}}{\frac{1}{L}\mathbb{E}\pp{n_i} + \frac{\beta}{L^2} \sum_{r\neq\ell} {\mathbb{E} \pp{n_i \langle{n_i}, x_\ell - x_r \rangle}} + C_1 \cdot \beta^2} \\ & \quad\quad= \lim_{\beta \to 0} \frac{\mathbb{E}\pp{n_i p_{i,\beta}^{(\ell)}}}{\frac{1}{L} \mathbb{E} \pp{n_i} + \frac{\beta}{L^2} \sum_{r\neq\ell} {\mathbb{E} \pp{ n_i \langle{n_i}, x_\ell - x_r \rangle}}} = 1. \label{eqn:betaToZeroA128}
    \end{align}
Since $\mathbb{E}\pp{{\,\langle{n_i}, x_\ell - x_r\rangle}} = 0$ and $\mathbb{E}\pp{n_i{\,\langle{n_1},  x_\ell - x_r\rangle}} = x_\ell - x_r$, then the denominator of \eqref{eqn:betaToZeroA128} is,
    \begin{align}
          \mathbb{E}\pp{{\frac{n_i}{L}+\frac{\beta n_i}{L^2} \sum_{r\neq \ell}{\,\langle{n_i}, x_\ell-x_r\rangle} }} & = \frac{\beta}{L^2} \sum_{r\neq \ell} x_\ell - x_r \\ & = \frac{\beta}{L} \cdot \pp{\p{1-\frac{1}{L}} x_\ell - \frac{1}{L}\sum_{r \neq \ell} x_r}, \label{eqn:betaToZeroA129}
    \end{align}
and the denominator of \eqref{eqn:betaToZeroA127} is,
    \begin{align}
          \mathbb{E}\pp{{\frac{1}{L}+\frac{\beta}{L^2} \sum_{r\neq \ell}{\,\langle{n_i}, x_\ell-x_r\rangle} }} = \frac{1}{L}.
          \label{eqn:betaToZeroA130}
    \end{align}
Finally, substituting \eqref{eqn:betaToZeroA130} in \eqref{eqn:betaToZeroA127}, we obtain,
    \begin{align}
         \lim_{\beta \to 0} \lim_{M \to \infty} \frac{1}{M} \sum_{i=0}^{M-1} p_{i,\beta}^{(\ell)} = \lim_{\beta \to 0} {\mathbb{E}\pp{p_{i,\beta}^{(\ell)}}} = \frac{1}{L},
    \end{align}
and substituting \eqref{eqn:betaToZeroA129} in \eqref{eqn:betaToZeroA128}, gives,
    \begin{align}
         \lim_{\beta \to 0} \frac{1}{\beta}{{\mathbb{E}\pp{n_i p_{i,\beta}^{(\ell)}}}}{} = \frac{1}{L} \pp{\p{1-\frac{1}{L}}x_\ell - \frac{1}{L}\sum_{r \neq \ell}x_r},
    \end{align}
which completes the proof.

\end{proof}

\section{Proof of Theorem \ref{thm:hardAssignmentPositiveCorrelation}} \label{sec:appD}

\subsection{Hard-assignment}
We start by proving \eqref{eqn:closetstTemplateIsCorresponding}. From Lemma \ref{lem:convergenceOfHardAssign} and \eqref{eqn:hardAsymptoticObservaionsCorrelationMain}, we have that,
    \begin{align}
        {\langle{\hat{x}_\ell}, x_\ell\rangle} \xrightarrow[]{\s{a.s.}} \frac{\mathbb{E}\pp{\langle{n_1, x_\ell\rangle} \mathbbm{1}_{\ppp{n_1 \in \calV_\ell}}}}{\mathbb{P}\pp{n_1 \in \calV_\ell}},\label{eqn:xlxlCorrelation}
    \end{align}
and similarly,
    \begin{align}
        {\langle{\hat{x}_\ell}, x_k\rangle} \xrightarrow[]{\s{a.s.}} \frac{\mathbb{E}\pp{\langle{n_1, x_k\rangle} \mathbbm{1}_{\ppp{n_1 \in \calV_\ell}}}}{\mathbb{P}\pp{n_1 \in \calV_\ell}}. \label{eqn:xlxkCorrelation}
    \end{align}
By the definition of the set $\calV_\ell$, we have $n_1 \in \calV_\ell$ if and only if $\langle{n_1, x_\ell\rangle} \geq \langle{n_1, x_k \rangle}$, for every $k \neq \ell$. Therefore,
    \begin{align}
        \mathbb{E}\pp{\langle{n_1, x_\ell\rangle} \mathbbm{1}_{\ppp{n_1 \in \calV_\ell}}} - \mathbb{E}\pp{\langle{n_1, x_k\rangle} \mathbbm{1}_{\ppp{n_1 \in \calV_\ell}}} > 0, \label{eqn:closetTemplateIsCorrespondingTemplate2}
    \end{align}
where the strict inequality follows from the fact that the covariance matrix of the underlying Gaussian process is positive definite, and the maximum of such a Gaussian process is almost sure unique. Thus, \eqref{eqn:closetTemplateIsCorrespondingTemplate2} combined with \eqref{eqn:xlxlCorrelation} and \eqref{eqn:xlxkCorrelation} leads to \eqref{eqn:closetstTemplateIsCorresponding}. Next, \eqref{eqn:nonVanishingEstimator} follows immediately since $\hat{x}_\ell$ satisfies \eqref{eqn:closetstTemplateIsCorresponding}, and thus it cannot vanish. 

Next, we prove \eqref{eqn:positiveCorrelation}. To that end, we apply Proposition \ref{prop:hardAssignemtPositivePreli}, where $\s{S}^{(x)}$ , as defined in \eqref{eqn:SxDef}, plays the role of $\s{X}$ in Proposition \ref{prop:hardAssignemtPositivePreli}. The entries of the covariance matrix of $\s{S}$ are given by $\sigma_{ij} = \langle{x_i, x_j \rangle} / \norm{x_\ell}_2^2$, and by the assumptions in Theorem \ref{thm:hardAssignmentPositiveCorrelation}, they satisfy the conditions of Proposition~\ref{prop:hardAssignemtPositivePreli}. Finally, note that the event $\{\s{\hat{R}} = \ell\}$ in Proposition \ref{prop:hardAssignemtPositivePreli} is equivalent to the event $\ppp{n_i \in \calV_\ell}$. Therefore, it follows from \eqref{eqn:hardAssignemtPositivePreli} that,
    \begin{align}
       \mathbb{E} \pp{\langle{n_i, x_\ell\rangle} \mathbbm{1}_{\ppp{n_i \in \calV_\ell}}} > 0. \label{eqn:positiveCorrelationA109}
    \end{align}
Substituting \eqref{eqn:positiveCorrelationA109} in \eqref{eqn:xlxlCorrelation} concludes the proof.

Finally, we prove \eqref{eqn:positiveCorrelationSum}.
Fix $L\geq 2$. Consider normalized templates $\ppp{{x}_\ell}_{\ell=0}^{L-1}$.  Denote by $\ppp{\hat{x}_\ell}_{\ell=0}^{L-1}$ the output of Algorithm~\ref{alg:generalizedEfNhard}. 
According to Lemma \ref{lem:convergenceOfHardAssign} and \eqref{eqn:hardAsymptoticObservaionsCorrelationMain}, it follows that,
\begin{align}
    {\langle{\hat{x}_\ell}, x_\ell\rangle} \xrightarrow[]{\s{a.s.}} \frac{\mathbb{E}\pp{\langle{n_1, x_\ell\rangle} \mathbbm{1}_{\ppp{n_1 \in \calV_\ell}}}}{\mathbb{P}\pp{n_1 \in \calV_\ell}} = \frac{\mathbb{E}\pp{{\underset{0 \leq r \leq L-1} \max {\langle{n_1, x_r\rangle}}} \mathbbm{1}_{\ppp{n_1 \in \calV_\ell}}}}{\pr [\s{\hat{R}}^{(x)} = \ell]}, \label{eqn:E1}
\end{align}
as $M \to \infty$. Thus, substituting \eqref{eqn:E1} into the left-hand-side of \eqref{eqn:positiveCorrelationSum} results,
\begin{align}
     \sum_{\ell=0}^{L-1} {\langle{\hat{x}_\ell}, x_\ell\rangle} \mathbb{P}[\s{\hat{R}}^{(x)} = \ell] & \xrightarrow[]{\s{a.s.}}
     \sum_{\ell=0}^{L-1} \frac{\mathbb{E}\pp{{\underset{0 \leq r \leq L-1} \max {\langle{n_1, x_r\rangle}}} \mathbbm{1}_{\ppp{n_1 \in \calV_\ell}}}}{\pr [\s{\hat{R}}^{(x)} = \ell]} \pr [\s{\hat{R}}^{(x)} = \ell] 
     \\ & = \mathbb{E}\pp{{\underset{0 \leq r \leq L-1} \max {\langle{n_1, x_r\rangle}}}}.\label{eqn:E3}
\end{align}

Denote by $\rho = \underset{i \neq j} \min \pp{{\langle{{x}_{i}}, x_{j}\rangle}}$, and by $x_{i_1}, x_{i_2}$, the templates that satisfy $\rho = \langle{{x}_{i_1}}, x_{i_2}\rangle$. Then, by the convexity property of the maximum function, we have,
\begin{align}
    \mathbb{E}\pp{{\underset{0 \leq r \leq L-1} \max {\langle{n_1, x_r\rangle}}}} \geq \mathbb{E}{\pp{\underset{r \in \ppp{i_1,i_2}} \max {\langle{n_1, x_r\rangle}}}}. \label{eqn:E4}
\end{align}
By Lemma \ref{lemma:twoTemplatesExpectedValueOfMax}, it follows that,
\begin{align}
    \mathbb{E}{\pp{\underset{r \in \ppp{i_1,i_2}} \max {\langle{n_1, x_r\rangle}}}} = \sqrt{\frac{1-\rho}{\pi}}. \label{eqn:E5}
\end{align}
Thus, combining \eqref{eqn:E3}, \eqref{eqn:E4}, and \eqref{eqn:E5}, results,
\begin{align}
     \sum_{\ell=0}^{L-1} {\langle{\hat{x}_\ell}, x_\ell\rangle} \mathbb{P}[\s{\hat{R}}^{(x)} = \ell] & \xrightarrow[]{\s{a.s.}}
     \mathbb{E}\pp{{\underset{0 \leq r \leq L-1} \max {\langle{n_1, x_r\rangle}}}} 
     \\  & \geq  \mathbb{E}{\pp{\underset{r \in \ppp{i_1,i_2}} \max {\langle{n_1, x_r\rangle}}}}
     = \sqrt{\frac{1-\rho}{\pi}} > 0, \label{eqn:mainResultSectionE1}
\end{align}
which proves \eqref{eqn:positiveCorrelationSum}.

\subsection{Soft-assignment}

We start by proving \eqref{eqn:positiveCorrelation}. To that end, we apply Proposition \ref{prop:softAssignemtPositivePreli}, where $\s{S}^{(x)}$ , as defined in \eqref{eqn:SxDef}, plays the role of $\s{X}$ in Proposition \ref{prop:softAssignemtPositivePreli}. The entries of the covariance matrix of $\s{S}$ are given by $\sigma_{ij} = \langle{x_i, x_j \rangle} / \norm{x_\ell}_2^2$, and by the assumptions in Theorem \ref{thm:hardAssignmentPositiveCorrelation}, they satisfy the conditions of Proposition~\ref{prop:softAssignemtPositivePreli}. Therefore, it follows from \eqref{eqn:prop3} that,
    \begin{align}
       0 < \mathbb{E} \pp{S_\ell^{(x)} \cdot \frac{ \exp{\p{\beta S_\ell^{(x)}}}}{\sum_{r=0}^{L-1}\exp{\p{\beta S_r^{(x)}}}}} = \mathbb{E} \pp{\frac{{\langle{n_i, x_\ell\rangle}}}{\norm{x_\ell}_2} \cdot \frac{ \exp{\p{\beta \frac{{\langle{n_i, x_\ell\rangle}}}{\norm{x_\ell}_2}}}}{\sum_{r=0}^{L-1}\exp{\p{\beta \frac{{\langle{n_i, x_r\rangle}}}{\norm{x_r}_2}}}}} \label{eqn:prop3ForTheorem}.
    \end{align}
Since \eqref{eqn:prop3ForTheorem} is true for every $\beta > 0$, then choosing $\beta = \norm{x_\ell}_2$, and recalling that $\norm{x_\ell}_2 = \norm{x_r}_2$ for every $0 \leq r,l \leq L-1$, we get,
    \begin{align}
       0 < \mathbb{E} \pp{{{\langle{n_i, x_\ell\rangle}}} \cdot \frac{ \exp{\p{{{\langle{n_i, x_\ell\rangle}}}}}}{\sum_{r=0}^{L-1}\exp{\p{{{\langle{n_i, x_r\rangle}}}}}}} =  {\langle{\hat{x}_\ell, x_\ell\rangle}} \mathbb{E}\pp{p_i^{(\ell)}}.
    \end{align}
Since $\mathbb{E}[p_i^{(\ell)}] > 0$, it follows that ${\langle{\hat{x}_\ell, x_\ell\rangle}} > 0$, as claimed. Finally, we prove \eqref{eqn:nonVanishingEstimator}. Since the estimator $\hat{x}_\ell$ satisfies \eqref{eqn:positiveCorrelation}, it is clear it cannot vanish.

\section{Proof of Proposition \ref{thm:positiveCorrForCyclic}}

Fix $L\geq 2$. Let $\s{\hat{R}}$ be defined as in \eqref{eqn:OptShiftRealSpace}, with the normalized templates $\ppp{x_\ell}_{\ell=0}^{L-1}$, and by $\ppp{\hat{x}_\ell}_{\ell=0}^{L-1}$ the output of Algorithm~\ref{alg:generalizedEfNhard}. Define the Gaussian vector $\s{S}^{(x)}$ as,
\begin{align}
    {S}_\ell^{(x)} \triangleq {\langle{n_1, x_{\ell}\rangle}} / \norm{x_\ell}_2 \label{eqn:E8}.
\end{align}
Then, for templates $\ppp{{x}_\ell}_{\ell=0}^{L-1}$ that satisfy Assumption \ref{assump:0}, the corresponding vector ${S}_\ell^{(x)}$ satisfy the conditions of Lemma \ref{lem:cycloStationary}. Then, applying the lemma on $S_\ell^{(x)}$, we have,
\begin{align}
    \mathbb{P}[\s{\hat{R}} = \ell] = 1/L, \label{eqn:E9}
\end{align}
for every $\ell \in \pp{L}$, and,
\begin{align}
    \langle \hat{x}_{\ell_1}, x_{\ell_1} \rangle = \langle \hat{x}_{\ell_2}, x_{\ell_2} \rangle, \label{eqn:E10}
\end{align}
for every $\ell_2, \ell_2 \in \pp{L}$. Therefore, combining \eqref{eqn:E9} and \eqref{eqn:E10} and substituting into the left-hand-side of \eqref{eqn:positiveCorrelationSum}, for $M \to \infty$, results,
    \begin{align}
        \mathbb{E} \pp{{\langle{\hat{x}_\ell}, x_\ell\rangle}} = \sum_{\ell=0}^{L-1} {\langle{\hat{x}_\ell}, x_\ell\rangle} \mathbb{P}[\s{\hat{R}} = \ell] \geq \sqrt{\frac{1-\rho}{\pi}} > 0,
    \end{align}
which completes the proof of the proposition.

\section{Proof of Proposition \ref{thm:meanInverseDependency}}
Using Lemma \ref{lem:convergenceOfHardAssign} and \eqref{eqn:hardAsymptoticObservaionsCorrelationMain}, we have,
      \begin{align}
        \sum_{\ell=0}^{L-1} {\langle{\hat{x}_\ell}, x_{\ell}\rangle}\pr [\s{\hat{R}}^{(y)} = \ell] = \sum_{\ell=0}^{L-1} {\langle{\hat{x}_\ell}, x_{\ell}\rangle} \mathbb{E}\pp{\mathbbm{1}_{\ppp{n_i \in \calV_\ell}}} \xrightarrow[]{\s{a.s.}} \sum_{\ell=0}^{L-1} \mathbb{E}\pp{{\langle{n_i}, x_\ell\rangle} 
        \mathbbm{1}_{\ppp{n_i \in \calV_\ell}}},
    \end{align}
as $M\to\infty$. Thus, \eqref{eqn:hardAssignMeanInverse} is equivalent to proving the following,
    \begin{align}
        \sum_{\ell=0}^{L-1} \mathbb{E}\pp{{\langle{n_i}, x_\ell\rangle} \mathbbm{1}_{\ppp{n_i \in \calV_\ell^{(x)}}}} 
        \geq \sum_{\ell=0}^{L-1} \mathbb{E}\pp{{\langle{n_i}, y_\ell\rangle} \mathbbm{1}_{\ppp{n_i \in \calV_\ell^{(y)}}}} , \label{eqn:hradTargetFunctionMeanInverse}
    \end{align}
as $M\to\infty$, where $\{\calV_\ell^{(x)}\}_\ell$ and $\{\calV_\ell^{(y)}\}_\ell$ are defined as $\{\calV_\ell\}_\ell$ in \eqref{eqn:VlDef}, for two sets of templates $\{x_\ell\}_\ell$ and $\{y_\ell\}_\ell$, respectively. By Lemma \ref{lem:convergenceOfHardAssign} and \eqref{eqn:hardAsymptoticObservaionsCorrelationMain}, the following holds,
    \begin{align}
        \sum_{\ell=0}^{L-1} \mathbb{E}\pp{{\langle{n_i}, x_\ell\rangle} \mathbbm{1}_{\ppp{n_i \in \calV_\ell^{(x)}}}} & = \sum_{\ell = 0}^{L-1} \mathbb{E} \pp{\underset{0 \leq r \leq L-1} \max {\langle{n_i, x_r\rangle}}\mathbbm{1}_{\ppp{n_i \in \calV_\ell^{(x)}}} } 
        \\ & =  \mathbb{E} \pp{\underset{0 \leq r \leq L-1} \max {\langle{n_i, x_r\rangle}} \sum_{\ell = 0}^{L-1} \mathbbm{1}_{\ppp{n_i \in \calV_\ell^{(x)}}} } 
        \\ & = \mathbb{E} \pp{\underset{0 \leq r \leq L-1} \max {\langle{n_i, x_r\rangle}} }. 
        \label{eqn:sumOfTermIsMaximum}
    \end{align}
Combining \eqref{eqn:hradTargetFunctionMeanInverse} and \eqref{eqn:sumOfTermIsMaximum}, it follows that we need to prove that,
\begin{align}
    \mathbb{E} \pp{\underset{0 \leq \ell \leq L-1} \max {\langle{n_i, x_\ell\rangle}} } \geq \mathbb{E} \pp{\underset{0 \leq \ell \leq L-1} \max {\langle{n_i, y_\ell\rangle}} }. \label{eqn:equivalentConditionByMaximums}
\end{align}
To that end, we use the Sudakov-Fernique inequality \cite{vershynin2018high}.

\begin{lem}\label{lem:SudakovFerniqueInequality}
(Sudakov-Fernique inequality) Let ${\s{X}} = \p{{X}_0, {X}_1, ..., {X}_{L-1}}^T$ and ${\s{Y}} = \p{{Y}_0, {Y}_1, ..., {Y}_{L-1}}^T$ be two zero-mean Gaussian vectors, with  ${\s{X}} \sim \calN \p{0, \Sigma_{\s{X}}}$ and ${\s{Y}} \sim \calN \p{0, \Sigma_{\s{Y}}}$, satisfying, $\mathbb{E}\pp{\p{{X}_i-X_j}^2} \geq \mathbb{E}\pp{\p{{Y}_i-Y_j}^2}$ for all $i,j \in \pp{L}$. Then,
    \begin{align}
        \mathbb{E} \pp{\max \{{X}_0, {X}_1, ..., {X}_{L-1}\}} \geq \mathbb{E} \pp{\max \{{Y}_0, {Y}_1, ..., {Y}_{L-1}\}}.
    \end{align}
\end{lem}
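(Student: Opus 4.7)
My strategy is to adapt the Gaussian smoothing-and-interpolation argument used in the proof of Proposition \ref{prop:softAssignInverseDependence} to deduce the Sudakov--Fernique inequality for the non-smooth $\max$ from an analogous inequality for the log-sum-exp approximation $F_\beta$ in \eqref{eqn:logSumExpDef}. The two-sided bound \eqref{eqn:FbetaBounds} gives $F_\beta(\s{Q}) \to \max_\ell Q_\ell$ uniformly in $\s{Q}$ as $\beta \to \infty$, and $|F_\beta(\s{X})|$ is dominated by the integrable envelope $\max_\ell |X_\ell| + \log L / \beta$. By dominated convergence, $\mathbb{E}[F_\beta(\s{X})] \to \mathbb{E}[\max_\ell X_\ell]$ and similarly for $\s{Y}$, so it suffices to prove $\mathbb{E}[F_\beta(\s{X})] \geq \mathbb{E}[F_\beta(\s{Y})]$ for every fixed $\beta > 0$ and then let $\beta \to \infty$.

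To compare these expectations, I would couple $\s{X}$ and $\s{Y}$ to be independent, set $\s{Z}(t) := \sqrt{t}\,\s{X} + \sqrt{1-t}\,\s{Y}$ for $t \in [0,1]$, so that $\s{Z}(t)$ is centered Gaussian with covariance $t\,\Sigma_{\s{X}} + (1-t)\,\Sigma_{\s{Y}}$, and consider $\phi(t) := \mathbb{E}[F_\beta(\s{Z}(t))]$, which interpolates between $\mathbb{E}[F_\beta(\s{Y})]$ and $\mathbb{E}[F_\beta(\s{X})]$. Applying Gaussian integration by parts (Lemma \ref{lemma:3}) exactly as in the derivation of \eqref{eqn:phiDerivativeByT} yields
\[
\phi'(t) = \tfrac{\beta}{2} \sum_{i,j=0}^{L-1} A_{ij}\, \mathbb{E}\pp{-p_{i,\beta}(\s{Z}(t))\, p_{j,\beta}(\s{Z}(t)) + \delta_{ij}\, p_{i,\beta}(\s{Z}(t))},
\]
where $A_{ij} := [\Sigma_{\s{X}}]_{ij} - [\Sigma_{\s{Y}}]_{ij}$. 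The key algebraic step exploits $\sum_i p_{i,\beta}(\s{Z}(t)) = 1$ to symmetrize via the identity $\sum_i A_{ii}\, p_i - \sum_{i,j} A_{ij}\, p_i p_j = \tfrac{1}{2}\sum_{i,j}(A_{ii}+A_{jj}-2A_{ij})\, p_i p_j$, giving
\[
\phi'(t) = \tfrac{\beta}{4} \sum_{i,j} \Bigl(\mathbb{E}\pp{(X_i-X_j)^2} - \mathbb{E}\pp{(Y_i-Y_j)^2}\Bigr)\, \mathbb{E}\pp{p_{i,\beta}(\s{Z}(t))\, p_{j,\beta}(\s{Z}(t))}.
\]
By hypothesis each bracketed factor is nonnegative, and the expectation on the right is nonnegative as well, so $\phi'(t) \geq 0$. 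Integrating from $0$ to $1$ yields $\mathbb{E}[F_\beta(\s{X})] \geq \mathbb{E}[F_\beta(\s{Y})]$, and letting $\beta\to\infty$ completes the argument.

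The main conceptual hurdle is the symmetrization identity that converts a comparison of covariance \emph{entries} into a comparison of \emph{variances of increments}: this is precisely what allows the hypothesis $\mathbb{E}[(X_i-X_j)^2] \geq \mathbb{E}[(Y_i-Y_j)^2]$, which is weaker than entrywise covariance domination, to control the sign of $\phi'(t)$. The remaining technicalities---differentiating under the expectation and passing $\beta \to \infty$ inside the expectation---are routine given the uniform bounds on $F_\beta$ and the fact that its first-order partials $p_{i,\beta}$ lie in $[0,1]$.
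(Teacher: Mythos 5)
Your proof is correct. Note, however, that the paper does not actually prove Lemma~\ref{lem:SudakovFerniqueInequality}: it imports it as a known result, citing \cite{vershynin2018high}, so there is no internal argument to compare against. What you have supplied is the standard smart-path proof of Sudakov--Fernique, and it is worth remarking that it reuses exactly the machinery the paper already deploys elsewhere: the log-sum-exp smoothing $F_\beta$ with the uniform bound \eqref{eqn:FbetaBounds}, the interpolation $\s{Z}_t = \sqrt{t}\,\s{X}+\sqrt{1-t}\,\s{Y}$, and the Gaussian-interpolation derivative formula \eqref{eqn:phiDerivativeByT} (attributed there to \cite{chatterjee2005error}). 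The one genuinely new ingredient relative to the paper's Proposition~\ref{prop:softAssignInverseDependence} is your symmetrization identity
\begin{align*}
\sum_i A_{ii}\,p_i-\sum_{i,j}A_{ij}\,p_i p_j=\tfrac{1}{2}\sum_{i,j}\p{A_{ii}+A_{jj}-2A_{ij}}p_i p_j,
\end{align*}
which uses $\sum_i p_{i,\beta}=1$ to rewrite the derivative in terms of $\mathbb{E}\pp{(X_i-X_j)^2}-\mathbb{E}\pp{(Y_i-Y_j)^2}$; this is precisely what lets the weaker increment-variance hypothesis of the lemma (rather than entrywise covariance domination, which is what Proposition~\ref{prop:softAssignInverseDependence} assumes) control the sign of $\phi'(t)$. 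The limiting steps are handled properly: the envelope $\max_\ell|X_\ell|+\log L/\beta$ justifies dominated convergence as $\beta\to\infty$, and the boundedness of $p_{i,\beta}\in[0,1]$ justifies differentiating under the expectation. In short, the argument is complete and self-contained where the paper relies on an external citation.
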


Define the Gaussian vectors $\s{S}^{(x)}$ and $\s{S}^{(y)}$ as,
\begin{align}
    {S}_\ell^{(x)} \triangleq {\langle{n_1, x_{\ell}\rangle}} / \norm{x_\ell}_2, {S}_\ell^{(y)} = {\langle{n_1, y_{\ell}\rangle}}/\norm{y_\ell}_2 \label{eqn:SxSyDef},
\end{align}
for $\ell \in \pp{L}$. Note that $\s{S}^{(x)}$ and $\s{S}^{(y)}$ are zero-mean Gaussian vectors with the covariance matrices $\sigma_{\ell_1, \ell_2}^{(x)} = {\langle{x_{\ell_1}, x_{\ell_2}\rangle}} / \norm{x}_2^2$ and $\sigma_{\ell_1, \ell_2}^{(y)} = {\langle{y_{\ell_1}, y_{\ell_2}\rangle}} / \norm{y}_2^2$, for $\ell_1,\ell_2\in[L]$. By assumption, we have $\sigma_{\ell_1 \ell_2}^{(x)} \leq \sigma_{\ell_1 \ell_2}^{(y)}$, for every $\ell_1, \ell_2 \in \pp{L}$. Therefore, the Gaussian vectors $\s{S}^{(x)}$ and $\s{S}^{(y)}$ satisfy the conditions of the Sudakov-Fernique inequality, and it follows that,
    \begin{align}
         \mathbb{E} \pp{\underset{0 \leq \ell \leq L-1} \max {\langle{n_i, x_\ell\rangle}} } \geq \mathbb{E} \pp{\underset{0 \leq \ell \leq L-1} \max {\langle{n_i, y_\ell\rangle}} } \label{eqn:applyingSudakovFerniqueToThm2},
    \end{align}
which proves \eqref{eqn:equivalentConditionByMaximums}.

\section{Necessary conditions for inverse dependence}
In this subsection, we give some necessary conditions for the inverse dependence property to hold. Recall the definition of the function $p_\ell \p{\s{Q}}$ in \eqref{eqn:softMaxFuncDef1}, where $\s{Q} = (Q_0,Q_1,\ldots,Q_{L-1})$. In addition, recall the definitions of the  Gaussian vector $\s{S}^{(x)}$ and $\s{S}^{(y)}$ in \eqref{eqn:SxSyDef}. Define,
    \begin{align}
        w_{\ell,j}^{(x)} \triangleq \frac{\mathbb{E}\pp{p_\ell \p{\s{S}^{(x)}} p_j \p{\s{S}^{(x)}}}}{\mathbb{E}\pp{p_\ell \p{\s{S}^{(x)}}}}\label{eqn:weightWjlSx},
    \end{align}
and,
    \begin{align}
        w_{\ell,j}^{(y)} \triangleq \frac{\mathbb{E}\pp{p_\ell \p{\s{S}^{(y)}} p_j \p{\s{S}^{(y)}}}}{\mathbb{E}\pp{p_\ell \p{\s{S}^{(y)}}}}\label{eqn:weightWjlSy}.
    \end{align}
Note that $\sum_{j=0}^{L-1}w_{\ell,j}^{(x)} = 1$ and  $\sum_{j=0}^{L-1}w_{\ell,j}^{(y)} = 1$, for every $0 \leq \ell \leq L-1$. Furthermore, it holds $w_{\ell,j}^{(x)} \geq 0$ and $w_{\ell,j}^{(y)} \geq 0$, for every $0 \leq \ell,j \leq L-1$. We have the following result.
\begin{proposition}[Soft-assignment inverse dependency]\label{prop:softAssignmentGreaterCorrelation}
Fix $L\geq 2$. Let $\ppp{{x}_\ell}_{\ell=0}^{L-1}$ and $\ppp{{y}_\ell}_{\ell=0}^{L-1}$ be two sets of templates. Assume that,
    \begin{align}
        \sum_{j=0}^{L-1}{\langle{{x}_{\ell}}, x_{j}\rangle}w_{\ell,j}^{(x)} \leq \sum_{j=0}^{L-1}{\langle{{y}_{\ell}}, y_{j}\rangle}w_{\ell,j}^{(y)} \label{eqn:assumSoftAssignmentGreaterCorrelation},
    \end{align}
for every $0\leq\ell\leq L-1$. Denote by $\ppp{{\hat{x}}_\ell}_{\ell=0}^{L-1}$ and $\ppp{{\hat{y}}_\ell}_{\ell=0}^{L-1}$ the output of the soft-assignment estimators in Algorithm \ref{alg:generalizedEfNsoft}. Then, for every $0 \leq \ell \leq L-1$, we have,
    \begin{align}
        \mathbb{E} \pp{{\langle{\hat{x}_\ell}, x_\ell\rangle}} \geq  \mathbb{E} \pp{{\langle{\hat{y}_\ell}, y_\ell\rangle}} \label{eqn:softAssignmentGreaterCorrelation}.
    \end{align}
\end{proposition}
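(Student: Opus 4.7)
The plan is to reduce the asymptotic correlation $\lim_{M\to\infty}\mathbb{E}[\langle\hat{x}_\ell,x_\ell\rangle]$ to an explicit weighted sum involving the coefficients $w_{\ell,k}^{(x)}$ by applying Gaussian integration by parts (Lemma~\ref{lemma:3}), after which the desired inequality will follow by direct subtraction because the hypothesis \eqref{eqn:assumSoftAssignmentGreaterCorrelation} has been engineered to match exactly the weighted sum that is produced.

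First, I would start from the SLLN-based limit \eqref{eqn:softAsymptoticObservaionsCorrelation} and introduce the auxiliary centered Gaussian vector $Q\in\mathbb{R}^L$ with entries $Q_k\triangleq\langle n_1,x_k\rangle$, which has covariance $\mathbb{E}[Q_\ell Q_k]=\langle x_\ell,x_k\rangle$, so that $p_1^{(\ell)}=p_\ell(Q)$ in the notation of \eqref{eqn:softMaxFuncDef1} with $\beta=1$. I would then apply Lemma~\ref{lemma:3} to $F(Q)=p_\ell(Q)$, together with the softmax derivatives $\partial p_\ell/\partial q_k=-p_\ell p_k+\delta_{\ell k}p_\ell$ --- exactly as in the computation behind \eqref{eqn:gaussianIntegByPartsExplicit} --- to obtain
$$
\mathbb{E}\pp{Q_\ell\,p_\ell(Q)}=\norm{x_\ell}_2^{2}\,\mathbb{E}\pp{p_\ell(Q)}-\sum_{k=0}^{L-1}\langle x_\ell,x_k\rangle\,\mathbb{E}\pp{p_\ell(Q)p_k(Q)}.
$$
Dividing by $\mathbb{E}[p_\ell(Q)]>0$, recognizing the weights defined in \eqref{eqn:weightWjlSx}, and invoking \eqref{eqn:softAsymptoticObservaionsCorrelation}, this will yield
$$
\lim_{M\to\infty}\mathbb{E}\pp{\langle\hat{x}_\ell,x_\ell\rangle}=\norm{x_\ell}_2^{2}-\sum_{k=0}^{L-1}\langle x_\ell,x_k\rangle\,w_{\ell,k}^{(x)}.
$$

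Finally, I would repeat the same argument verbatim with the $y$-templates to produce the analogous identity involving $w_{\ell,k}^{(y)}$ and $\langle y_\ell,y_k\rangle$. Subtracting the two identities, the common-norm convention $\norm{x_\ell}_2=\norm{y_\ell}_2=\norm{x}_2$ makes the diagonal terms cancel, leaving
$$
\lim_{M\to\infty}\mathbb{E}\pp{\langle\hat{x}_\ell,x_\ell\rangle}-\lim_{M\to\infty}\mathbb{E}\pp{\langle\hat{y}_\ell,y_\ell\rangle}=\sum_{k=0}^{L-1}\langle y_\ell,y_k\rangle\,w_{\ell,k}^{(y)}-\sum_{k=0}^{L-1}\langle x_\ell,x_k\rangle\,w_{\ell,k}^{(x)},
$$
which is non-negative by \eqref{eqn:assumSoftAssignmentGreaterCorrelation}. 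I do not expect a substantive obstacle: all the ingredients (the SLLN limit, Gaussian integration by parts, the softmax derivative formula, and the positivity of $\mathbb{E}[p_\ell(Q)]$) are already developed in the preliminaries, and the hypothesis is crafted to match exactly the weighted sum produced by the Stein-type identity. The only small bookkeeping point is tracking the common-norm convention to ensure the $\norm{\cdot}_2^{2}$ terms cancel in the final subtraction.
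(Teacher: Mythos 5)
Your proposal is correct and follows essentially the same route as the paper's own proof: both apply the Gaussian integration-by-parts identity of Lemma~\ref{lemma:3} with the softmax derivative formula (as in \eqref{eqn:gaussianIntegByPartsExplicit}) to obtain $\mathbb{E}\pp{\langle\hat{x}_\ell,x_\ell\rangle}=\norm{x_\ell}_2^2-\sum_{k}\langle x_\ell,x_k\rangle w_{\ell,k}^{(x)}$ and its $y$-analogue, and then conclude by subtraction using the common-norm convention together with hypothesis \eqref{eqn:assumSoftAssignmentGreaterCorrelation}. No gaps beyond the paper's own level of detail.
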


We see that the condition in \eqref{eqn:assumSoftAssignmentGreaterCorrelation} is a weighted monotonicity assumption, where the weights $\{w_{\ell,j}^{(x)},w_{\ell,j}^{(y)}\}$ are proportional to certain relative probabilities in \eqref{eqn:softMaxFuncDef1}, for the choice of a given template. An implication of Proposition \ref{prop:softAssignmentGreaterCorrelation} is that when an additional template is added, whose correlation with the other templates is less than unity, then the correlation between the estimator and the corresponding template increases. 


\begin{proof}[Proof of Proposition \ref{prop:softAssignmentGreaterCorrelation}]
Define $X_\ell \triangleq {\langle{n_i, x_\ell\rangle}}$ and  $Y_\ell \triangleq {\langle{n_i, y_\ell\rangle}}$. Then, similarly to \eqref{eqn:gaussianIntegByPartsExplicit}, by using Lemma \ref{lemma:3}, we get,
    \begin{align}
        \frac{\mathbb{E}\pp{X_\ell p_\ell\p{\s{X}}}}{\mathbb{E} \pp{p_\ell\p{\s{X}}}} = \norm{x_\ell}_2^2-\sum_{k=0}^{L-1}{\langle{x_\ell, x_k\rangle}} w_{\ell,k}^{(x)},
    \end{align}
and
    \begin{align}
        \frac{\mathbb{E}\pp{Y_\ell p_\ell\p{\s{Y}}}}{\mathbb{E} \pp{p_\ell \p{{\s{Y}}}}} = \norm{y_\ell}_2^2-\sum_{k=0}^{L-1}{\langle{y_\ell, y_k\rangle}} w_{\ell,k}^{(y)}.
    \end{align}
Therefore, since $\norm{x_\ell}_2 = \norm{y_\ell}_2$, the assumption in \eqref{eqn:assumSoftAssignmentGreaterCorrelation}, implies that,
    \begin{align}
        \nonumber \mathbb{E} \pp{{\langle{\hat{x}_\ell}, x_\ell \rangle}} =  \frac{\mathbb{E}\pp{X_\ell p_\ell \p{\s{X}}}}{\mathbb{E} \pp{p_\ell \p{{\s{X}}}}} & = \norm{x_\ell}_2^2-\sum_{k=0}^{L-1}{\langle{x_\ell, x_k\rangle}} w_{\ell,k}^{(x)} \\ \nonumber & \geq \norm{y_\ell}_2^2-\sum_{k=0}^{L-1}{\langle{y_\ell, y_k\rangle}} w_{\ell,k}^{(y)} \\ & = \mathbb{E} \pp{{\langle{\hat{y}_\ell}, y_\ell\rangle}},
    \end{align}
which concludes the proof.    
\end{proof}

\section{Proof of Proposition \ref{thm:largerCorrelationForCycloCorrelations}}
Let $\{x_\ell\}_{\ell=0}^{L-1}$ and $\{y_\ell\}_{\ell=0}^{L-1}$ be sets of normalized template hypotheses, such that ${\langle{{x}_{\ell_1}}, x_{\ell_2}\rangle} \leq {\langle{{y}_{\ell_1}}, y_{\ell_2}\rangle}$, for every $0 \leq \ell_1,\ell_2 \leq L-1$. 
Denote by $\{\hat{x}_\ell\}_{\ell=0}^{L-1}$ and $\{\hat{y}_\ell\}_{\ell=0}^{L-1}$ the outputs of Algorithm~\ref{alg:generalizedEfNhard} when applied using the templates $\{x_\ell\}_{\ell=0}^{L-1}$ and $\{y_\ell\}_{\ell=0}^{L-1}$, respectively. Then, by Proposition \ref{thm:meanInverseDependency}, for $M \to \infty$, we have,
\begin{align}
    \sum_{\ell=0}^{L-1} {\langle{\hat{x}_\ell}, x_{\ell}\rangle} \mathbb{P}[\s{\hat{R}}^{(x)} = \ell] \geq  \sum_{\ell=0}^{L-1} {\langle{\hat{y}_\ell}, y_{\ell}\rangle} \mathbb{P}[\s{\hat{R}}^{(y)} = \ell],
    \label{eqn:H0}
\end{align}
almost surely.

Recall the definitions of $\s{S}^{(x)}$ and $\s{S}^{(y)}$ in \eqref{eqn:SxSyDef}.  By Assumption \ref{assump:0}, $\s{S}^{(x)}$ and $\s{S}^{(y)}$ are cyclo-stationary Gaussian vectors.  Then, we apply Lemma \ref{lem:cycloStationary}, whose conditions are satisfied for $\s{S}^{(x)}$ and $\s{S}^{(y)}$, and we have,
\begin{align}
    \mathbb{P}[\s{\hat{R}}^{(x)} = \ell] = \frac{1}{L}, \quad \mathbb{P}[\s{\hat{R}}^{(y)} = \ell] = \frac{1}{L}, \label{eqn:H5}
\end{align}
for every $\ell \in \pp{L}$, and, 
\begin{align}
    \langle \hat{x}_{\ell_1}, x_{\ell_1} \rangle = \langle \hat{x}_{\ell_2}, x_{\ell_2} \rangle, \quad \langle \hat{y}_{\ell_1}, y_{\ell_1} \rangle = \langle \hat{y}_{\ell_2}, y_{\ell_2} \rangle,
    \label{eqn:H6}
\end{align}
for every $\ell_1, \ell_2 \in \pp{L}$, as $M \to \infty$. Then, substituting \eqref{eqn:H5}, and \eqref{eqn:H6} into \eqref{eqn:H0} yields,
\begin{align}
    \mathbb{E} \pp{{\langle{\hat{x}_\ell}, x_\ell\rangle}} \geq  \mathbb{E} \pp{{\langle{\hat{y}_\ell}, y_\ell\rangle}} \label{eqn:H7},
\end{align}
which completes the proof.

\section{Proof of Proposition \ref{thm:monotonicity}}

Let $\mathcal{X} = \{x_\ell\}_{\ell=0}^{L_X-1}$ and $\mathcal{Y} = \{y_\ell\}_{\ell=0}^{L_Y-1}$ be sets of normalized template hypotheses. Denote by $\{\hat{x}_\ell\}_{\ell=0}^{L_X-1}$ and $\{\hat{y}_\ell\}_{\ell=0}^{L_Y-1}$ the outputs of Algorithm~\ref{alg:generalizedEfNhard} when applied using the templates $\mathcal{X}$ and $\mathcal{Y}$, respectively.
According to Lemma \ref{lem:convergenceOfHardAssign} and \eqref{eqn:hardAsymptoticObservaionsCorrelationMain}, as $M \to \infty$, we have
\begin{align}
    \langle \hat{x}_\ell, x_\ell \rangle \xrightarrow[]{\text{a.s.}} \frac{\mathbb{E}\left[\langle n_1, x_\ell \rangle \, \mathbbm{1}_{\{n_1 \in \mathcal{V}_\ell^{(x)}\}}\right]}{\mathbb{P}\left[n_1 \in \mathcal{V}_\ell^{(x)}\right]} 
    = \frac{\mathbb{E}\left[\underset{0 \leq r \leq L_X-1} \max \langle n_1, x_r \rangle \cdot \mathbbm{1}_{\{n_1 \in \mathcal{V}_\ell^{(x)}\}}\right]}{\mathbb{P}\left[\hat{\s{R}}^{(x)} = \ell\right]}, \label{eqn:I0}
\end{align}
and similarly,
\begin{align}
    \langle \hat{y}_\ell, y_\ell \rangle \xrightarrow[]{\text{a.s.}} \frac{\mathbb{E}\left[\langle n_1, y_\ell \rangle \, \mathbbm{1}_{\{n_1 \in \mathcal{V}_\ell^{(y)}\}}\right]}{\mathbb{P}\left[n_1 \in \mathcal{V}_\ell^{(y)}\right]} 
    = \frac{\mathbb{E}\left[\underset{0 \leq r \leq L_Y-1} \max \langle n_1, y_r \rangle \cdot \mathbbm{1}_{\{n_1 \in \mathcal{V}_\ell^{(y)}\}}\right]}{\mathbb{P}\left[\hat{\s{R}}^{(y)} = \ell\right]}. \label{eqn:I00}
\end{align}

Substituting \eqref{eqn:I0} into the left-hand side of \eqref{eqn:hardAssignMeanMonotone}, we obtain
\begin{align}
     \sum_{\ell=0}^{L_X-1} {\langle{\hat{x}_\ell}, x_\ell\rangle} \mathbb{P}[\s{\hat{R}}^{(x)} = \ell] & \xrightarrow[]{\s{a.s.}}
     \sum_{\ell=0}^{L_X-1} \frac{\mathbb{E}\pp{{\underset{0 \leq r \leq L_X-1} \max {\langle{n_1, x_r\rangle}}} \mathbbm{1}_{\{n_1 \in \calV_\ell^{(x)}\}}}}{\pr [\s{\hat{R}}^{(x)} = \ell]} \pr [\s{\hat{R}}^{(x)} = \ell] 
     \\ & = \mathbb{E}\pp{{\underset{0 \leq r \leq L_X-1} \max {\langle{n_1, x_r\rangle}}}}.\label{eqn:I1}
\end{align}
Similarly, using \eqref{eqn:I00}, we have
\begin{align}
     \sum_{\ell=0}^{L_Y-1} {\langle{\hat{y}_\ell}, y_\ell\rangle} \mathbb{P}[\s{\hat{R}}^{(y)} = \ell] & \xrightarrow[]{\s{a.s.}}
     \sum_{\ell=0}^{L_Y-1} \frac{\mathbb{E}\pp{{\underset{0 \leq r \leq L_Y-1} \max {\langle{n_1, x_r\rangle}}} \mathbbm{1}_{\{n_1 \in \calV_\ell^{(y)}\}}}}{\pr [\s{\hat{R}}^{(y)} = \ell]} \pr [\s{\hat{R}}^{(y)} = \ell] 
     \\ & = \mathbb{E}\pp{{\underset{0 \leq r \leq L_Y-1} \max {\langle{n_1, y_r\rangle}}}}.\label{eqn:I2}
\end{align}

Now, since by assumption $\{x_\ell\}_{\ell=0}^{L_X-1} \supseteq \{y_\ell\}_{\ell=0}^{L_Y-1}$, it follows that for any realization of $n_1$,
\begin{align}
    \max_{0 \leq r \leq L_X-1} \langle n_1, x_r \rangle \geq \max_{0 \leq r \leq L_Y-1} \langle n_1, y_r \rangle. \label{eqn:I3}
\end{align}
Taking expectations on both sides yields
\begin{align}
    \mathbb{E}\left[\max_{0 \leq r \leq L_X-1} \langle n_1, x_r \rangle\right] 
    \geq 
    \mathbb{E}\left[\max_{0 \leq r \leq L_Y-1} \langle n_1, y_r \rangle\right]. \label{eqn:I4}
\end{align}

Combining \eqref{eqn:I1}, \eqref{eqn:I2}, and \eqref{eqn:I4}, we conclude that
\begin{align}
    \sum_{\ell=0}^{L_X-1} \langle \hat{x}_\ell, x_\ell \rangle \, \mathbb{P}\left[\hat{\s{R}}^{(x)} = \ell\right] 
    \geq 
    \sum_{\ell=0}^{L_Y-1} \langle \hat{y}_\ell, y_\ell \rangle \, \mathbb{P}\left[\hat{\s{R}}^{(y)} = \ell\right],
\end{align}
which completes the proof.

\section{Proof of Theorem \ref{thm:hardAssignmentTwoHypoteses}}
To prove Theorem \ref{thm:hardAssignmentTwoHypoteses} we will combine a few facts. First, by applying Lemma \ref{lemma:twoTemplatesExpectedValueOfMax}, we show that for $M \to \infty$, we have,
        \begin{align}
            {{\langle{\hat{x}_\ell}, x_\ell\rangle}} \xrightarrow[]{\s{a.s.}} \sqrt{\frac{1-\rho}{\pi}} \norm{x}_2 \label{eqn:recapOfTheorem1},   
        \end{align}
for $\ell=0,1$. To that end, recall from Lemma \ref{lem:convergenceOfHardAssign} and \eqref{eqn:hardAsymptoticObservaionsCorrelationMain} that,
    \begin{align}
        {\langle{\hat{x}_\ell}, x_\ell\rangle} \xrightarrow[]{\s{a.s.}} \frac{\mathbb{E}\pp{\langle{n_i, x_\ell\rangle} \mathbbm{1}_{\ppp{n_1 \in \calV_\ell}}}}{\mathbb{P}\pp{n_1 \in \calV_\ell}}, \label{eqn:recapOfHardAsymptoticCorrelation}
    \end{align}
for $\ell=0,1$. For $L=2$ we clearly have that
$\mathbb{P}\pp{n_1 \in \calV_0} = \mathbb{P}\pp{n_1 \in \calV_1} = 1/2$. Now, note that,
\begin{align}
    \mathbb{E}\pp{{\underset{r\in\{0,1\}} \max {\langle{n_1, x_r\rangle}}}} = \mathbb{E}\pp{{\underset{r\in\{0,1\}} \max {\langle{n_i, x_r\rangle}}} \mathbbm{1}_{\ppp{n_i \in \calV_0}}} + \mathbb{E}\pp{{\underset{r\in\{0,1\}} \max {\langle{n_1, x_r\rangle}}} \mathbbm{1}_{\ppp{n_1 \in \calV_1}}}. \label{eqn:splitToTwoEvents}
\end{align}
Thus, by symmetry, since the first and second terms at the r.h.s. of \eqref{eqn:splitToTwoEvents} are equal, we have,
    \begin{align}
        \mathbb{E}\pp{\langle{n_1, x_\ell\rangle} \mathbbm{1}_{\ppp{n_1 \in \calA_\ell}}} = \frac{1}{2} \mathbb{E}\pp{{\underset{r\in\{0,1\}} \max {\langle{n_1, x_r\rangle}}}} \label{eqn:numeratorHalfOfMaximum},
    \end{align}
for $\ell=0,1$. Substituting \eqref{eqn:numeratorHalfOfMaximum} in \eqref{eqn:recapOfHardAsymptoticCorrelation} leads to,
    \begin{align}
        {\langle{\hat{x}_\ell}, x_\ell\rangle} \xrightarrow[]{\s{a.s.}} \mathbb{E}\pp{{\underset{r\in\{0,1\}} \max {\langle{n_1, x_r\rangle}}}} \label{eqn:relationOfCorrelationToMaximum}.  
    \end{align}
Recall the definitions of $\s{S}^{(x)}$ in \eqref{eqn:SxDef}. Applying Lemma \ref{lemma:twoTemplatesExpectedValueOfMax} on ${\underset{r=0,1} \max \ppp{S_r^{(x)}}}$, which satisfies the conditions of the lemma, we get that,
    \begin{align}
        \mathbb{E}\pp{{\underset{r\in\{0,1\}} \max {S_r^{(x)}}}} = \mathbb{E}\pp{{\underset{r\in\{0,1\}} \max {\langle{n_1, x_r\rangle}}}} / \norm{x}_2  = \sqrt{\frac{1-\rho}{\pi}} > 0,
    \end{align}
which proves \eqref{eqn:recapOfTheorem1}. 

To prove Theorem \ref{thm:hardAssignmentTwoHypoteses}, we will use Theorem \ref{thm:hardAssignmentLinearCombination} (which holds for $L=2$ as well, and which we will prove in the following section), which states that,
\begin{equation}
\begin{aligned}\label{eqn:estimatorsLinearCombination}
    \hat{x}_0 &\xrightarrow[]{\s{a.s.}} \alpha_{11} x_0 + \alpha_{12} x_1, \\
    \hat{x}_1 &\xrightarrow[]{\s{a.s.}} \alpha_{21} x_0 + \alpha_{22} x_1,
\end{aligned}
\end{equation}
as $M\to\infty$. From \eqref{eqn:hardAsymptoticObservaionsEstimator}, we have,
    \begin{align}
        \hat{x}_0 + \hat{x}_1 \xrightarrow[]{\s{a.s.}} \frac{\mathbb{E}\pp{n_1\mathbbm{1}_{\ppp{n_1 \in \calV_0}}}}{\mathbb{P}\pp{n_1 \in \calV_0}} + \frac{\mathbb{E}\pp{n_i\mathbbm{1}_{\ppp{n_1 \in \calV_1}}}}{\mathbb{P}\pp{n_1 \in \calV_1}} = 2\cdot\mathbb{E}\pp{n_i} = 0, \label{eqn:sumOfEstimatorIsZero}
    \end{align}
where the last step is due to that fact that $\mathbb{P}\pp{n_i \in \calV_0} = \mathbb{P}\pp{n_i \in \calV_1} = 1/2$. Since we assume that $\rho={\langle{x_0}, x_1\rangle}/\norm{x}_2<1$, combining the set of linear equations in \eqref{eqn:recapOfTheorem1}, \eqref{eqn:estimatorsLinearCombination}, and \eqref{eqn:sumOfEstimatorIsZero}, we can apply the continuous mapping theorem to extract the coefficients $\ppp{\alpha_{ij}}_{i,j=1}^2$; A straightforward algebra reveals that,
     \begin{align}
         \alpha_{11} = \alpha_{22} = -\alpha_{12} = -\alpha_{21} = \sqrt{\frac{1}{\pi\p{1-\rho}\norm{x}_2^2}} \label{eqn:linearCoefForL2}.
    \end{align}
The proof is concluded by combining \eqref{eqn:estimatorsLinearCombination} and \eqref{eqn:linearCoefForL2}.

\section{Proof of Theorem \ref{thm:softAssignmentTwoTemplates}}\label{eqn:softTwoAppro}
Recall from Algorithm \ref{alg:generalizedEfNsoft} that, 
    \begin{align}
       \hat{x}_0 = \frac{\frac{1}{M} \sum_{i=0}^{M-1}{n_i p_{i}^{(0)}}}{\frac{1}{M} \sum_{i=0}^{M-1}{p_{i}^{(0)}}} \label{eqn:softAssignmentEstimatorForTwoTemplates}.
    \end{align}
For the numerator, we have,
    \begin{align}
       \nonumber \frac{1}{M} \sum_{i=0}^{M-1}{n_i p_{i}^{(0)}} & = \frac{1}{M} \sum_{i=0}^{M-1}{ \frac{n_i\exp\p{{\langle{n_i, x_0\rangle}}}}{\exp\p{{\langle{n_i, x_0\rangle}}} + \exp\p{{\langle{n_i, x_1\rangle}}}}} \\ & = \frac{1}{M} \sum_{i=0}^{M-1}{ \frac{n_i}{1 + \exp\p{{\langle{n_i, x_1 - x_0\rangle}}}}}.
    \end{align}
By the SLLN, we thus get,
    \begin{align}
       \frac{1}{M} \sum_{i=0}^{M-1}{ \frac{n_i}{1 + \exp\ppp{{\langle{n_i, x_1 - x_0\rangle}}}}} \xrightarrow[]{\s{a.s.}} \mathbb{E}\pp{\frac{n_1}{1+\exp\p{{\langle{n_1, x_1 - x_0\rangle}}}}} \label{eqn:SLLNnumeratorTwoTemplates}.
    \end{align}
Similarly, for the denominator in \eqref{eqn:softAssignmentEstimatorForTwoTemplates}, by the SLLN, we have,
    \begin{align}
       \nonumber \frac{1}{M} \sum_{i=0}^{M-1}{p_{i}^{(0)}} & = \frac{1}{M} \sum_{i=0}^{M-1}{ \frac{\exp\p{{\langle{n_i, x_0\rangle}}}}{\exp\p{{\langle{n_i, x_0\rangle}}} + \exp\p{{\langle{n_i, x_1\rangle}}}}} \\ \nonumber & = \frac{1}{M} \sum_{i=0}^{M-1}{ \frac{1}{1 + \exp\p{{\langle{n_i, x_1 - x_0\rangle}}}}} \\ & \xrightarrow[]{\s{a.s.}} \mathbb{E}\pp{ \frac{1}{1 + \exp\p{{\langle{n_1, x_1 - x_0\rangle}}}}}.
    \end{align}
Since ${\langle{n_1, x_1 - x_0\rangle}} \sim \calN \p{0, \norm{x_1-x_0}_2^2}$, then, 
    \begin{align}
       \mathbb{E}\pp{ \frac{1}{1 + \exp\p{{\langle{n_1, x_1 - x_0\rangle}}}}} = \frac{1}{2}.\label{eqn:denominatorOfTwoTemplatesExpectationValue}
    \end{align}
Thus, combining \eqref{eqn:softAssignmentEstimatorForTwoTemplates}, \eqref{eqn:SLLNnumeratorTwoTemplates}, and \eqref{eqn:denominatorOfTwoTemplatesExpectationValue}, and applying the continuous mapping theorem we get \eqref{eqn:softBefApp1}. To prove \eqref{eqn:softBefApp2}, we notice that, by definition, $\hat{x}_0 + \hat{x}_1 = \frac{1}{M}\sum_{i=0}^{M-1}n_i \xrightarrow[]{\s{a.s.}} 0$, and thus, the result is obtained by using \eqref{eqn:denominatorOfTwoTemplatesExpectationValue}.

Finally, we establish the approximation in \eqref{eqn:x0hatTwoTemplates}. The logistic function can be approximated by the error function as follows \cite{crooks2009logistic},
    \begin{align}
        \frac{1}{1+\exp \p{-x}} \approx \frac{1}{2} + \frac{1}{2}\s{erf} \p{\frac{\sqrt{\pi}}{4}x}.
    \end{align}
Therefore, the expected value in \eqref{eqn:SLLNnumeratorTwoTemplates} can be approximated by,
    \begin{align}
       \mathbb{E}\pp{\frac{n_1}{1+\exp\ppp{{\langle{n_1, x_1 - x_0\rangle}}}}} &\approx \frac{1}{2}\mathbb{E}\pp{n_1} + \frac{1}{2}\mathbb{E}\pp{{n_1} \s{erf}\p{\frac{\sqrt{\pi}}{4}{\langle{n_1, x_0 - x_1\rangle}}}}\\
       & = \frac{1}{2}\mathbb{E}\pp{{n_1} \s{erf}\p{\frac{\sqrt{\pi}}{4}{\langle{n_1, x_0 - x_1\rangle}}}}\label{eqn:apprxoimationOfLogisticByErf}.
    \end{align}
It is known that \cite{ng1969table},
    \begin{align}
       \int_{-\infty}^{\infty} {\s{erf}\p{ax+b} \frac{1}{\sqrt{2\pi \sigma^2}}e^{-\frac{\p{x-\mu}^2}{2\sigma^2}} \mathrm{d}x} &= \s{erf}\p{\frac{a\mu+b}{\sqrt{1+2a^2\sigma^2}}} \label{eqn:erfIntegralIdentity1}\\
       \int_{-\infty}^{\infty} x \cdot {\s{erf}\p{ax} \frac{1}{\sqrt{2\pi \sigma^2}}e^{-\frac{{x}^2}{2\sigma^2}} \mathrm{d}x} &= \frac{2a{\sigma}^{2}}{\sqrt{\pi} \sqrt{1+2a^{2} {\sigma}^{2}}} \label{eqn:erfIntegralIdentity2}.
    \end{align} 
Using these identities in \eqref{eqn:apprxoimationOfLogisticByErf}, leads to,
    \begin{align}
       \frac{1}{2}\mathbb{E}\pp{{n_1} \s{erf}\p{\frac{\sqrt{\pi}}{4}{\langle{n_1, x_0 - x_1\rangle}}}} = \frac{1}{4}\frac{x_0 - x_1}{\sqrt{1+\frac{\pi}{4}\p{1-\rho}\norm{x}_2^2}} \label{eqn:nominatorOfTwoTemplatesExpectationValue},
    \end{align}
which together with \eqref{eqn:apprxoimationOfLogisticByErf} proves \eqref{eqn:x0hatTwoTemplates}. The approximation in \eqref{eqn:x1hatTwoTemplates} follows from the fact that $\hat{x}_1 \xrightarrow[]{\s{a.s.}} -\hat{x}_0$. 

\section{Proof of Theorem \ref{thm:hardAssignmentLinearCombination}}

We start with the hard-assignment procedure, with the understanding that the proof for the soft-assignment procedure is similar. We prove that for every $0\leq\ell\leq L-1$, the estimator $\hat{x}_\ell$ converges to a linear combination of the templates almost surely. From \eqref{eqn:hardAsymptoticObservaionsEstimator}, we have,
\begin{align}
        \nonumber \hat{x}_\ell & \triangleq \frac{1}{\abs{\calA_\ell}} \sum_{n_i \in \calA_\ell} {n_i} 
        \\ & = \frac{\frac{1}{M} \sum_{i=0}^{M-1} n_i \mathbbm{1}_{\ppp{n_i \in \calV_\ell}}}{\abs{\calA_\ell}/M}
        \xrightarrow[]{\s{a.s.}} \frac{\mathbb{E} \pp{n_1 \mathbbm{1}_{\ppp{n_1 \in \calV_\ell}}}}{\mathbb{P}\pp{n_1 \in \calV_\ell}} \label{eqn:asymptoticForGeneralizedEfNvector},
    \end{align}
as $M\to\infty$. The numerator term at the  r.h.s. of \eqref{eqn:asymptoticForGeneralizedEfNvector} is simply the expected value of all vectors which are closest to the vector $x_\ell$, relative to the other vectors in $\mathbb{R}^{d}$. Our aim is to show that $\mathbb{E} \pp{n_1 \mathbbm{1}_{\ppp{n_1 \in \calV_\ell}}}$ is a linear combination of the templates $\ppp{x_\ell}_{\ell=0}^{L-1}$, which would lead to the claimed result. 
We define $y_\ell \triangleq \mathbb{E}\pp{n_1 \mathbbm{1}_{\ppp{n_1 \in \calV_\ell}}}$. Combining \eqref{eqn:asymptoticForGeneralizedEfNvector} and \eqref{eqn:VlDef} leads to,
    \begin{align}
         y_\ell = \int_{v \in \calV_\ell} v f_{n_1}\p{v}\mathrm{d}v,\label{eqn:asymptoticEstimatorIntegralForm}
    \end{align}
where $f_{n_1}$ is the probability density function of $n_1$.

We consider two possible cases. If $\s{dim}\ppp{\s{span}\ppp{x_0, x_1,\ldots, x_{L-1}}} = d$, then it is clear that ${\hat{x}}_l \in \s{span}\ppp{x_0, x_1,\ldots, x_{L-1}}$, and thus the result follows trivially. Therefore, we assume that $r \triangleq \s{dim}\ppp{\s{span}\ppp{x_0, x_1,\ldots, x_{L-1}}} < d$. Therefore, $y_\ell$ can be represented as,
    \begin{align}
        y_\ell = \sum_{k=0}^{L-1}\alpha_{kl}x_\ell + \sum_{i=1}^{d-r}\beta_i u_i \label{eqn:efnEstimatorDecomposition},
    \end{align}
where $\s{span} \{\ppp{u_i}_{i=0}^{d-r}\}$ is the complement space of 
$\s{span}\ppp{x_0, x_1,\ldots, x_{L-1}}$, satisfying ${\langle{u_i, x_k\rangle}} = 0$ and ${\langle{u_i, u_j\rangle}} = 0$, for every $1 \leq i < j < d-r$ and $0 \leq k \leq L-1$. We next show that it must be the case that $\beta_i = 0$ for every $1 \leq i < j < d-r$. Indeed, following \eqref{eqn:efnEstimatorDecomposition}, and by orthonormality, 
    \begin{align}
        \beta_i = {\langle{y_\ell, u_i\rangle}}.
    \end{align}
From \eqref{eqn:asymptoticEstimatorIntegralForm}, we have,
    \begin{align}
        \beta_i = {\langle{y_\ell, u_i\rangle}} \xrightarrow[]{\s{a.s.}} \int_{v \in \calV_\ell} {\langle{v, u_i\rangle}} f_{n_1}\p{v}\mathrm{d}v \label{eqn:betaCoefficentAsymptotic}.
    \end{align}
We show that $\int_{v \in \calV_\ell} {\langle{v, u_i\rangle}} f_{n_1}\p{v} \cdot \mathrm{d}v = 0$. To that end, we note that each $v \in \calV_\ell$ can be decomposed into a parallel and orthogonal components as follows,
    \begin{align}
        v = \gamma_v u_i + w_v \label{eqn:vDecomposition},
    \end{align}
where $w_v \in \mathbb{R}^d$ is such that ${\langle{w_v, u_i\rangle}} = 0$, and $\gamma_v \in \mathbb{R}$. Since ${\langle{u_i, x_k\rangle}} = 0$, for every $0 \leq k \leq L-1$, if the vector $v\in\calV_\ell$, then it must be that $v' \triangleq -\gamma_v u_i + w_v\in\calV_\ell$. Indeed, this follows because,
    \begin{align}
        \nonumber v = \gamma_v u_i + w_v \in \calV_\ell & \Longleftrightarrow {\langle{\gamma_v u_i + w_v, x_\ell\rangle}} \geq \underset{k\neq \ell} {\max} {{\langle{\gamma_v u_i + w_v, x_k\rangle}}} \\ \nonumber &  \Longleftrightarrow {\langle{w_v, x_\ell\rangle}} \geq \underset{k\neq \ell} {\max} {{\langle{ w_v, x_k\rangle}}} \\ \nonumber & \Longleftrightarrow {\langle{-\gamma_v u_i + w_v, x_\ell\rangle}} \geq \underset{k\neq \ell} {\max} {{\langle{-\gamma_v u_i + w_v, x_k\rangle}}} \\ & \Longleftrightarrow v' = -\gamma_v u_i + w_v \in \calV_\ell.
    \end{align}
Thus, for each vector $v$ as in $\eqref{eqn:vDecomposition}$, there is a corresponding vector $v' = -\gamma_v u_i + w_v$ which is also in $\calV_\ell$. In addition, $v$ and $v'$ have the same norm, thus $f_{n_1}\p{v} = f_{n_1}\p{v'}$. Now, note that \eqref{eqn:betaCoefficentAsymptotic} can be rewritten as,
    \begin{align}
        \int_{v \in \calV_\ell} {\langle{v, u_i\rangle}} f_{n_1}\p{v} \mathrm{d}v =  \int_{v \in \calV_\ell: {\langle{v, u_i\rangle}} > 0} {\langle{v, u_i\rangle}} f_{n_1}\p{v} \mathrm{d}v + \int_{v \in \calV_\ell: {\langle{v, u_i\rangle}} < 0} {\langle{v, u_i\rangle}} f_{n_1}\p{v} \mathrm{d}v \label{eqn:pluggingGammaVintoTheIntegral}.
    \end{align}
Thus, since for every $v \in \calV_\ell$ there is a unique corresponding $v' \in \calV_\ell$ such that ${\langle{v, u_i\rangle}} = -{\langle{v', u_i\rangle}}$ and $f_{n_1}\p{v} = f_{n_1}\p{v'}$, it is clear that the r.h.s. of \eqref{eqn:pluggingGammaVintoTheIntegral} is zero. Therefore, combining \eqref{eqn:betaCoefficentAsymptotic} and \eqref{eqn:pluggingGammaVintoTheIntegral} leads to $\beta_i \xrightarrow[]{\s{a.s.}} 0$, for every $i$. Finally, plugging this in \eqref{eqn:efnEstimatorDecomposition}, and then in \eqref{eqn:asymptoticForGeneralizedEfNvector}, leads to,
    \begin{align}
        \hat{x}_\ell \xrightarrow[]{\s{a.s.}} \left.\p{\sum_{k=0}^{L-1}\alpha_{kl}x_\ell}\middle /\mathbb{P}\pp{n_1 \in \calV_\ell}\right.,
    \end{align}
which completes the proof.

\section{Expression for the linear coefficients} \label{sec:expressionForLinearCoefficents}
We derive the analytical expressions for the linear coefficients in Theorem \ref{thm:hardAssignmentLinearCombination}. Recalling the definition of $\s{S}^{(x)}$ from \eqref{eqn:SxDef}, explicitly given by,
\begin{align}
    {S}_{\ell}^{(x)} \triangleq \langle{n, x_{\ell}\rangle}, 
\end{align}
where $n \sim \mathcal{N} \p{0, I_{d \times d}}$ is a $d$-dimensional isotropic Gaussian vector. We will denote by $n_i$ the $i$-th entry of the vector $n$, for $0 \leq i \leq d-1$, and recall we assume that $\norm{x_\ell} = 1$, for every $\ell \in \pp{L}$. In addition, recall the definition of  $p_{\ell,\beta}$, \eqref{eqn:softMaxFuncDef1},
\begin{align}
   p_{\ell,\beta}\p{\s{S}^{(x)}} = \frac{ \exp{\p{\beta S_\ell^{(x)}}}}{\sum_{r=0}^{L-1}\exp{\p{\beta S_r^{(x)}}}} 
   = \frac{ \exp{\p{\beta \langle{n, x_{\ell}\rangle}}}}{\sum_{r=0}^{L-1}\exp{\p{\beta \langle{n, x_{r}\rangle}}}}.\label{eqn:K2}
\end{align}
From \eqref{eqn:softAsymptoticObservaionsEstimator}, we note that,
\begin{align}
    \hat{x}_\ell^{(\beta)} \xrightarrow[]{\s{a.s.}} 
    \frac{\mathbb{E}\pp{n \cdot p_{\ell, \beta} \p{\s{S}^{(x)}}}}{\mathbb{E}\pp{p_{\ell, \beta} \p{\s{S}^{(x)}}}}.\label{eqn:K3}
\end{align}
Using the Gaussian integration by parts lemma (see, e.g., \cite{ross2011fundamentals}), we have,
\begin{align}
    \mathbb{E}\pp{n_i \cdot p_{\ell, \beta} \p{\s{S}^{(x)}}} =  \sum_{j=0}^{d-1}\mathbb{E}\p{n_in_j} \mathbb{E}\p{\frac{\partial p_{\ell, \beta}}{\partial n_j} \p{\s{S}^{(x)}}}. \label{eqn:K4}
\end{align}
As $n$ has an identity covariance matrix, we have $\mathbb{E}\pp{n_i n_j} = \delta_{ij}$, giving the vector representation:
\begin{align}
    \mathbb{E}\pp{n \cdot p_{\ell, \beta} \p{\s{S}^{(x)}}} =   \mathbb{E}\pp{\frac{\partial p_{\ell, \beta}}{\partial n} \p{\s{S}^{(x)}}}. \label{eqn:K5}
\end{align}
The derivative in the right-hand-side of \eqref{eqn:K4} is given by
\begin{align}
    \frac{1}{\beta} \frac{\partial p_{\ell, \beta}}{\partial n} \p{\s{S}^{(x)}} = x_\ell \cdot p_{\ell, \beta} \p{\s{S}^{(x)}} - \sum_{r=0}^{L-1} x_r \cdot p_{\ell, \beta} \p{\s{S}^{(x)}} p_{r, \beta} \p{\s{S}^{(x)}}. \label{eqn:K6}
\end{align}
Combining \eqref{eqn:K5} and \eqref{eqn:K6}, and substituting into \eqref{eqn:K3} results,
\begin{align}
    {\hat{x}_\ell^{(\beta)}} \xrightarrow[]{\s{a.s.}} 
    \beta\p{x_\ell - \sum_{r=0}^{L-1} x_r \frac{\mathbb{E}\pp{p_{r, \beta}\p{\s{S}^{(x)}}p_{\ell, \beta}\p{\s{S}^{(x)}}}}{\mathbb{E}\pp{p_{\ell, \beta}\p{\s{S}^{(x)}}}}}\label{eqn:K7}.
\end{align}
Thus, for the linear coefficients in Theorem \ref{thm:hardAssignmentLinearCombination}, we obtain
\begin{align}
    \alpha_{k \ell} = \beta \p{\delta_{k \ell} - \frac{\mathbb{E}\pp{p_{r, \beta}\p{\s{S}^{(x)}}p_{\ell, \beta}\p{\s{S}^{(x)}}}}{\mathbb{E}\pp{p_{\ell, \beta}\p{\s{S}^{(x)}}}} }.
\end{align}
Specifically, the soft-assignment estimates correspond to $\beta = 1$, as defined in \eqref{eqn:softAssignmentUpdateStep}, while the hard-assignment case arises in the limit $\beta \to \infty$, as stated in Proposition \ref{thm:softAssignmentExtremeNormalization}.

\section{Approximation for a finite number of templates} \label{sec:approximationForFiniteNumberOfTemplates}
We derive an approximation for the soft-assignment estimator. First, we already know that,
    \begin{align}
       \hat{x}_\ell \xrightarrow[]{\s{a.s.}} \frac{\mathbb{E}\pp{n_1 p_1^{(\ell)}}}{\mathbb{E}\pp{p_1^{(\ell)}}} \label{eqn:sllnSoftEstimator},
    \end{align}
where $p_1^{(\ell)}$ is defined in \eqref{eqn:softMaxProbabilityDist}. We find an approximations for the numerator and denominator of \eqref{eqn:sllnSoftEstimator}. Our approximation is based on the following approximation of the expected value of the ratio between two random variables $A$ and $B$ \cite{stuart2010kendall},
    \begin{align}
       \mathbb{E} \pp{\frac{A}{B}} \approx \frac{\mu_A}{\mu_B} \p{1 - \frac{\s{cov} \p{A,B}}{\mu_A \mu_B} + \frac{\sigma_B^2}{\mu_B^2}} = \frac{\mu_A}{\mu_B} - \frac{\s{cov}\p{A,B}}{\mu_B^2} + \frac{\sigma_B^2 \mu_A}{\mu_B^3} \label{eqn:ratioExpectationApproximation},
    \end{align}
where $\mu_A$, $\mu_B$, $\sigma_A^2$, and $\sigma_B^2$, are the means and variances of $A$ and $B$, respectively, and $\s{cov}(A,B)$ is the covariance between $A$ and $B$. 

We first approximate the numerator in \eqref{eqn:sllnSoftEstimator}. Accordingly, we define, 
    \begin{align}
       A \triangleq n_1 e^{{\,\langle{n_1}, x_\ell \rangle}},
    \end{align}
and
    \begin{align}
       B \triangleq \sum_{r=0}^{L-1} e^{{\,\langle{n_1}, x_r \rangle}}.
    \end{align}
Note that here $A$ is a vector. We have,
    \begin{align}
       \mu_A = \mathbb{E}\pp{ n_1 e^{{\,\langle{n_1}, x_\ell \rangle}}} = x_\ell e^{\frac{\norm{x_\ell}_2^2}{2}},
    \end{align}
and
    \begin{align}
       \mu_B = \mathbb{E}\pp{\sum_{r=0}^{L-1} e^{{\,\langle{n_1}, x_r \rangle}}} = L\cdot e^{\frac{\norm{x_\ell}_2^2}{2}}.
    \end{align}
Also,
    \begin{align}
       \nonumber \mathbb{E}\pp{AB} & = \mathbb{E}\pp{n_1 e^{{\,\langle{n_1}, x_\ell \rangle}} \sum_{r=0}^{L-1} e^{{\,\langle{n_1}, x_r \rangle}}} \\ & =  \nonumber \sum_{r=0}^{L-1} \mathbb{E}\pp{n_1 e^{{\,\langle{n_1}, x_\ell+x_r \rangle}}} \\ & = \nonumber \sum_{r=0}^{L-1} \p{x_\ell+x_r} e^{\frac{\norm{x_\ell+x_r}_2^2}{2}} \\ & = \sum_{r=0}^{L-1} \p{x_\ell+x_r} e^{\norm{x_\ell}_2^2 + {\,\langle{x_\ell}, x_r \rangle}}.
    \end{align}
Therefore,
    \begin{align}
       \s{cov}\p{A,B} = \sum_{r=0}^{L-1} \p{x_\ell+x_r} e^{\norm{x}_2^2 + {\,\langle{x_\ell}, x_r \rangle}} - L \cdot x_\ell e^{\norm{x_\ell}_2^2}.
    \end{align}
Finally,
        \begin{align}
        \nonumber \mathbb{E}\pp{B^2} & = \mathbb{E}\pp{\p{\sum_{r=0}^{L-1} e^{{\,\langle{n_1}, x_r \rangle}}}^2}\\
        &=  \sum_{r_1,r_2=0}^{L-1} \mathbb{E}\pp{ e^{{\,\langle{n_1}, x_{r_1} + x_{r_2} \rangle}}}  \\ 
        & = \sum_{r_1,r_2=0}^{L-1} e^{\frac{\norm{x_{r_1} + x_{r_2}}_2^2}{2}}\\
        &= \sum_{r_1,r_2=0}^{L-1} e^{\norm{x_\ell}_2^2 + {\,\langle{x_{r_1}}, x_{r_2} \rangle}}.
    \end{align}
Thus,
    \begin{align}
       \sigma_B^2 = \sum_{r_1,r_2=0}^{L-1} e^{\norm{x_\ell}_2^2 + {\,\langle{x_{r_1}}, x_{r_2} \rangle}} - L^2 \cdot e^{\norm{x_\ell}_2^2}.
    \end{align}
Combining the above results with \eqref{eqn:ratioExpectationApproximation} leads to,
    \begin{align}
        \mathbb{E}\pp{n_1 p_1^{(\ell)}} & \approx \frac{\mu_A}{\mu_B} - \frac{\s{cov}\p{A,B}}{\mu_B^2} + \frac{\sigma_B^2 \mu_A}{\mu_B^3} \\ 
       &  = \frac{1}{L} \pp{x_\ell \p{1 - \frac{1}{L} \sum_{r=0}^{L-1} e^{{\,\langle{x_\ell}, x_r \rangle}} + \frac{1}{L^2} \sum_{r_1,r_2=0}^{L-1}e^{{\,\langle{x_{r_1}}, x_{r_2} \rangle}}}- \frac{1}{L} \sum_{r=0}^{L-1} x_re^{{\,\langle{x_\ell}, x_r \rangle}}}.\label{eqn:appro2}
    \end{align}
Using the same arguments, we can analyze the denominator in \eqref{eqn:sllnSoftEstimator}, and get,
    \begin{align}
        \mathbb{E}\pp{p_1^{(l)}} \approx \frac{1}{L} \pp{1 - \frac{1}{L} \sum_{r=0}^{L-1}e^{{\,\langle{x_\ell}, x_r \rangle}} + \frac{1}{L^2}\sum_{r_1,r_2=0}^{L-1} e^{{\,\langle{x_{r_1}}, x_{r_2} \rangle}}}.\label{eqn:appro3}
    \end{align}
Combining \eqref{eqn:sllnSoftEstimator}, \eqref{eqn:appro2}, and \eqref{eqn:appro3}, we obtain \eqref{eqn:approximationSoftForFiniteL}, as required.

\section{Proof of Theorem \ref{thm:hardAssignmentAsymptoticLandAsymptoticD}}
Recall the definition of $\s{S}^{(x)}$ as defined in \eqref{eqn:SxDef}. Note that the entries of the covariance matrix of $\s{S}^{(x)}$ are given by $\sigma_{ij} = \langle{x_i, x_j \rangle} / \norm{x_\ell}_2^2$. By Assumption \ref{assump:1}, $\s{S}^{(x)}$ satisfies the conditions of the first part of Proposition~\ref{prop:expectedValueOfTwoSoftMaxAsymptoticBetaAsymptoticL2}. Then, by \eqref{eqn:A66},
\begin{align}
    \lim_{L \to \infty} \frac{\mathbb{E}\left[\underset {0 \leq \ell \leq L-1} \max \langle n_1, x_\ell \rangle\right] }{\sqrt{2 \log {L}}}  = 1. \label{eqn:O01}
\end{align}

Similarly to \eqref{eqn:I2}, we have for $M \to \infty$,
\begin{align}
     \sum_{\ell=0}^{L-1} {\langle{\hat{x}_\ell}, x_\ell\rangle} \mathbb{P}[\s{\hat{R}}^{(x)} = \ell] & \xrightarrow[]{\s{a.s.}}
     \sum_{\ell=0}^{L-1} \frac{\mathbb{E}\pp{{\underset{0 \leq r \leq L-1} \max {\langle{n_1, x_r\rangle}}} \mathbbm{1}_{\{n_1 \in \calV_\ell^{(x)}\}}}}{\pr [\s{\hat{R}}^{(x)} = \ell]} \pr [\s{\hat{R}}^{(x)} = \ell] 
     \\ & = \mathbb{E}\pp{{\underset{0 \leq r \leq L-1} \max {\langle{n_1, x_r\rangle}}}}.\label{eqn:O02}
\end{align}
Thus, combining \eqref{eqn:O01}, and \eqref{eqn:O02}, yields,
    \begin{align}
        \lim_{d,L \to\infty} \lim_{M \to\infty} \frac{1}{\sqrt{2 \log L}}  \sum_{\ell=0}^{L-1}\langle{\hat{x}_\ell},{x_\ell}\rangle \mathbb{P}[\hat{\s{R}} = \ell]  = 1, \label{eqn:eqnO03}
    \end{align}
which proves \eqref{eqn:hardAssignmentAsymptoticLandAsymptoticD1}.

Next, we prove \eqref{eqn:hardAssignmentAsymptoticLandAsymptoticD2}. To that end, we apply the second part of Proposition \ref{prop:expectedValueOfTwoSoftMaxAsymptoticBetaAsymptoticL2}, and we note that its conditions are satisfied under Assumptions \ref{assump:0} and \ref{assump:1}, for the Gaussian vector $\s{S}^{(x)}$, defined above. Therefore, we have,
\begin{align}
     \lim_{L \to \infty} \frac{1}{\sqrt{2 \log L}} \cdot \frac{\mathbb{E}\pp{\langle{n_1, x_\ell\rangle} \mathbbm{1}_{\ppp{n_1 \in \calV_\ell}}}}{\mathbb{P}\pp{n_1 \in \calV_\ell}} = 1, \label{eqn:closetTemplateIsCorrespondingTemplateA219}
\end{align}
for every $\ell \in \pp{L}$. 
Combining \eqref{eqn:hardAsymptoticObservaionsEstimator}, and \eqref{eqn:closetTemplateIsCorrespondingTemplateA219}, we obtain,
\begin{align}
     \lim_{d,L \to \infty} \lim_{M \to \infty} \frac{1}{\sqrt{2 \log L}}  \cdot \langle \hat{x}_\ell, x_\ell \rangle 
     & = \lim_{d,L \to \infty} \frac{1}{\sqrt{2 \log L}} \cdot \frac{\mathbb{E}\pp{\langle{n_1, x_\ell\rangle} \mathbbm{1}_{\ppp{n_1 \in \calV_\ell}}}}{\mathbb{P}\pp{n_1 \in \calV_\ell}} = 1, \label{eqn:closetTemplateIsCorrespondingTemplateA220}
\end{align}
which concludes the proof.

\section{Proof of Theorem \ref{thm:softAssignmentAsymptoticL}}
To prove Theorem \ref{thm:softAssignmentAsymptoticL}, we will use Bernstein's law of large numbers \cite{durrett2019probability}. 
\begin{proposition} \label{prop:4}(Bernstein's LLN)
Let $ Y_1, Y_2, \ldots $ be a sequence of random variables with finite expectation $\mathbb{E}\p{Y_j} = \mu < \infty$, and uniformly bounded variance $\text{Var} \p{Y_j} < K < \infty$ for every $j \geq 1$, and $\text{Cov} \p{Y_i, Y_j} \to 0$, as $\abs{i-j} \to \infty$. Then,
    \begin{align}
       \frac{1}{n}\sum_{i=1}^{n} Y_i \xrightarrow[]{\calP} \mu,
    \end{align}
as $n \to \infty$.
\end{proposition}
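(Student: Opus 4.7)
The plan is to prove Bernstein's LLN via a second-moment argument combined with Chebyshev's inequality. Let $\bar{Y}_n \triangleq \frac{1}{n}\sum_{i=1}^n Y_i$. Without loss of generality, by replacing $Y_j$ with $Y_j - \mu$ (which preserves all the hypotheses), I may assume $\mu = 0$, so that the target reduces to showing $\bar{Y}_n \xrightarrow{\mathcal{P}} 0$. By Chebyshev's inequality, for any $\delta > 0$,
\begin{align}
\mathbb{P}\p{\abs{\bar{Y}_n} > \delta} \leq \frac{\mathrm{Var}\p{\bar{Y}_n}}{\delta^2},
\end{align}
so it suffices to establish $\mathrm{Var}\p{\bar{Y}_n} \to 0$ as $n \to \infty$.

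The key computation is to expand
\begin{align}
\mathrm{Var}\p{\bar{Y}_n} = \frac{1}{n^2}\sum_{i=1}^{n}\sum_{j=1}^{n} \mathrm{Cov}\p{Y_i, Y_j},
\end{align}
and bound it by splitting the double sum into three pieces: (i) the diagonal $i = j$; (ii) off-diagonal pairs with $\abs{i-j} \leq N$; and (iii) off-diagonal pairs with $\abs{i-j} > N$, for a cutoff $N$ to be chosen. The diagonal contribution is at most $n K / n^2 = K/n$, which vanishes. For the near-diagonal band, Cauchy--Schwarz gives $\abs{\mathrm{Cov}\p{Y_i, Y_j}} \leq \sqrt{\mathrm{Var}\p{Y_i}\mathrm{Var}\p{Y_j}} < K$, so since there are at most $2nN$ such pairs, this contribution is at most $2KN/n$. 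For the far-off-diagonal part, fix $\epsilon > 0$ and use the hypothesis $\mathrm{Cov}\p{Y_i, Y_j} \to 0$ as $\abs{i-j}\to\infty$ to choose $N = N(\epsilon)$ so that $\abs{\mathrm{Cov}\p{Y_i, Y_j}} < \epsilon$ whenever $\abs{i-j} > N$; that part is then bounded by $\epsilon$.

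Putting the three pieces together yields, for every $\epsilon > 0$,
\begin{align}
\limsup_{n\to\infty} \mathrm{Var}\p{\bar{Y}_n} \leq \limsup_{n\to\infty} \p{\frac{K}{n} + \frac{2KN(\epsilon)}{n} + \epsilon} = \epsilon,
\end{align}
and since $\epsilon$ is arbitrary, $\mathrm{Var}\p{\bar{Y}_n} \to 0$. Combined with Chebyshev, this proves $\bar{Y}_n \xrightarrow{\mathcal{P}} \mu$.

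The main obstacle is mild: it is essentially the bookkeeping in the three-way split of the covariance sum and the correct ordering of the quantifiers (choose $\epsilon$ first, then $N(\epsilon)$, then send $n\to\infty$). No other machinery is required; finiteness of $\mu$ ensures the centering step is legitimate, uniform boundedness of the variances controls the near-diagonal band, and the decay of covariances controls the bulk. Since this is a classical textbook result (see, e.g., \cite{durrett2019probability}), a detailed proof may even be omitted in favor of a reference.
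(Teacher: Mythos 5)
Your proof is correct: the three-way split of the covariance double sum (diagonal, band $0<\abs{i-j}\leq N$, bulk $\abs{i-j}>N$) together with Chebyshev is exactly the standard second-moment argument for Bernstein's LLN, and your quantifier ordering (fix $\epsilon$, choose $N(\epsilon)$, then let $n\to\infty$) is handled properly. The paper itself supplies no proof of this proposition---it only cites \cite{durrett2019probability}---and your argument is precisely the classical one behind that citation, so there is nothing to reconcile.
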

We also use the following result, which we prove later on in this subsection.
\begin{lem} \label{lemma:5}
Assume ${\,\langle{x_{\ell_1}}, x_{\ell_2} \rangle} \to 0$, as $\abs{\ell_1-\ell_2} \to \infty$. Then,
    \begin{align}
       \frac{1}{L}\sum_{r=0}^{L-1} e^{{\,\langle{n_1}, x_r \rangle}}\xrightarrow[]{\calP}\mathbb{E}\pp{e^{\,\langle{n_1}, x_1 \rangle}} = {e^{\frac{\norm{x}_2^2}{2}}},
       \label{eqn:lemma5}
    \end{align}
as $L \to \infty$.
\end{lem}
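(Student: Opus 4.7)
\textbf{Proof plan for Lemma \ref{lemma:5}.} The plan is to apply Bernstein's law of large numbers (Proposition \ref{prop:4}) to the sequence of random variables
\begin{align}
    Y_r \triangleq e^{\langle n_1, x_r \rangle}, \qquad r=0,1,\ldots,L-1,
\end{align}
conditioned on nothing (the randomness is through $n_1\sim\calN(0,I)$ but the indexing runs over the deterministic templates). It suffices to verify the three hypotheses of Proposition \ref{prop:4}: common finite mean, uniformly bounded variance, and decay of covariances in $|\ell_1-\ell_2|$.

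First, since $\langle n_1,x_r\rangle\sim\calN(0,\|x_r\|_2^2)$ and $\|x_r\|_2=\|x\|_2$ for all $r$, the moment generating function identity yields
\begin{align}
    \mathbb{E}[Y_r] = e^{\|x\|_2^2/2}, \qquad \mathbb{E}[Y_r^2] = \mathbb{E}\pp{e^{2\langle n_1,x_r\rangle}} = e^{2\|x\|_2^2},
\end{align}
so $\mathrm{Var}(Y_r) = e^{2\|x\|_2^2}-e^{\|x\|_2^2}$ is a finite constant independent of $r$, and in particular uniformly bounded. This gives the common mean $\mu = e^{\|x\|_2^2/2}$ and the bounded variance required by Bernstein's LLN.

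Next, for the covariance, observe that $\langle n_1,x_{\ell_1}\rangle+\langle n_1,x_{\ell_2}\rangle = \langle n_1, x_{\ell_1}+x_{\ell_2}\rangle$ is Gaussian with variance $\|x_{\ell_1}+x_{\ell_2}\|_2^2 = 2\|x\|_2^2 + 2\langle x_{\ell_1},x_{\ell_2}\rangle$. Hence
\begin{align}
    \mathrm{Cov}(Y_{\ell_1},Y_{\ell_2})
    &= \mathbb{E}\pp{e^{\langle n_1, x_{\ell_1}+x_{\ell_2}\rangle}} - \mathbb{E}[Y_{\ell_1}]\mathbb{E}[Y_{\ell_2}] \\
    &= e^{\|x\|_2^2+\langle x_{\ell_1},x_{\ell_2}\rangle} - e^{\|x\|_2^2}
     = e^{\|x\|_2^2}\p{e^{\langle x_{\ell_1},x_{\ell_2}\rangle}-1}.
\end{align}
By the hypothesis $\langle x_{\ell_1},x_{\ell_2}\rangle\to 0$ as $|\ell_1-\ell_2|\to\infty$, the right-hand side tends to zero, which is exactly the covariance-decay condition in Proposition \ref{prop:4}.

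Having verified all three hypotheses, Bernstein's LLN gives $\tfrac{1}{L}\sum_{r=0}^{L-1} Y_r \xrightarrow[]{\calP} e^{\|x\|_2^2/2}$ as $L\to\infty$, which is exactly \eqref{eqn:lemma5}. I do not anticipate any real obstacle here; the only subtle point is the covariance identity, which uses both the equal-norm assumption on the templates and the Gaussian m.g.f., and which is what transparently converts the inner-product decay hypothesis into the Bernstein covariance condition.
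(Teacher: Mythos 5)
Your proposal is correct and follows essentially the same route as the paper: both define $Y_r = e^{\langle n_1, x_r\rangle}$, compute the mean, variance, and covariance via the Gaussian moment generating function (arriving at the identical covariance expression $e^{\|x\|_2^2+\langle x_{\ell_1},x_{\ell_2}\rangle}-e^{\|x\|_2^2}$), and conclude by Bernstein's law of large numbers. No gaps.
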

Define the sequence of random variables,
    \begin{align}
      Z_L \triangleq n_1p_1^{(\ell)}= { \frac{n_1 e^{{\,\langle{n_1}, x_\ell \rangle}}}{\frac{1}{L}\sum_{r=0}^{L-1} e^{{\,\langle{n_1}, x_r \rangle}}}}, \label{eqn:asymptoticLsoftAssignA164}
    \end{align}
indexed by $L$. Lemma \ref{lemma:5} implies that the denominator in \eqref{eqn:asymptoticLsoftAssignA164} converges, as $L\to\infty$, to ${e^{\frac{\norm{x}_2^2}{2}}} > 0$ in probability. Thus, applying the continuous mapping theorem, 
    \begin{align}
      Z_L \xrightarrow[]{\calP} n_1 e^{{\,\langle{n_1}, x_\ell \rangle}-\frac{\norm{x}_2^2}{2}},
    \end{align}
as $L \to \infty$. Now, since the sequence $\{Z_L\}_L$ is uniformly integrable for every $L$ (note that $\abs{Z_L} \leq \abs{n_1}$ and $\mathbb{E}\abs{n_1} < \infty$), we also have that $\{Z_L\}_L$ converges in expectation, namely, 
    \begin{align}
      \lim_{L \to \infty} \mathbb{E} \pp{Z_L} = \mathbb{E} \pp{n_1 e^{{\,\langle{n_1}, x_\ell \rangle}-\frac{\norm{x}_2^2}{2}}} = x_\ell \label{eqn:nominatorConvergenceForLargeL}.
    \end{align}
Using similar arguments, we get,
    \begin{align}
      \lim_{L\to\infty} \mathbb{E} \pp{p_1^{(\ell)}}= \lim_{L \to \infty} \mathbb{E} \pp{{ \frac{e^{{\,\langle{n_1}, x_\ell \rangle}}}{\frac{1}{L}\sum_{r=0}^{L-1} e^{{\,\langle{n_1}, x_r \rangle}}}}} = \mathbb{E} \pp{e^{{\,\langle{n_1}, x_\ell \rangle}-\frac{\norm{x}_2^2}{2}}}  = 1 \label{eqn:denominatorConvergenceForLargeL}.
    \end{align}
Thus, combining \eqref{eqn:nominatorConvergenceForLargeL} and \eqref{eqn:denominatorConvergenceForLargeL}, by the SLLN, we get,
    \begin{align}
       \lim_{M, L \to \infty} \hat{x}_\ell = \lim_{M,L \to \infty} \frac{Z_L}{p_1^{(\ell)}} = x_\ell.
    \end{align}
as claimed. It is left to prove Lemma \ref{lemma:5}.

\begin{proof}[Proof of Lemma \ref{lemma:5}]
Let us denote $Y_i = e^{{\,\langle{n_1}, x_{i} \rangle}}$. In order to apply Proposition \ref{prop:4} in our case, we need to show that the expectation of $Y_i$ is finite, its variance is uniformly bounded, and that the covariance decays to zero, as $L \to \infty$. For the expectation, we have,
        \begin{align}
           \mathbb{E}\pp{e^{{\,\langle{n_1}, x_{i} \rangle}}} = e^{\frac{\norm{x}_2^2}{2}} < \infty.
        \end{align}
The variance is given by,
    \begin{align}
       \s{Var}\pp{e^{{\,\langle{n_1}, x_{i} \rangle}}} = e^{2\norm{x}_2^2} - e^{\norm{x}_2^2} < \infty.
    \end{align}
Finally, for the covariance we have,
    \begin{align}
       \nonumber \s{cov} \p{e^{{\,\langle{n_1}, x_{\ell_1} \rangle}},e^{{\,\langle{n_1}, x_{\ell_2} \rangle}}} & = \mathbb{E}\pp{e^{{\,\langle{n_1}, x_{\ell_1} + x_{\ell_2} \rangle}}} - \mathbb{E}\pp{e^{{\,\langle{n_1}, x_{\ell_1} \rangle}}}\mathbb{E}\pp{e^{{\,\langle{n_1}, x_{\ell_2} \rangle}}} \\ 
       & = e^{{{\norm{x}_2^2 + {\,\langle{x_{\ell_1}, x_{\ell_2} \rangle}}}}} - e^{{{\norm{x}_2^2}}}.
    \end{align}
By assumption, we have ${\,\langle{x_{\ell_1}, x_{\ell_2} \rangle}} \to 0$, as $\abs{\ell_1 - \ell_2} \to \infty$. Thus,
    \begin{align}
       \nonumber \s{cov} \p{e^{{\,\langle{n_1}, x_{\ell_1} \rangle}},e^{{\,\langle{n_1}, x_{\ell_2} \rangle}}} \to 0.
    \end{align}
Therefore, all the assumptions of Proposition \ref{prop:4} are satisfied, which proves \eqref{eqn:lemma5}, as required.
\end{proof}

\section{Proof of Proposition~\ref{thm:lowerBoundAfterTiterations}}
First, we demonstrate that, under the assumptions of Proposition~\ref{thm:lowerBoundAfterTiterations}, the following holds,
\begin{align}
    \norm{\hat{x}_\ell^{(T)} - \hat{x}_\ell^{(0)}} \leq T \sqrt{2\epsilon}. \label{eqn:O1}
\end{align}
As $\norm{ \hat{x}_\ell^{(t)}} = 1$, for every $t \in \pp{T}$, it follows that
\begin{align}
    \norm{\hat{x}_\ell^{(t+1)} - \hat{x}_\ell^{(t)}}^2 = 2 - 2 \langle \hat{x}_\ell^{(t+1)}, \hat{x}_\ell^{(t)} \rangle. \label{eqn:O2}
\end{align}
By the assumption $\langle \hat{x}_\ell^{(t+1)}, \hat{x}_\ell^{(t)} \rangle \geq 1 - \epsilon$, it follows that
\begin{align}
    \norm{\hat{x}_\ell^{(t+1)} - \hat{x}_\ell^{(t)}}^2 \leq 2 - 2 \p{1-\epsilon} = 2\epsilon. \label{eqn:O3}
\end{align}
Applying a telescopic series, followed by the triangle inequality on the left-hand-size of \eqref{eqn:O1} results in,
\begin{align}
    \norm{\hat{x}_\ell^{(T)} - \hat{x}_\ell^{(0)}} = \norm{\sum_{t=0}^{T-1} \p{\hat{x}_\ell^{(t+1)} - \hat{x}_\ell^{(t)}}} \leq \sum_{t=0}^{T-1} \norm{\p{\hat{x}_\ell^{(t+1)} - \hat{x}_\ell^{(t)}}} \leq \sqrt{2\epsilon} T, \label{eqn:O4}
\end{align}
which proves \eqref{eqn:O1}. Therefore,
\begin{align}
    \langle \hat{x}_\ell^{(T)}, \hat{x}_\ell^{(0)} \rangle = 1 - \frac{\norm{\hat{x}_\ell^{(T)} - \hat{x}_\ell^{(0)}}^2}{2} \geq 1- \frac{(\sqrt{2 \epsilon}T)^2}{2} = 1-T^2 \epsilon, \label{eqn:O5}
\end{align}
completing the proof.

\section{Proof of Proposition \ref{thm:LequalsTwoMultiIteration}}
Fix $L=2$, and denote the initialization by $\hat{x}_\ell^{(0)} = x_\ell$ for $\ell = 0,1$, and $\rho = \langle x_0, x_1 \rangle$.
According to Theorem \ref{thm:hardAssignmentTwoHypoteses} and under the assumptions of Proposition \ref{thm:LequalsTwoMultiIteration}, after a single iteration, as $M \to \infty$, we obtain,
\begin{align}
    \hat{x}_0^{(1)} \rightarrow \sqrt{\frac{1}{\pi\p{1-\rho} \norm{\hat{x}^{(0)}}_2^2}} \p{\hat{x}_0^{(0)} - \hat{x}_1^{(0)}}, \label{eqn:P1}
\end{align}
and,
\begin{align}
    \hat{x}_1^{(1)} \rightarrow \sqrt{\frac{1}{\pi\p{1-\rho} \norm{\hat{x}^{(0)}}_2^2}} \p{\hat{x}_1^{(0)} - \hat{x}_0^{(0)}}. \label{eqn:P2}
\end{align}
In particular, we note that
\begin{align}
    \norm{\hat{x}_0^{(1)}} = \norm{\hat{x}_1^{(1)}} = \sqrt{\frac{2}{\pi}}, \label{eqn:P3}
\end{align}
and
\begin{align}
    \rho^{(1)} = \langle \hat{x}_0^{(1)}, \hat{x}_0^{(1)} \rangle = -1. \label{eqn:P4}
\end{align}
Applying an additional iteration of the hard-assignment procedure on $\hat{x}_0^{(1)}, \hat{x}_1^{(1)}$, by using  
Theorem \ref{thm:hardAssignmentTwoHypoteses} it follows that,
\begin{align}
    \hat{x}_0^{(2)} \rightarrow \sqrt{\frac{1}{\pi\p{1-\rho^{(1)}} \norm{\hat{x}^{(1)}}_2^2}} \p{\hat{x}_0^{(1)} - \hat{x}_1^{(1)}}.\label{eqn:P6}
\end{align}
Substituting \eqref{eqn:P3} and \eqref{eqn:P4} into the right-hand-side of \eqref{eqn:P6} results in,
\begin{align}
    \hat{x}_0^{(2)} \rightarrow  \hat{x}_0^{(1)}, \quad \hat{x}_1^{(2)} \rightarrow  \hat{x}_1^{(1)}.
\end{align}
Therefore, for $t \geq 1$, $\hat{x}_\ell^{(t+1)} \rightarrow \hat{x}_\ell^{(t)}$ a.s., for $\ell = 0,1$, which in light of \eqref{eqn:P1} and \eqref{eqn:P2}, completes the proof.

\section{Proof of Proposition \ref{thm:infiniteLmultiIteration}}
Denote by $\hat{x}_\ell^{(t)}$ the $t$-th iteration of soft-assignment procedure in Algorithm \ref{alg:generalizedEfNSoftMultiIteration}, and denote the initialization by $\hat{x}_\ell^{(0)}$. According to Theorem \ref{thm:softAssignmentAsymptoticL}, and under the assumptions of the proposition, after a single iteration, as $M, d, L \to \infty$,
    \begin{align}
       \hat{x}_\ell^{(1)} - \hat{x}_\ell^{(0)} \rightarrow 0,
    \end{align}
in probability, for every $\ell \in \mathbb{N}$. The same holds for the consecutive iteration, i.e., for $t \geq 0$, $\hat{x}_\ell^{(t+1)} \rightarrow \hat{x}_\ell^{(t)}$ in probability, for $\ell \in \mathbb{N}$. Therefore, by induction, we have,
\begin{align}
       \hat{x}_\ell^{(T)} - \hat{x}_\ell^{(0)} \rightarrow 0,
\end{align}
for every $T<\infty$, which completes the proof.

\end{appendices}


\begin{thebibliography}{10}

\bibitem{arthur2006k}
David Arthur and Sergei Vassilvitskii.
\newblock k-means++: The advantages of careful seeding.
\newblock Technical report, Stanford, 2006.

\bibitem{balanov2024einstein}
Amnon Balanov, Wasim Huleihel, and Tamir Bendory.
\newblock Einstein from noise: Statistical analysis.
\newblock {\em arXiv preprint arXiv:2407.05277}, 2024.

\bibitem{bandeira2023estimation}
Afonso~S Bandeira, Ben Blum-Smith, Joe Kileel, Jonathan Niles-Weed, Amelia Perry, and Alexander~S Wein.
\newblock Estimation under group actions: recovering orbits from invariants.
\newblock {\em Applied and Computational Harmonic Analysis}, 66:236--319, 2023.

\bibitem{barnett2017rapid}
Alex Barnett, Leslie Greengard, Andras Pataki, and Marina Spivak.
\newblock Rapid solution of the cryo-em reconstruction problem by frequency marching.
\newblock {\em SIAM Journal on Imaging Sciences}, 10(3):1170--1195, 2017.

\bibitem{bendory2020single}
Tamir Bendory, Alberto Bartesaghi, and Amit Singer.
\newblock Single-particle cryo-electron microscopy: Mathematical theory, computational challenges, and opportunities.
\newblock {\em IEEE signal processing magazine}, 37(2):58--76, 2020.

\bibitem{bendory2017bispectrum}
Tamir Bendory, Nicolas Boumal, Chao Ma, Zhizhen Zhao, and Amit Singer.
\newblock Bispectrum inversion with application to multireference alignment.
\newblock {\em IEEE Transactions on signal processing}, 66(4):1037--1050, 2017.

\bibitem{bendory2024sample}
Tamir Bendory and Dan Edidin.
\newblock The sample complexity of sparse multireference alignment and single-particle cryo-electron microscopy.
\newblock {\em SIAM Journal on Mathematics of Data Science}, 6(2):254--282, 2024.

\bibitem{boucheron2013concentration}
St{\'e}phane Boucheron, G{\'a}bor Lugosi, and Pascal Massart.
\newblock {\em Concentration Inequalities: A Nonasymptotic Theory of Independence}.
\newblock Oxford University Press, Oxford, 2013.

\bibitem{bubeck2012initialization}
Sebastien Bubeck, Marina Meil{\u{a}}, and Ulrike von Luxburg.
\newblock How the initialization affects the stability of the k-means algorithm.
\newblock {\em ESAIM: Probability and Statistics}, 16:436--452, 2012.

\bibitem{calafiore2014optimization}
Giuseppe~C Calafiore and Laurent El~Ghaoui.
\newblock {\em Optimization models}.
\newblock Cambridge university press, 2014.

\bibitem{celebi2011improving}
M~Emre Celebi.
\newblock Improving the performance of k-means for color quantization.
\newblock {\em Image and Vision Computing}, 29(4):260--271, 2011.

\bibitem{celebi2013comparative}
M~Emre Celebi, Hassan~A Kingravi, and Patricio~A Vela.
\newblock A comparative study of efficient initialization methods for the k-means clustering algorithm.
\newblock {\em Expert systems with applications}, 40(1):200--210, 2013.

\bibitem{chen2021deep}
Muyuan Chen and Steven~J Ludtke.
\newblock Deep learning-based mixed-dimensional {Gaussian} mixture model for characterizing variability in cryo-{EM}.
\newblock {\em Nature methods}, 18(8):930--936, 2021.

\bibitem{crooks2009logistic}
Gavin~E Crooks.
\newblock Logistic approximation to the logistic-normal integral.
\newblock {\em Lawrence Berkeley National Laboratory, Berkeley, CA}, 2009.

\bibitem{pmlr-v65-daskalakis17b}
Constantinos Daskalakis, Christos Tzamos, and Manolis Zampetakis.
\newblock Ten steps of {EM} suffice for mixtures of two {Gaussians}.
\newblock In {\em Conference on Learning Theory}, pages 704--710. PMLR, 2017.

\bibitem{daskalakis2017ten}
Constantinos Daskalakis, Christos Tzamos, and Manolis Zampetakis.
\newblock Ten steps of {EM} suffice for mixtures of two gaussians.
\newblock In {\em Conference on Learning Theory}, pages 704--710. PMLR, 2017.

\bibitem{dempster1977maximum}
Arthur~P Dempster, Nan~M Laird, and Donald~B Rubin.
\newblock Maximum likelihood from incomplete data via the {EM} algorithm.
\newblock {\em Journal of the royal statistical society: series B (methodological)}, 39(1):1--22, 1977.

\bibitem{durrett2019probability}
Rick Durrett.
\newblock {\em Probability: theory and examples}, volume~49.
\newblock Cambridge university press, 2019.

\bibitem{dwivedi2020singularity}
Raaz Dwivedi, Nhat Ho, Koulik Khamaru, Martin~J Wainwright, Michael~I Jordan, and Bin Yu.
\newblock Singularity, misspecification and the convergence rate of em.
\newblock {\em The Annals of Statistics}, 48(6):3161--3182, 2020.

\bibitem{Sylvia06}
Sylvia Fr{\"u}hwirth-Schnatter.
\newblock {\em Finite mixture and {Markov} switching models}.
\newblock Springer, 2006.

\bibitem{glodek2013ensemble}
Michael Glodek, Martin Schels, and Friedhelm Schwenker.
\newblock Ensemble {Gaussian} mixture models for probability density estimation.
\newblock {\em Computational statistics}, 28:127--138, 2013.

\bibitem{gupta1998gaussian}
Lalit Gupta and Thotsapon Sortrakul.
\newblock A {Gaussian}-mixture-based image segmentation algorithm.
\newblock {\em Pattern recognition}, 31(3):315--325, 1998.

\bibitem{gupta2017local}
Shalmoli Gupta, Ravi Kumar, Kefu Lu, Benjamin Moseley, and Sergei Vassilvitskii.
\newblock Local search methods for k-means with outliers.
\newblock {\em Proceedings of the VLDB Endowment}, 10(7):757--768, 2017.

\bibitem{HeinrichKahn}
Philippe Heinrich and Jonas Kahn.
\newblock {Strong identifiability and optimal minimax rates for finite mixture estimation}.
\newblock {\em The Annals of Statistics}, 46(6A):2844 -- 2870, 2018.

\bibitem{henderson2013avoiding}
Richard Henderson.
\newblock Avoiding the pitfalls of single particle cryo-electron microscopy: {E}instein from noise.
\newblock {\em Proceedings of the National Academy of Sciences}, 110(45):18037--18041, 2013.

\bibitem{JohnsonLindenstrauss}
William Johnson and Joram Lindenstrauss.
\newblock Extensions of {L}ipschitz maps into a {Hilbert} space.
\newblock {\em Contemporary Mathematics}, 26:189--206, 01 1984.

\bibitem{Jordan01}
Michael Jordan and Lei Xu.
\newblock On convergence properties of the {EM} algorithm for {Gaussian} mixtures.
\newblock {\em Neural Computation}, 8, 08 2001.

\bibitem{kanungo2002local}
Tapas Kanungo, David~M Mount, Nathan~S Netanyahu, Christine~D Piatko, Ruth Silverman, and Angela~Y Wu.
\newblock A local search approximation algorithm for k-means clustering.
\newblock In {\em Proceedings of the eighteenth annual symposium on Computational geometry}, pages 10--18, 2002.

\bibitem{kassin2013forensic}
Saul~M Kassin, Itiel~E Dror, and Jeff Kukucka.
\newblock The forensic confirmation bias: Problems, perspectives, and proposed solutions.
\newblock {\em Journal of applied research in memory and cognition}, 2(1):42--52, 2013.

\bibitem{kearns1998information}
Michael Kearns, Yishay Mansour, and Andrew~Y Ng.
\newblock An information-theoretic analysis of hard and soft assignment methods for clustering.
\newblock {\em Learning in graphical models}, pages 495--520, 1998.

\bibitem{khadilkar2020bias}
Satish~V Khadilkar and Suvarna~S Khadilkar.
\newblock Bias in clinical practice.
\newblock {\em The Journal of Obstetrics and Gynecology of India}, 70(1):1--5, 2020.

\bibitem{klayman1995varieties}
Joshua Klayman.
\newblock Varieties of confirmation bias.
\newblock {\em Psychology of learning and motivation}, 32:385--418, 1995.

\bibitem{kuncheva2006evaluation}
Ludmila~I Kuncheva and Dmitry~P Vetrov.
\newblock Evaluation of stability of k-means cluster ensembles with respect to random initialization.
\newblock {\em IEEE transactions on pattern analysis and machine intelligence}, 28(11):1798--1808, 2006.

\bibitem{lattanzi2019better}
Silvio Lattanzi and Christian Sohler.
\newblock A better k-means++ algorithm via local search.
\newblock In {\em International Conference on Machine Learning}, pages 3662--3671. PMLR, 2019.

\bibitem{leadbetter2012extremes}
Malcolm~R Leadbetter, Georg Lindgren, and Holger Rootz{\'e}n.
\newblock {\em Extremes and related properties of random sequences and processes}.
\newblock Springer Science \& Business Media, 2012.

\bibitem{levin20183d}
Eitan Levin, Tamir Bendory, Nicolas Boumal, Joe Kileel, and Amit Singer.
\newblock 3d ab initio modeling in cryo-em by autocorrelation analysis.
\newblock In {\em 2018 IEEE 15th International Symposium on Biomedical Imaging (ISBI 2018)}, pages 1569--1573. IEEE, 2018.

\bibitem{li2016anomaly}
Lishuai Li, R~John Hansman, Rafael Palacios, and Roy Welsch.
\newblock Anomaly detection via a {Gaussian} mixture model for flight operation and safety monitoring.
\newblock {\em Transportation Research Part C: Emerging Technologies}, 64:45--57, 2016.

\bibitem{Lindsay}
Bruce~G. Lindsay.
\newblock {Moment Matrices: Applications in Mixtures}.
\newblock {\em The Annals of Statistics}, 17(2):722 -- 740, 1989.

\bibitem{Lloyd82}
S.~Lloyd.
\newblock Least square quantization in {PCM}.
\newblock {\em IEEE Transactions on Information Theory}, 28, 01 1982.

\bibitem{mann1943stochastic}
Henry~B Mann and Abraham Wald.
\newblock On stochastic limit and order relationships.
\newblock {\em The Annals of Mathematical Statistics}, 14(3):217--226, 1943.

\bibitem{mao2013reply}
Youdong Mao, Luis~R Castillo-Menendez, and Joseph~G Sodroski.
\newblock Reply to subramaniam, van heel, and henderson: Validity of the cryo-electron microscopy structures of the {HIV}-1 envelope glycoprotein complex.
\newblock {\em Proceedings of the National Academy of Sciences}, 110(45):E4178--E4182, 2013.

\bibitem{mao2013molecular}
Youdong Mao, Liping Wang, Christopher Gu, Alon Herschhorn, Anik D{\'e}sormeaux, Andr{\'e}s Finzi, Shi-Hua Xiang, and Joseph~G Sodroski.
\newblock Molecular architecture of the uncleaved {HIV}-1 envelope glycoprotein trimer.
\newblock {\em Proceedings of the National Academy of Sciences}, 110(30):12438--12443, 2013.

\bibitem{maugis2009variable}
Cathy Maugis, Gilles Celeux, and Marie-Laure Martin-Magniette.
\newblock Variable selection for clustering with {Gaussian} mixture models.
\newblock {\em Biometrics}, 65(3):701--709, 2009.

\bibitem{mynatt1977confirmation}
Clifford~R Mynatt, Michael~E Doherty, and Ryan~D Tweney.
\newblock Confirmation bias in a simulated research environment: An experimental study of scientific inference.
\newblock {\em Quarterly Journal of Experimental Psychology}, 29(1):85--95, 1977.

\bibitem{ng1969table}
Edward~W Ng and Murray Geller.
\newblock A table of integrals of the error functions.
\newblock {\em Journal of Research of the National Bureau of Standards B}, 73(1):1--20, 1969.

\bibitem{nguyen2012fast}
Thanh~Minh Nguyen and QM~Jonathan Wu.
\newblock Fast and robust spatially constrained {Gaussian} mixture model for image segmentation.
\newblock {\em IEEE transactions on circuits and systems for video technology}, 23(4):621--635, 2012.

\bibitem{perry2019sample}
Amelia Perry, Jonathan Weed, Afonso~S Bandeira, Philippe Rigollet, and Amit Singer.
\newblock The sample complexity of multireference alignment.
\newblock {\em SIAM Journal on Mathematics of Data Science}, 1(3):497--517, 2019.

\bibitem{rassin2020context}
Eric Rassin.
\newblock Context effect and confirmation bias in criminal fact finding.
\newblock {\em Legal and Criminological Psychology}, 25(2):80--89, 2020.

\bibitem{rassin2010let}
Eric Rassin, Anita Eerland, and Ilse Kuijpers.
\newblock Let's find the evidence: An analogue study of confirmation bias in criminal investigations.
\newblock {\em Journal of Investigative Psychology and Offender Profiling}, 7(3):231--246, 2010.

\bibitem{Redner82}
Richard Redner and Homer Walker.
\newblock Mixture densities, maximum likelihood and the {EM} algorithm.
\newblock {\em SIAM Review}, 26, 02 1982.

\bibitem{ross2011fundamentals}
Nathan Ross.
\newblock Fundamentals of {Stein}’s method.
\newblock 2011.

\bibitem{shatsky2009method}
Maxim Shatsky, Richard~J Hall, Steven~E Brenner, and Robert~M Glaeser.
\newblock A method for the alignment of heterogeneous macromolecules from electron microscopy.
\newblock {\em Journal of structural biology}, 166(1):67--78, 2009.

\bibitem{sigworth1998maximum}
Fred~J Sigworth.
\newblock A maximum-likelihood approach to single-particle image refinement.
\newblock {\em Journal of structural biology}, 122(3):328--339, 1998.

\bibitem{singer2011three}
Amit Singer and Yoel Shkolnisky.
\newblock Three-dimensional structure determination from common lines in cryo-em by eigenvectors and semidefinite programming.
\newblock {\em SIAM journal on imaging sciences}, 4(2):543--572, 2011.

\bibitem{singer2020computational}
Amit Singer and Fred~J Sigworth.
\newblock Computational methods for single-particle electron cryomicroscopy.
\newblock {\em Annual review of biomedical data science}, 3:163--190, 2020.

\bibitem{sorzano2010clustering}
Carlos Oscar~S Sorzano, JR~Bilbao-Castro, Y~Shkolnisky, M~Alcorlo, R~Melero, G~Caffarena-Fern{\'a}ndez, M~Li, G~Xu, R~Marabini, and JM~Carazo.
\newblock A clustering approach to multireference alignment of single-particle projections in electron microscopy.
\newblock {\em Journal of structural biology}, 171(2):197--206, 2010.

\bibitem{stuart2010kendall}
Alan Stuart and Keith Ord.
\newblock {\em Kendall's advanced theory of statistics, distribution theory}, volume~1.
\newblock John Wiley \& Sons, 2010.

\bibitem{subramaniam2013structure}
Sriram Subramaniam.
\newblock Structure of trimeric {HIV}-1 envelope glycoproteins.
\newblock {\em Proceedings of the National Academy of Sciences}, 110(45):E4172--E4174, 2013.

\bibitem{van2013finding}
Marin van Heel.
\newblock Finding trimeric {HIV}-1 envelope glycoproteins in random noise.
\newblock {\em Proceedings of the National Academy of Sciences}, 110(45):E4175--E4177, 2013.

\bibitem{vershynin2018high}
Roman Vershynin.
\newblock {\em High-dimensional probability: An introduction with applications in data science}, volume~47.
\newblock Cambridge university press, 2018.

\bibitem{villani2009regression}
Mattias Villani, Robert Kohn, and Paolo Giordani.
\newblock Regression density estimation using smooth adaptive {Gaussian} mixtures.
\newblock {\em Journal of Econometrics}, 153(2):155--173, 2009.

\bibitem{wu2021randomly}
Yihong Wu and Harrison~H Zhou.
\newblock Randomly initialized em algorithm for two-component gaussian mixture achieves near optimality in \( o(\sqrt{n}) \) iterations.
\newblock {\em Mathematical Statistics and Learning}, 4(3), 2021.

\bibitem{Hsu16}
Ji~Xu, Daniel~J Hsu, and Arian Maleki.
\newblock Global analysis of expectation maximization for mixtures of two gaussians.
\newblock {\em Advances in Neural Information Processing Systems}, 29, 2016.

\bibitem{xu2016global}
Ji~Xu, Daniel~J Hsu, and Arian Maleki.
\newblock Global analysis of expectation maximization for mixtures of two gaussians.
\newblock {\em Advances in Neural Information Processing Systems}, 29, 2016.

\bibitem{yang2012robust}
Miin-Shen Yang, Chien-Yo Lai, and Chih-Ying Lin.
\newblock A robust {EM} clustering algorithm for {Gaussian} mixture models.
\newblock {\em Pattern Recognition}, 45(11):3950--3961, 2012.

\end{thebibliography}
\end{document}